\PassOptionsToPackage{table}{xcolor}
\documentclass[11pt, a4paper]{utda} 
\usepackage[numbers,sort&compress]{natbib}

\usepackage{amsmath,amsfonts,bm}

\def\1{\bm{1}}

\def\vm{{\bm{m}}}

\def\vu{{\bm{u}}}
\def\vv{{\bm{v}}}
\def\vw{{\bm{w}}}

\def\mC{{\bm{C}}}

\def\mW{{\bm{W}}}

\DeclareMathAlphabet{\mathsfit}{\encodingdefault}{\sfdefault}{m}{sl}
\SetMathAlphabet{\mathsfit}{bold}{\encodingdefault}{\sfdefault}{bx}{n}

\newcommand{\E}{\mathbb{E}}
\newcommand{\Ls}{\mathcal{L}}
\newcommand{\R}{\mathbb{R}}

\newcommand{\norm}[1]{\Vert #1 \Vert}
\newcommand{\abs}[1]{\left\vert #1 \right\vert}

\DeclareMathOperator*{\argmax}{arg\,max}

\usepackage{booktabs}
\usepackage{url}
\usepackage{fontawesome5}          
\usepackage{xurl}                 
\usepackage{amsmath}
\usepackage{amsthm}
\usepackage{mathtools}
\usepackage{graphicx}
\usepackage{hyperref}
\usepackage{algorithm, algpseudocode}
\usepackage{wrapfig}
\usepackage{framed}
\usepackage{titletoc}             

\usepackage{multirow}             
\usepackage[table]{xcolor}        
\definecolor{lightgray}{gray}{0.9}
\definecolor{linkblue}{RGB}{0,0,250}
\definecolor{uclablue}{RGB}{39,116,174}
\definecolor{darkblue}{RGB}{0,0,200}
\hypersetup{colorlinks=true, linkcolor=linkblue, citecolor=uclablue, urlcolor=darkblue,
  pdftitle={Rectified Language Flow}, pdfauthor={Baiyu Su, Runlong Liao, Lizhang Chen, Qiang Liu}}

\makeatletter
\renewcommand{\paragraph}{%
  \@startsection{paragraph}{4}%
  {\z@}{0ex \@plus 0ex \@minus 0ex}{-1em}%
  {\normalfont\normalsize\bfseries}%
}
\makeatother

\usepackage{xspace}
\newcommand{\dflow}{ReLF\xspace}

\newtheorem{theorem}{Theorem}[section]

\newtheorem{lemma}[theorem]{Lemma}
\newtheorem{proposition}[theorem]{Proposition}   
\usepackage[createShortEnv]{proof-at-the-end}

\definecolor{findingBlue}{HTML}{0153D6}
\colorlet{findingFill}{findingBlue!8}
\newcounter{finding}
\makeatletter
\newcommand{\experimentfinding}[1]{%
  \par\vspace{2pt}\noindent
  \refstepcounter{finding}%
  \begingroup\setlength{\fboxsep}{0pt}%
  \colorbox{findingFill}{%
    \hbox to\linewidth{%
      {\color{findingBlue}\vrule width 2.5pt}%
      \hskip 8pt
      \parbox{\dimexpr\linewidth-2.5pt-16pt\relax}{%
        \rule{0pt}{\dimexpr\ht\strutbox+3pt\relax}
        {\color{findingBlue}\bfseries Observation \thefinding.}\hspace{1em}#1%
        \rule[-\dimexpr\dp\strutbox+3pt\relax]{0pt}{0pt}}
      \hskip 8pt}}%
  \endgroup
  \par\vspace{4pt}%
  \nobreak\@afterheading
}
\makeatother

\newcommand{\relflogo}{\raisebox{-0.26em}{\includegraphics[height=1.3em]{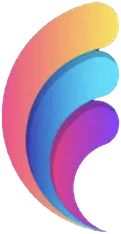}}}
\title{\texorpdfstring{\relflogo\hspace{0.25em}}{}Rectified Language Flow}
\renewcommand{\today}{} 

\author{
\textbf{Baiyu Su}\textsuperscript{*},
\textbf{Runlong Liao}\textsuperscript{*},
\textbf{Lizhang Chen}\textsuperscript{\dag},
\textbf{Qiang Liu}\\[0.5ex]
The University of Texas at Austin
}

\correspondingauthor{\textsuperscript{*}Equal contribution.\quad \textsuperscript{\dag}Correspondence:\ \href{mailto:lzchen@utexas.edu}{\faEnvelope[regular]\ lzchen@utexas.edu}\quad{\color{black!25}\textbar}\quad\href{https://rectifiedflow.github.io/ReLF/}{\faGlobe\ Website}}

\setlist[itemize]{
    leftmargin=*,
    itemsep=0.3em,
    parsep=0pt,
    topsep=0pt,
    partopsep=0pt
}

\begin{document}

 \begin{abstract}
Continuous-state flow and diffusion models offer a promising approach to fast, parallel language generation, but matching the generation quality of autoregressive models remains challenging. We present Rectified Language Flow (\dflow), a language model that generates text by transporting noise to token embeddings using rectified flow. \dflow combines several complementary design choices: fixed token embeddings constructed with Hellinger–SVD, self-conditioning guidance trained with cross-entropy, and regularized reflow for few-step generation. We further introduce singular logits, which combine an analytically computed term that drives commitment to a valid token embedding as noise vanishes with a bounded neural residual that learns contextual token preferences. With 64 function evaluations on OpenWebText, \dflow achieves lower generative perplexity than autoregressive baselines at matched diversity across all three model sizes tested, while also outperforming prior continuous flow methods in this setting. 
After two reflow rounds, \dflow still outperforms autoregressive baselines with 16 function evaluations while generating 1,024-token sequences 15--19$\times$ faster at batch size 1.
\end{abstract}

\maketitle

\begin{figure}[!ht]
    \centering
    \setlength{\abovecaptionskip}{2pt}
    \begin{minipage}[b]{0.49\textwidth}
        \centering
        \includegraphics[width=\linewidth]{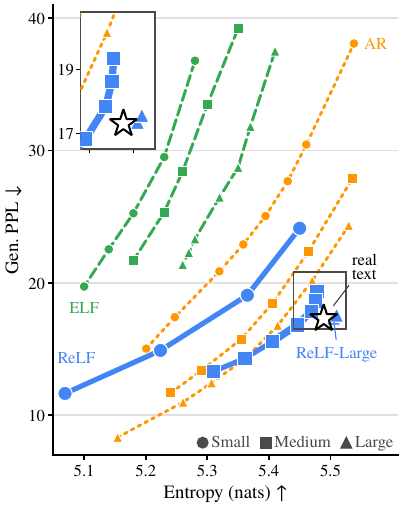}
        \par\small\textbf{(a) 64 NFE}
    \end{minipage}\hfill
    \begin{minipage}[b]{0.49\textwidth}
        \centering
        \includegraphics[width=\linewidth]{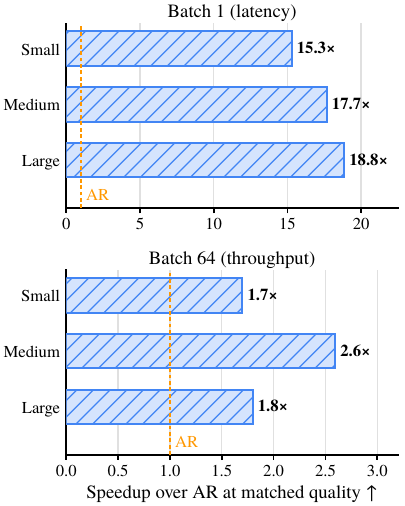}
        \par\small\textbf{(b) Wall-clock speedup}
    \end{minipage}
\caption{\small
\textbf{(a)} Quality–diversity Pareto frontiers for \dflow and ELF~\citep{elf2026} at 64 NFE, with AR baselines; curves are traced by varying guidance scale for flows and temperature for AR.
\dflow improves both metrics over same-size ELF models. \dflow-Large reaches Generative PPL $17.3$ at entropy $5.51$, close to real text ($17.3$, $5.49$).
\textbf{(b)} Wall-clock speedup over same-size AR models for 1,024-token generation on one GPU. With 16 NFE after two reflow rounds, \dflow achieves lower Generative PPL than these AR models at similar entropy.
}    
    \label{fig:system_comparison}
\end{figure}



\section{Introduction}
\label{sec:introduction}

Parallel text generation can reduce the sequential cost of autoregressive (AR) decoding. Flow and diffusion models take two approaches: \emph{discrete-state} models update token identities, whereas \emph{continuous-state} models evolve real-valued representations decoded into tokens. 
Discrete approaches have attracted substantial attention through large-scale diffusion language modeling~\citep{sahoo2024simple, nie2025large, song2025seed}. Building on early continuous text diffusion~\citep{li2022diffusion, gulrajani2023likelihood}, continuous-state approaches are also gaining momentum, with recent developments in embedding-space modeling and few-step generation~\citep{lee2026flow, elf2026}.

Continuous formulations are attractive for two reasons. First, they are flexible: all positions evolve jointly before committing to tokens, so a model transforming noise across the whole sequence can, in principle, capture inter-token dependencies in a single evaluation, the basis for one-step generation~\citep{lee2026flow}. Second, they inherit image, audio, and video generation tools, namely ODE/SDE samplers, guidance, and trajectory straightening, for controlling generation and accelerating sampling~\citep{liu2022flow, lipman2022flow}.

However, turning these advantages into effective language models raises three design challenges: learning continuous dynamics from categorical supervision; reliably recovering tokens at the flow's end; and designing token embeddings whose geometry supports corruption and denoising. Prior work explored auxiliary objectives~\citep{li2022diffusion, gulrajani2023likelihood}, encoder--decoder networks~\citep{lovelace2023latent}, and vocabulary-sized simplex representations~\citep{han2022ssd, lee2026flow}. We seek a simple continuous-flow formulation that addresses them while generating high-quality, diverse text with few function evaluations.


\paragraph{Results.} 
We present \emph{\textbf{Re}ctified \textbf{L}anguage \textbf{F}low} (\dflow), a rectified-flow language model that generates token sequences along continuous embedding trajectories from noise. \dflow predicts a token posterior at each position and is trained only with cross-entropy. The velocity field is computed from the posterior-weighted mean of the token embeddings.
Our OpenWebText experiments show:
\begin{itemize}

\item \emph{Generation quality at 64 function evaluations.}
    \dflow outperforms same-size AR baselines and Embedded Language Flows (ELF)~\citep{elf2026} across all evaluated model sizes. \dflow-Small achieves a generative perplexity of $24.1$ at an entropy of $5.45$ nats, compared with $27.7$ at $5.43$ for AR-Small. \dflow-Large reaches $17.3$ at $5.51$, matching the generative perplexity of real OpenWebText text ($17.3$ at $5.49$).

    \item \emph{Faster generation through reflow.}
    We further apply reflow~\citep{liu2022flow} to reduce the number of function evaluations. After two rounds, \dflow still outperforms same-size AR models with just 16 function evaluations. It then generates 1,024-token sequences 15--19$\times$ faster than these AR baselines at batch size 1 (Figure~\ref{fig:system_comparison}(b)).

    \item \emph{Generation quality at 8 function evaluations.}
    With only 8 function evaluations, \dflow-Medium achieves a generative perplexity of $42.7$, compared with $66.4$ for ELF+PD, a progressively distilled ELF model, both at entropy $5.42$.
\end{itemize}

\paragraph{Methods.}
\dflow learns continuous dynamics from categorical supervision by training token posteriors with cross-entropy (Section~\ref{sec:rectified_flow}). Four components then address reliable token recovery, embedding geometry, and high-quality generation with few function evaluations:

\begin{itemize}
    \item \textbf{Singular logits.}
    As noise vanishes, token posteriors approach point masses, requiring unbounded logit differences. Our \emph{singular logits} capture this divergence analytically through a reference posterior, so the network learns only a bounded residual of contextual token preferences. This guarantees that the modeled posterior converges to a one-hot distribution and the trajectory to a token embedding (Section~\ref{sec:singular_logits}).

    \item \textbf{Embedding geometry for language flows.}
    Token embeddings define the geometry of corruption and denoising. Our fixed, frequency-weighted Hellinger--SVD embeddings place tokens with similar contexts nearby. Their dimension is a free hyperparameter, which we find important for generation quality (Section~\ref{sec:embedding_training}).

    \item \textbf{Logit-space self-conditioning guidance.}
    Self-conditioning guidance extrapolates between predictions with and without the previous clean estimate, which prior work does in continuous state space~\citep{chen2022analog, elf2026}. We instead extrapolate in logit space, so the guided prediction remains a token distribution, learnable through cross-entropy alone with one function evaluation per step (Section~\ref{sec:self_conditioning}).

    \item \textbf{Regularized reflow.}
    \emph{Regularized reflow} retrains the model on its own noise--token couplings and interpolates its weights back toward the base model, straightening the flow for few-step generation while controlling drift (Section~\ref{sec:reflow}).
\end{itemize}

\section{Continuous Flow for Discrete Generation} \label{sec:rectified_flow}

\paragraph{Setup.}
Let $\mathcal V=\{1,\ldots,V\}$ be the vocabulary and $\vw=[w^{(1)},\ldots,w^{(L)}]\in\mathcal V^L$ a token sequence drawn from the data distribution $\mathcal D$. An embedding map $E:\mathcal V\to\R^d$ places each token in the \emph{flow embedding space} $\R^d$, where generation takes place; we call $d$ the \emph{flow embedding dimension}. Applied positionwise, $E$ maps $\vw$ to the clean state $X_1=E(\vw)\in\R^{d\times L}$, whose distribution we denote by $q_1$. Noisy states in $\R^{d\times L}$ are written $X_t$ along training interpolations and $Z_t$ along sampling trajectories.
To lighten notation, we suppress position indices: a token distribution such as $p_\theta(w\mid Z_t,t)$ refers to a single position but conditions on the entire noisy sequence, whereas positionwise operations on embeddings use only the state at that position. We write $q_t(w\mid Z_t)$ for the token posterior under the data distribution and $r_t(w\mid Z_t)$ for the posterior under the reference distribution introduced in Section~\ref{sec:singular_logits}. Appendix~\ref{app:theory} gives the fully indexed definitions and the theory.

\begin{figure}[t]
    \centering
    \setlength{\abovecaptionskip}{2pt}
    \setlength{\belowcaptionskip}{0pt}
    \includegraphics[width=\textwidth]{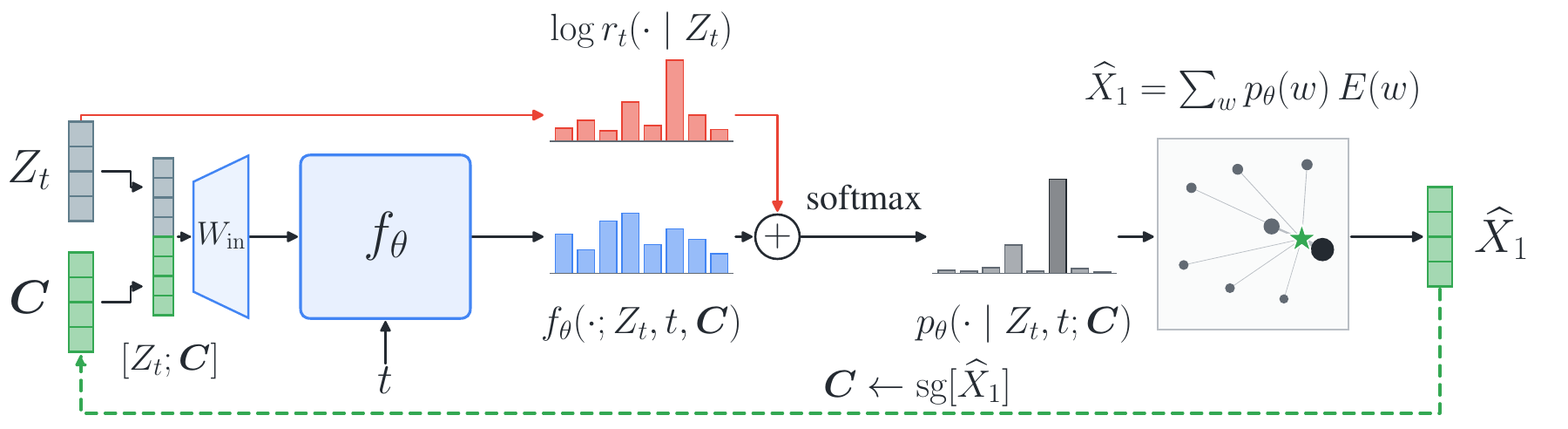}
    \caption{\textbf{\dflow architecture.} From the noisy state $Z_t$ and self-conditioning input $\mC$, the Transformer outputs residual logits $f_\theta$, added to singular logits $\log r_t$. $\widehat{X}_1$ is a probability-weighted mean of token embeddings, fed back via stop-gradient $\operatorname{sg}$ as the next $\mC$ (dashed).}
    \label{fig:architecture}
    \vspace{-8pt}
\end{figure}

\paragraph{Rectified Flow.}
Following the rectified flow (RF) framework of \citet{liu2022flow}, we interpolate between data embeddings $X_1\sim q_1$ and noise $X_0\sim q_0$ as $X_t=tX_1+(1-t)X_0$ for $t\in[0,1]$, and write $q_{t\mid1}(\cdot\mid\vw)$ for the corruption likelihood, the density of $X_t$ given the clean tokens $\vw$.
In standard RF, the model learns a velocity field $\vv_\theta(X,t)$ by minimizing the mean-squared error (MSE) loss:
\begin{equation*}
    \Ls(\theta)=\E_{X_1\sim q_1,\,X_0\sim q_0,\,t\sim\mathcal T}\!\left[\norm{X_1-X_0-\vv_\theta(X_t,t)}^2\right],
\end{equation*}
where $X_1$, $X_0$, and $t$ are independent and $\mathcal T$ is a training-time distribution on $[0,1]$.
The associated flow ODE, initialized at $Z_0\sim q_0$, evolves as
\begin{equation}\label{eq:rf_ode}
    \frac{\mathrm dZ_t}{\mathrm dt}=\vv_\theta(Z_t,t),
    \qquad
    \vv_\theta(Z_t,t)\approx\vv^*(Z_t,t)=\E[X_1-X_0\mid X_t=Z_t],
\end{equation}
so that $\mathrm{Law}(Z_t)\approx\mathrm{Law}(X_t)$ for $t\in[0,1]$.

\paragraph{Cross-Entropy Training.}
MSE is the default RF objective, but cross-entropy (CE) is more natural for language. RF admits CE training because its velocity field can be expressed through token posteriors. The optimal velocity is determined by the conditional mean of the clean embedding, which is a posterior-weighted average of token embeddings (Appendix~\ref{app:velocity_marginals}):
\begin{equation}\label{eq:actipated_embedding}
    \vv^*(X_t,t)=\tfrac{1}{1-t}\bigl(\E[X_1\mid X_t]-X_t\bigr),
    \qquad
    \E[X_1\mid X_t]=\sum_{w\in\mathcal V}E(w)\,q_t(w\mid X_t).
\end{equation}
Learning $\vv^*$ therefore reduces to learning the positionwise posterior marginals $q_t(w\mid X_t)$, a classification problem that CE addresses directly~\citep{DBLP:journals/corr/abs-2211-15089, han2022ssd, eijkelboom2024variational}. A bidirectional diffusion Transformer predicts these marginals jointly from the full noisy sequence; we denote its predictions by $p_\theta(w\mid X_t,t)$ and train it with
\begin{equation}\label{eq:token_ce}
    \Ls(\theta)=\E_{\vw\sim\mathcal D,\,X_0\sim q_0,\,t\sim\mathcal T}\!\left[-\log p_\theta(w\mid X_t,t)\right].
\end{equation}
The expectation also averages over token positions, with $w$ denoting the clean token at each position. The model's clean-state estimate $\widehat{X}_1(X_t,t)=\sum_{w\in\mathcal V}E(w)p_\theta(w\mid X_t,t)$ defines the velocity $\vv_\theta=\tfrac{1}{1-t}(\widehat X_1-X_t)$. CE is minimized at $p_\theta=q_t$, where $\vv_\theta=\vv^*$, so the CE and MSE objectives share the same optimal velocity.

\paragraph{Sampling.}
\dflow samples like a standard continuous flow except at the final step. Starting from $Z_{t_0}\sim q_0$, we integrate the flow ODE with Euler steps on a grid $0=t_0<t_1<\cdots<t_{N-1}<1$:
\begin{equation}\label{eq:euler_decoding}
    Z_{t_{i+1}}=Z_{t_i}+(t_{i+1}-t_i)\tfrac{1}{1-t_i}\bigl(\widehat{X}_1(Z_{t_i},t_i)-Z_{t_i}\bigr).
\end{equation}
At the final step, we decode each position as $\widehat w=\argmax_{w\in\mathcal V}p_\theta(w\mid Z_{t_{N-1}},t_{N-1})$. We also support SDE sampling with the score-corrected formulation of \citet{ma2024sit}, whose noise scale $\sigma\geq0$ controls stochasticity; $\sigma=0$ recovers the flow ODE (Appendix~\ref{app:sde_discretization}).

\section{Commitment, Geometry, Guidance, and Straightening}
\label{sec:improved_techniques}

Section~\ref{sec:rectified_flow} fixes the training objective and sampler but leaves open how tokens commit as noise vanishes and in which embedding space the flow evolves. This section addresses both: Section~\ref{sec:singular_logits} introduces singular logits, which guarantee convergence to token embeddings, and Section~\ref{sec:embedding_training} constructs compact, context-informed flow embeddings. We further adapt two widely used flow techniques: Section~\ref{sec:self_conditioning} moves self-conditioning guidance to logit space, where it can be trained with CE, and Section~\ref{sec:reflow} stabilizes reflow for few-step generation.

\subsection{Singular Logits Enable Endpoint Convergence}
\label{sec:singular_logits}
\paragraph{Motivation.}
As noise vanishes, the posterior converges to a one-hot distribution. Under a fixed-temperature softmax, this requires diverging logit differences, which a network with bounded inputs and parameters cannot represent. The predicted posterior thus keeps probability on competing tokens even at the endpoint. Since the trajectory converges to the clean-state estimate $\widehat X_1$, it ends at a mixture of token embeddings rather than at a single token embedding (Figure~\ref{fig:side_by_side}(a,c)). Prior continuous models instead decode final states by rounding or a separate decoder~\citep{li2022diffusion,elf2026}. We observe that the exact posterior itself points to a solution: its logits decompose into a \emph{singular} component that diverges as noise vanishes and a \emph{bounded} component carrying contextual token preferences. The singular component has a closed form for a simple reference distribution $r_1$, leaving the network to learn a residual that is guaranteed to stay finite.

\begin{figure}[t]
     \centering
    \setlength{\abovecaptionskip}{0pt}
    \includegraphics[width=\textwidth]{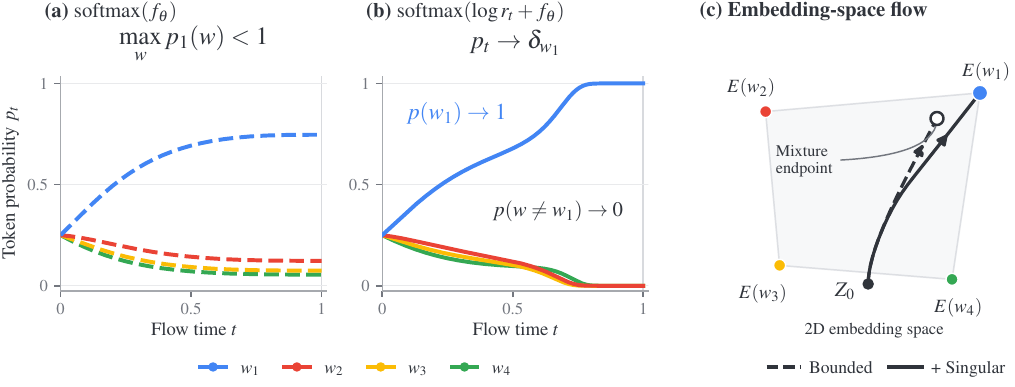}
    \vspace{-2pt}
    \caption{\textbf{Singular logits connect commitment to embedding convergence.} A four-token example in which both models share the bounded residual $f_\theta$ and the initial state $Z_0$, with $p_t=p_\theta(\cdot\mid Z_t,t)$ along each trajectory. \textbf{(a)} With bounded logits, $p_t$ retains mass on competing tokens. \textbf{(b)} Adding $\log r_t$ drives $p_t$ to a one-hot distribution. \textbf{(c)} Correspondingly, the flow ends at an embedding mixture (dashed) or at $E(w_1)$ (solid).}
    \label{fig:side_by_side}
    \vspace{-1.0em}
\end{figure}

\paragraph{Singular-Logit Correction.}
We compute the log reference posterior $\log r_t(w\mid Z_t)$, which we call the \emph{singular logits}, analytically, and let the network parameterize only a residual correction $f_\theta$ toward the data posterior:
\begin{equation}
    p_\theta(\cdot\mid Z_t,t)\coloneqq\operatorname{softmax}\!\left(\log r_t(\cdot\mid Z_t)+f_\theta(\cdot;Z_t,t)\right).
    \label{eq:singular_logit_correction}
\end{equation}
The exact residual is bounded whenever the reference stays within a constant factor of the data distribution.

\begin{theorem}
Suppose $\kappa^{-1}\leq q_1(\vw)/r_1(\vw)\leq\kappa$ for every sequence $\vw$ and some $1\leq\kappa<\infty$. Then the exact residual $f_*(w;Z_t,t)=\log q_t(w\mid Z_t)-\log r_t(w\mid Z_t)$, which $f_\theta$ approximates, satisfies $\left|f_*(w;Z_t,t)\right|\leq 2\log\kappa$ for all $t<1$.
\end{theorem}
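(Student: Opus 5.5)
Write both posteriors via Bayes' rule over whole sequences. Fix a position and let $w$ be the token there. Under the data distribution, the posterior marginal is
\[
q_t(w\mid Z_t)=\frac{\sum_{\vw:\,w^{(\ell)}=w} q_1(\vw)\,q_{t\mid1}(Z_t\mid\vw)}{\sum_{\vw'} q_1(\vw')\,q_{t\mid1}(Z_t\mid\vw')}.
\]
Here the sum is over full sequences whose token at the fixed position $\ell$ is $w$. The reference posterior $r_t(w\mid Z_t)$ has the same form with $q_1$ replaced by $r_1$. The key structural point is that the corruption likelihood $q_{t\mid1}(Z_t\mid\vw)$ is the same kernel in both cases, because $X_t=tX_1+(1-t)X_0$ is formed from the same noise $q_0$ and the same embedding $E$. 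Only the prior over sequences differs. For $t<1$ this kernel is a strictly positive density (Gaussian $q_0$), so all sums are positive and the logarithms are finite.

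Next, bound the numerator and denominator separately using the hypothesis $\kappa^{-1}r_1(\vw)\le q_1(\vw)\le\kappa\,r_1(\vw)$, applied termwise. The numerator of $q_t$ lies between $\kappa^{-1}$ and $\kappa$ times the numerator of $r_t$. The same holds for the denominators. Taking the ratio gives
\[
\kappa^{-2}\,r_t(w\mid Z_t)\le q_t(w\mid Z_t)\le\kappa^{2}\,r_t(w\mid Z_t).
\]
Taking logarithms yields $|f_*(w;Z_t,t)|\le 2\log\kappa$.

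The argument is essentially a two-line Bayes computation. The main thing to be careful about is that it is stated at the level of full sequences, since $Z_t$ conditions on the whole noisy sequence. The marginalization over the other positions must be done inside the numerator sum before comparing, not after. Termwise domination survives summation because all terms are nonnegative.

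The only other subtlety is positivity for $t<1$. It ensures both posteriors are strictly positive, so $f_*$ is well defined. It also explains why the bound is stated only for $t<1$: at $t=1$ the kernel degenerates to a point mass and the logits become singular. I would verify positivity from the explicit corruption density in Appendix~\ref{app:theory}, which is Gaussian with variance $(1-t)^2$ centered at $tE(\vw)$.
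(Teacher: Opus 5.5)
Your proof is correct and follows the paper's own argument. The paper likewise writes both positionwise posteriors as ratios of sums over full sequences with the same corruption kernel $q_{t\mid1}$, bounds the numerator ratio and the denominator ratio termwise in $[\kappa^{-1},\kappa]$, and multiplies them to get $\kappa^{-2}\le q_t/r_t\le\kappa^{2}$. Your positivity remark for $t<1$ is a harmless addition: the paper assumes $r_1(\vw)>0$ and leaves the positivity of the kernel implicit.
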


With a full-support reference such as the uniform distribution, the ratio condition is equivalent to $q_1(\vw)>0$ for every sequence, since $L$ and $\mathcal V$ are finite. This is the same condition AR models need for exact representation with finite softmax logits. The resulting residual bound is uniform in $t$, so the network need not represent the divergence caused by vanishing noise. Singular logits require no learning and are cheap to compute: with a factorized reference, the reference posterior reduces to a vocabulary sum at each position.

\begin{lemma}
Suppose the reference distribution factorizes as $r_1(\vw)=\prod_{\ell=1}^L r_1^{(\ell)}(w^{(\ell)})$, where $r_1^{(\ell)}$ is the reference token distribution at position $\ell$, and the noise is independent across positions, so that $q_{t\mid1}$ factorizes as well. Then the singular logits can also be computed positionwise as
\[
\log r_t(w\mid Z_t)
=\log\frac{r_1(w)q_{t\mid1}(Z_t\mid w)}
{\sum_{u\in\mathcal V}r_1(u)q_{t\mid1}(Z_t\mid u)}.
\]
Here $r_1(w)$ and $q_{t\mid1}(Z_t\mid w)$ are the factors at the corresponding position.
\end{lemma}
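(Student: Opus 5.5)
The plan is to write the positionwise reference posterior as a marginal of the sequence-level posterior under $r$, and then show that the sum over the other positions factors out. Fix a position $\ell$. By definition, the reference posterior at $\ell$ given the whole noisy sequence $Z_t=[z^{(1)},\ldots,z^{(L)}]$ is obtained from Bayes' rule on sequences:
\[
r_t(w^{(\ell)}=w\mid Z_t)=\frac{\sum_{\vw:\,w^{(\ell)}=w} r_1(\vw)\,q_{t\mid1}(Z_t\mid\vw)}{\sum_{\vw} r_1(\vw)\,q_{t\mid1}(Z_t\mid\vw)} .
\]
This uses only that $X_t$ given the clean tokens has density $q_{t\mid1}(\cdot\mid\vw)$ and that $X_1=E(\vw)$ is a deterministic function of $\vw$, so the reference joint law of $(\vw,X_t)$ is $r_1(\vw)q_{t\mid1}(X_t\mid\vw)$.

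Next, substitute the two factorization assumptions: $r_1(\vw)=\prod_k r_1^{(k)}(w^{(k)})$ and $q_{t\mid1}(Z_t\mid\vw)=\prod_k q^{(k)}_{t\mid1}(z^{(k)}\mid w^{(k)})$. The second holds because $X_t^{(k)}=tE(w^{(k)})+(1-t)X_0^{(k)}$ and the noise columns $X_0^{(k)}$ are independent. The summand therefore becomes $\prod_k g_k(w^{(k)})$ with $g_k(u)=r_1^{(k)}(u)\,q^{(k)}_{t\mid1}(z^{(k)}\mid u)$. Since the sum over sequences of a product of positionwise terms is a product of positionwise sums, the numerator equals $g_\ell(w)\prod_{k\neq\ell}\sum_{u}g_k(u)$ and the denominator equals $\prod_{k}\sum_u g_k(u)$. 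The factors for $k\neq\ell$ cancel, leaving $g_\ell(w)/\sum_u g_\ell(u)$. Taking logs gives the claimed formula. As a byproduct, the reference posterior at $\ell$ depends only on $z^{(\ell)}$.

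The only point needing care is well-definedness of the cancellation: each $\sum_u g_k(u)$ must be strictly positive. For $t<1$ with Gaussian (or any full-support) noise, $q_{t\mid1}(z\mid u)>0$ everywhere, and $r_1^{(k)}$ is a probability distribution, so at least one term is positive. I would state this assumption explicitly, noting it holds for $t<1$ under the paper's Gaussian $q_0$. With that, the argument is a direct computation and I expect no real obstacle beyond keeping the position indexing consistent with Appendix~\ref{app:theory}.
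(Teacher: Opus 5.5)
Your proposal is correct and follows the paper's proof essentially line for line: you write the positionwise reference posterior as a ratio of sums over sequences, use the two factorizations to turn each sum of products into a product of positionwise sums, and cancel the factors at positions $m\neq\ell$. Your extra remark that each positionwise normalizer $\sum_u r_1^{(k)}(u)q_{t\mid1}^{(k)}(z^{(k)}\mid u)$ is strictly positive for $t<1$ under full-support noise is a sound detail that the paper leaves implicit.
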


For standard Gaussian noise, $q_{t\mid1}(Z_t\mid w)\propto\exp\!\bigl(-\norm{Z_t-tE(w)}^2/(2(1-t)^2)\bigr)$ with a token-independent constant, so
\begin{equation}
    \log r_t(\cdot\mid Z_t)=\log\operatorname{softmax}\!\left(\log r_1(\cdot)-\frac{\norm{Z_t-tE(\cdot)}^2}{2(1-t)^2}\right),
    \label{eq:singular_logits_gaussian}
\end{equation}
whose distance term diverges as $t\to1$.
Factorization applies only to the reference; the network still conditions on the full sequence. Appendix~\ref{app:singular_logits} gives the proofs.

With singular logits, the modeled posterior converges to a one-hot distribution whenever the reference posterior does, for any network with bounded outputs; this is token commitment in \emph{probability space}. The following proposition shows that this implies commitment in \emph{flow embedding space}: the ODE trajectory converges to the selected token embedding.

\begin{proposition}
Consider the ODE in Eq.~\eqref{eq:rf_ode} with the velocity of Eq.~\eqref{eq:actipated_embedding}, computed from the modeled probabilities $p_\theta$ in place of $q_t$. If each position's $p_\theta(\cdot\mid Z_t,t)$ converges to a one-hot vector selecting a token $w$ as $t\uparrow1$, then its corresponding state satisfies $Z_t\to E(w)$.
\end{proposition}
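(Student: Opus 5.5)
The plan is to exploit the linear structure of the ODE. Fix one position. With $\widehat X_1(t)=\sum_{u}E(u)\,p_\theta(u\mid Z_t,t)$, the dynamics read
\[
\frac{\mathrm dZ_t}{\mathrm dt}=\frac{1}{1-t}\bigl(\widehat X_1(t)-Z_t\bigr),
\]
which is a linear first-order ODE in $Z_t$ once $\widehat X_1(t)$ is viewed as a given forcing term. The hypothesis says $p_\theta(\cdot\mid Z_t,t)\to\delta_w$ along the trajectory. Since $\mathcal V$ is finite, $\norm{\widehat X_1(t)-E(w)}\le \max_u\norm{E(u)-E(w)}\,(1-p_\theta(w\mid Z_t,t))\to0$. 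So it suffices to show that this ODE carries the forcing limit into the state: $\widehat X_1(t)\to E(w)$ implies $Z_t\to E(w)$.

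First I would solve explicitly using the integrating factor $1/(1-t)$. Setting $Y_t=Z_t/(1-t)$ gives $\dot Y_t=\widehat X_1(t)/(1-t)^2$, and the identity $\frac{\mathrm d}{\mathrm dt}\frac{1}{1-t}=\frac{1}{(1-t)^2}$ lets us write, for $t\ge s$,
\[
Z_t-E(w)=\frac{1-t}{1-s}\bigl(Z_s-E(w)\bigr)+(1-t)\int_s^t\frac{\widehat X_1(\tau)-E(w)}{(1-\tau)^2}\,\mathrm d\tau .
\]
One checks this by differentiating, or by noting that $Z\equiv E(w)$ is the solution with constant forcing $E(w)$.

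Next comes the $\varepsilon$ argument. Given $\varepsilon>0$, choose $s<1$ such that $\norm{\widehat X_1(\tau)-E(w)}\le\varepsilon$ for all $\tau\in[s,1)$. The integral term is then bounded by
\[
(1-t)\,\varepsilon\Bigl(\tfrac{1}{1-t}-\tfrac{1}{1-s}\Bigr)\le\varepsilon .
\]
The first term tends to $0$ as $t\uparrow1$ because $Z_s$ is a fixed finite vector. Hence $\limsup_{t\uparrow1}\norm{Z_t-E(w)}\le\varepsilon$, and since $\varepsilon$ is arbitrary, $Z_t\to E(w)$.

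The main technical care lies in well-posedness. We need the trajectory to exist on $[0,1)$ and $\widehat X_1$ to be measurable and locally integrable in $\tau$ so that the variation-of-constants formula is valid. I would handle this by assuming, as the statement implicitly does, that a solution exists on $[0,1)$. On any compact $[0,s]$ the velocity is continuous and bounded, since $\widehat X_1$ lies in the convex hull of the finitely many embeddings and $1/(1-t)$ is bounded there. The singularity at $t=1$ is then absorbed entirely by the explicit formula. Position-wise coupling through the Transformer causes no difficulty: we fix the realized trajectory and treat each position's $\widehat X_1(t)$ as a given function of $t$.
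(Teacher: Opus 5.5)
Your proof is correct and follows the paper's route. The paper also uses the integrating factor $Z_t/(1-t)$, whose derivative is $\widehat X_{1,t}/(1-t)^2$, and then passes to the limit via L'H\^opital's rule on the integrated identity. Your explicit $\varepsilon$-bound on the variation-of-constants formula from $s$ to $t$ is simply a direct proof of that limiting step, and it needs only integrability of $\widehat X_1$ rather than continuity.
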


Thus, under exact continuous-time dynamics, every trajectory reaches a token embedding regardless of how well the network is trained (Appendix~\ref{app:endpoint_convergence}; Figure~\ref{fig:side_by_side}(b,c)). Singular logits ensure commitment, while the learned residual decides which tokens are chosen and hence the quality of the generated text. On OpenWebText, singular logits improve the quality--diversity trade-off (Section~\ref{sec:ablations}). They also add little computation: because the flow embedding dimension $d=32$ (Section~\ref{sec:embedding_training}) is much smaller than the Transformer hidden width, their share of the total FLOPs per inference step ranges from approximately $1.1\%$ for \dflow-Small to $0.2\%$ for \dflow-Large.

\subsection{Construction of Flow Embeddings} 
\label{sec:embedding_training}
The embedding matrix $E$ determines which tokens remain ambiguous under noise. Any embedding with distinct rows recovers the data distribution under exact posteriors and integration, but its geometry controls how difficult these posteriors are to approximate. Nearby embeddings should represent tokens with similar contexts, allowing noisy states to preserve information about neighboring positions before token commitment.

\paragraph{Hellinger--SVD.}
Following \citet{lebret2014hellinger}, we estimate each token $a$'s smoothed distribution $Q_a(c)$ over preceding and following contexts $c$. Its square-root representation $H_{ac}=\sqrt{Q_a(c)}$ makes Euclidean distance proportional to Hellinger distance:
\begin{equation}
    \norm{H_{a,:}-H_{b,:}}^2
    =\sum_c\left(\sqrt{Q_a(c)}-\sqrt{Q_b(c)}\right)^2
    =2d_{\mathrm{Hel}}^2(Q_a,Q_b).
    \label{eq:embedding_hellinger_geometry}
\end{equation}
For compact embeddings, let $\bar H$ be the frequency-weighted mean and $P_d$ contain the leading $d$ right singular vectors of the frequency-weighted, centered matrix. We project onto these directions, $\widetilde E(a)=(H_{a,:}-\bar H)P_d$, which retains the context geometry captured by the leading components (Appendix~\ref{app:hellinger_svd}).

After perturbing duplicate rows, we center and rescale the embeddings under the unigram distribution $\pi$ to satisfy $\sum_w\pi(w)E(w)=0$ and $\sum_w\pi(w)\norm{E(w)}^2=d$, fixing average signal power relative to unit Gaussian noise. To allocate more training to noise levels where token identification improves fastest, we sample training times uniformly along the joint arc length of two normalized token-identification curves, which measure posterior log-probability gains over the prior with and without local context. The embeddings and this schedule remain fixed during training (Appendix~\ref{app:embedding_details}).

\paragraph{Embedding Dimension.}
In prior continuous models, the flow dimension is tied to another component: to the vocabulary size in the one-hot and simplex states of FMLM~\citep{lee2026flow} and SSD-LM~\citep{han2022ssd}, to the pretrained T5 encoder in ELF~\citep{elf2026}, and to the model width in LangFlow~\citep{chen2026langflow}. In \dflow, the network projects each $d$-dimensional state to its hidden width and predicts $V$ token logits, so $d$ is decoupled from all three and becomes a free hyperparameter. We use $d=32$ for all model sizes, consistent with compact representations in image diffusion~\citep{zheng2026rae,li2026jit}; Section~\ref{sec:ablations} compares dimensions 16 to 256.

\subsection{Cross-Entropy Training of Self-Conditioning}
\label{sec:self_conditioning}
We use \emph{self-conditioning}~\citep{chen2022analog, chen2026langflow}: the network concatenates its noisy input with a clean-state estimate $\mC$ (Figure~\ref{fig:architecture}). At inference, $\mC_0=\mathbf 0$ and $\mC_{i+1}=\widehat{X}_{1,i}$, reusing the previous prediction without additional function evaluations.

\paragraph{Guidance in Logit Space.}
A self-conditioned model produces cold logits $l(\mathbf 0)$ without the self-conditioning input and warm logits $l(\mC)$ with it. Prior self-conditioning guidance~\citep{chen2022analog, elf2026} operates in the continuous data space; ELF, for example, extrapolates its MSE-trained velocity as $v^{\mathrm g}=v(\mathbf 0)+\omega\bigl(v(\mC)-v(\mathbf 0)\bigr)$. When the clean-state estimate is a posterior mean, $\widehat X_1=\sum_{w}E(w)\,p(w)$, the velocity is affine in the token probabilities, so flow-space extrapolation is equivalent to extrapolating the probabilities linearly, $\lambda^{\mathrm g}=p(\mathbf 0)+\omega\bigl(p(\mC)-p(\mathbf 0)\bigr)$, whose entries can turn negative. We instead extrapolate between the cold and warm predictions in \emph{logit space},
\begin{equation}\label{eq:sc_logit_guidance}
    l^{\mathrm g}=l(\mathbf 0)+\omega\bigl(l(\mC)-l(\mathbf 0)\bigr),
\end{equation}
with guidance scale $\omega$, and compute $\widehat X_1=\sum_{w\in\mathcal V}E(w)\operatorname{softmax}(l^{\mathrm g})_w$, which determines the velocity and is carried as the next $\mC$ (Figure~\ref{fig:sc_guidance}, right). Applied explicitly, Eq.~\eqref{eq:sc_logit_guidance} requires a cold and a warm evaluation at every step.

\begin{figure}[t]
    \centering
    \setlength{\abovecaptionskip}{0pt}
    \includegraphics[width=\textwidth]{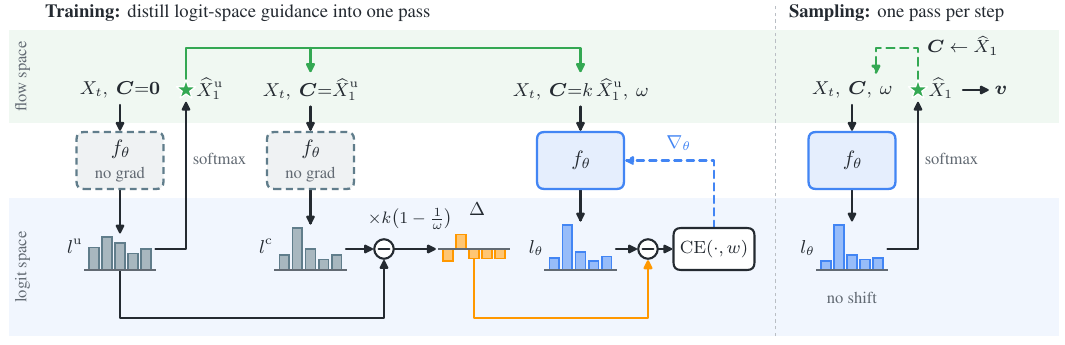}
    \vspace{-2pt}
    \caption{\textbf{Guidance on logits.} \textbf{Training (left):} two no-gradient passes give cold logits $l^{\mathrm u}$ ($\mC=\mathbf 0$) and warm logits $l^{\mathrm c}$ (conditioned on $\widehat X_1^{\mathrm u}$); their difference, scaled by $k(1-1/\omega)$, is the shift $\Delta$ subtracted from the network's logits before CE at the clean token $w$. \textbf{Sampling (right):} one evaluation at the chosen $\omega$ uses the unshifted logits, whose clean-state estimate $\widehat X_1$ gives the velocity and the next $\mC$. Shading separates flow space (top) from logit space (bottom).}
    \label{fig:sc_guidance}
    \vspace{-1.0em}
\end{figure}

\paragraph{Training with Cross-Entropy.}
We distill the guided logits into the network, which takes the guidance scale $\omega$ as an input, so that each sampling step requires only one network evaluation. Because the guidance acts on logits, the guided prediction remains a token distribution, so both denoising and guidance can be learned through the CE objective alone. Let $l_\theta(X_t,t;\mC,\omega)$ denote the model's token logits. Two no-gradient passes produce cold logits $l^{\mathrm u}$ with $\mC=\mathbf 0$ and $\widehat X_1^{\mathrm u}=\sum_{w\in\mathcal V}E(w)\operatorname{softmax}(l^{\mathrm u})_w$, followed by warm logits $l^{\mathrm c}$ conditioned on $\widehat X_1^{\mathrm u}$ (Figure~\ref{fig:sc_guidance}, left). For a per-sequence mask $k\sim\mathrm{Bernoulli}(1/2)$, we set $\mC=k\,\widehat X_1^{\mathrm u}$ and train with
\begin{equation}
    \Ls(\theta)=\E\!\left[-\log\operatorname{softmax}\bigl(l_\theta(X_t,t;\mC,\omega)-\Delta\bigr)_w\right],
    \qquad
    \Delta=k\bigl(1-\tfrac{1}{\omega}\bigr)\operatorname{sg}\bigl[l^{\mathrm c}-l^{\mathrm u}\bigr],
    \label{eq:sc_training_guidance}
\end{equation}
where $w$ is the clean target token and $\operatorname{sg}$ denotes stop-gradient. With $k=0$, the loss is ordinary CE; with $k=1$, subtracting the detached correction trains the unshifted output to incorporate guidance, and at a stationary point it reproduces Eq.~\eqref{eq:sc_logit_guidance}. At $\omega=1$, the correction vanishes, recovering ordinary self-conditioned training. During sampling, we use the unshifted logits at the chosen $\omega$, with one evaluation per step. Appendix~\ref{app:sc_training_guidance} derives the stationary point and gives the training details.

\subsection{Regularized Reflow}
\label{sec:reflow}
Reflow~\citep{liu2022flow} straightens the flow by retraining it on noise--token couplings generated by the model itself, so that sampling needs fewer, larger steps. Let $\mathcal S_\theta(Z_0)$ denote the tokens obtained by integrating the flow ODE of model $\theta$ from $Z_0\sim q_0$ and decoding the final tokens. Reflow fine-tunes a model, initialized at $\theta$, with the CE objective of Eq.~\eqref{eq:token_ce} on these couplings, replacing independent noise with the coupled noise:
\begin{equation}\label{eq:reflow}
    \mathrm{Reflow}(\theta)=\arg \min_{\theta'}\ \E_{Z_0\sim q_0,\,t\sim\mathcal T}\!\left[-\log p_{\theta'}\bigl(\widehat{\vw}\mid tE(\widehat{\vw})+(1-t)Z_0,\,t\bigr)\right],
    \qquad
    \widehat{\vw}=\mathcal S_\theta(Z_0).
\end{equation}
Standard reflow samples with the retrained model directly and draws the next round's couplings from it. We instead regularize every round toward the base model $\theta_0$, which we call \emph{regularized reflow}: with $\theta_k$ the model after $k$ rounds, each round applies reflow and then interpolates the result back toward the base weights,
\begin{equation}\label{eq:reflow_rounds}
    \hat\theta_{k+1}=\mathrm{Reflow}(\theta_k),
    \qquad
    \theta_{k+1}=\lambda\,\theta_0+(1-\lambda)\,\hat\theta_{k+1}.
\end{equation}
The update $\hat\theta_{k+1}-\theta_k$ straightens the flow but also moves the posterior predictions away from those of $\theta_0$, which was fit to the data rather than to finite self-generated couplings. Interpolating in weight space~\citep{wortsman2022robust} keeps much of the straightening while pulling the predictions back toward $\theta_0$, so $\theta_{k+1}$ outperforms both $\theta_0$ and $\hat\theta_{k+1}$ at few steps. Each round draws couplings from, and initializes at, $\theta_k$, so the couplings straighten across rounds, while anchoring to $\theta_0$ prevents drift. The target remains a token posterior, so reflow requires no change to the loss, the singular logits (Section~\ref{sec:singular_logits}), or the architecture. We use two rounds of regularized reflow (Appendix~\ref{app:reflow_training}).

\section{Experiments}
\label{sec:experiments}

\subsection{Experimental Setup}
\label{sec:setup}

\paragraph{Dataset and Evaluation.}
We evaluate unconditional text generation on OpenWebText (OWT)~\citep{gokaslan2019OpenWeb}, following the benchmark used in prior diffusion and flow language models~\citep{lou2023discrete,sahoo2024simple,lee2026flow,elf2026}. We tokenize the corpus with the GPT-2 tokenizer~\citep{radford2019language} and pack sequences to length $L=1024$. For each \dflow and AR sampling configuration, we generate 1,024 sequences (ELF results use 1,000) and report generative perplexity (Gen.\ PPL) under GPT-2 Large and sample diversity as the mean per-sequence unigram entropy in nats. We evaluate the quality--diversity trade-off by sweeping the guidance strength.

\paragraph{Model.}
Our default model is \dflow-Small, a bidirectional diffusion Transformer~\citep{peebles2023dit} with 138M trainable parameters, 12 layers, hidden width 768, and 12 attention heads. We use the fixed 32-dimensional Hellinger--SVD token embeddings constructed in Section~\ref{sec:embedding_training}. At each position, the noisy embedding is concatenated with a self-conditioning estimate of the clean embedding and projected to the Transformer width. The network predicts residual logits, which are added to the analytic singular logits with a unigram reference distribution. Appendix~\ref{app:experimental_details} gives the configurations of all sizes.

\paragraph{Training and Inference.}
We train \dflow-Small, \dflow-Medium, and \dflow-Large for 80,000 steps each, with a global batch of $1{,}024\times1{,}024$ tokens (1,024 sequences of length 1,024), minimizing the CE loss in Eq.~\eqref{eq:token_ce}. We use Muon~\citep{jordan2024muon,Chen2025MuonOU,Chen2024LionSS} for hidden matrices and AdamW for the input and output projections, normalization gains, and biases, with a base learning rate of $3\times10^{-4}$, 2,500 warmup steps, and a constant learning rate thereafter. Training uses standard Gaussian corruption, an embedding-dependent interpolation-time distribution, and self-conditioning with probability $0.5$. For distilled self-conditioning guidance, each sequence also draws a scale $\omega$ log-uniform on $[1.5,6]$, as in ELF~\citep{elf2026}. We evaluate exponential-moving-average (EMA) checkpoints with decay $0.9999$. We report sampling cost in number of function evaluations (NFE) and wall-clock time; by default, we use the SDE sampler with 64 NFE. At sampling time, we sweep $\omega$; because $\omega$ is a network input, each guided step costs a single forward pass.

\subsection{Unconditional Generation on OpenWebText}
\label{sec:headline_benchmark}

Figure~\ref{fig:system_comparison}(a) compares the quality--diversity trade-offs of \dflow, ELF~\citep{elf2026}, and autoregressive (AR) baselines trained on the same data and swept over sampling temperature. Each method has three sizes, Small, Medium, and Large, corresponding to the B, M, and L configurations of \citet{elf2026} (Table~\ref{tab:model_sizes}). With 64 NFE, \dflow outperforms both ELF and the AR baseline at every model size across the trade-off. At matched entropy, \dflow-Small, \dflow-Medium, and \dflow-Large reach Gen.\ PPL $24.1$, $18.6$, and $17.3$, compared with $27.7$, $22.3$, and $20.2$ for AR-Small, AR-Medium, and AR-Large. Appendix~\ref{app:training_guidance_pareto} gives the evaluation details and the full Pareto frontiers.

\experimentfinding{
To the best of our knowledge, \dflow is the first continuous
flow language model to achieve lower generative perplexity
than autoregressive baselines of the same size on OpenWebText
at matched entropy.
}

\textbf{\dflow-Large is close to real OpenWebText on the two reported metrics.} It reaches Gen.\ PPL $17.3$ at entropy $5.51$, matching the Gen.\ PPL of real OpenWebText text ($17.3$ at $5.49$). Unlike the Small and Medium models and despite the identical training pipeline, \dflow-Large no longer trades Gen.\ PPL against entropy effectively under guidance: across all guidance scales, both metrics stay nearly constant, with Gen.\ PPL between $17.3$ and $18.0$ (Table~\ref{tab:l_full_sweep}), so only two of its eight settings are non-dominated and appear in Figure~\ref{fig:system_comparison}(a). This does not mean that its samples fully match real text; the remaining differences in quality and diversity are small, semantic, and beyond what Gen.\ PPL and unigram entropy resolve (see the samples in Appendix~\ref{app:qualitative_examples}). Unconditional OpenWebText generation thus calls for richer benchmarks and metrics.

\paragraph{Few-Step Generation.}
Figure~\ref{fig:fewstep_ladders} shows how regularized reflow (Section~\ref{sec:reflow}) improves few-step generation. Figure~\ref{fig:rounds_slope} in Appendix~\ref{app:additional_results} gives the complementary comparison across rounds at fixed NFE, with round 0 denoting the base model. Across rounds, Gen.\ PPL decreases at 8, 16, and 32 NFE for every model size while unigram entropy remains similar. Most of the gain arrives within two rounds: at 8 NFE, the first round lowers Gen.\ PPL by 29--36\% (e.g., from $62.5$ to $44.5$ for Large), and the second by a further 8--12\%. After two rounds, 16 NFE yields lower Gen.\ PPL than the base model at 32 NFE for every model size (Figure~\ref{fig:reflow_radar} in Appendix~\ref{app:ablation_details}). At matched entropy, the two-round models also attain lower Gen.\ PPL than ELF+PD at every NFE from 8 to 32, and than AR-Small (1,023 NFE) from 16 NFE onward (Table~\ref{tab:elf_b_fewstep}).

\begin{figure}[t]
    \centering
    \includegraphics[width=\textwidth]{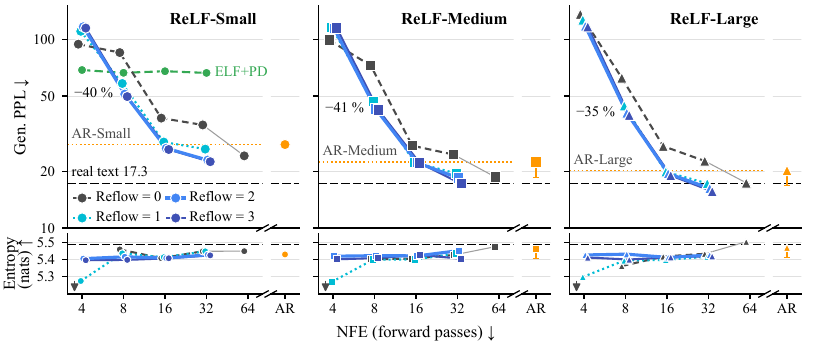}
    \caption{\textbf{Few-step generation across reflow rounds.} Gen.\ PPL (top) and entropy (bottom) versus NFE for Small, Medium, and Large \dflow models after zero to three reflow rounds. At 8 NFE, two reflow rounds reduce Gen.\ PPL by $40\%$, $41\%$, and $35\%$ for Small, Medium, and Large, respectively, relative to the model before reflow. After two rounds, \dflow-Small outperforms ELF+PD~\citep{elf2026} from 8 NFE, and all three sizes match or outperform their AR references from 16 NFE. Dotted lines denote same-size AR models; long-dashed lines denote real OpenWebText. Arrows at 4 NFE mark entropies of the model before reflow that fall below the plotted range.}
    \label{fig:fewstep_ladders}
\end{figure}

\paragraph{Wall-Clock Efficiency.}
Figure~\ref{fig:system_comparison}(b) compares \dflow-Small, \dflow-Medium, and \dflow-Large (138M, 421M, and 784M parameters) with same-size AR models. Because two reflow rounds suffice for each to reach lower Gen.\ PPL than its AR counterpart at similar unigram entropy with 16 NFE, we evaluate \dflow at this budget. We time 1,024-token generation on one GPU, with KV caching for AR and CUDA graphs for both methods, defining speedup as AR time divided by \dflow time. We compare a latency-bound regime, measuring latency at batch size 1, with a compute-bound regime, measuring throughput at batch size 64. At batch size 1, the speedups for Small, Medium, and Large are $15.3\times$, $17.7\times$, and $18.9\times$; at batch size 64, they are $1.7\times$, $2.6\times$, and $1.8\times$. Figure~\ref{fig:latency_throughput} in Appendix~\ref{app:additional_results} reports latency and throughput across batch sizes.

\experimentfinding{Two rounds of regularized reflow reduce Gen.\ PPL at 8 NFE by 35--41\% while unigram entropy remains similar, and 16 NFE then yields lower Gen.\ PPL than the base model at 32 NFE. At 16 NFE, \dflow attains lower Gen.\ PPL than same-size AR models at similar entropy while generating 1,024-token sequences more than $15\times$ faster at batch size 1.}

\subsection{Ablation Studies}
\label{sec:ablations}

Figure~\ref{fig:ablations} summarizes the ablations. Unless otherwise noted, they use \dflow-Small with SDE sampling at 64 NFE and 1,024 samples per point. Panels (a)--(d) plot quality--diversity curves; (c) and (d) sweep the guidance scale $\omega$. Training with distilled self-conditioning guidance was unstable without singular logits in (a) and with one-hot embeddings in (b), and one-hot embeddings do not support SDE sampling. We therefore train all models in (a) and (b) without distilled self-conditioning guidance, sweep momentum guidance~\citep{liao2026momentum} of strength $\alpha$ instead, and also report ODE (Euler) sampling in (b). Appendix~\ref{app:ablation_details} gives further settings.

\begin{figure}[t]
    \centering
    \setlength{\abovecaptionskip}{3pt}
    \setlength{\belowcaptionskip}{0pt}
    \begin{minipage}[t]{0.32\textwidth}
        \centering
        \includegraphics[width=\linewidth]{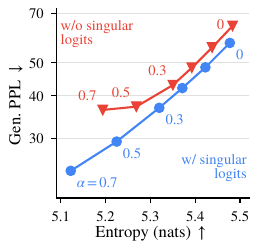}
        \par\small\textbf{(a)}
    \end{minipage}\hfill
    \begin{minipage}[t]{0.32\textwidth}
        \centering
        \includegraphics[width=\linewidth]{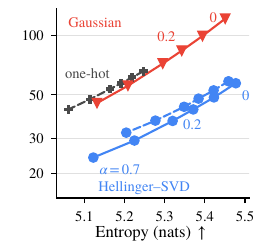}
        \par\small\textbf{(b)}
    \end{minipage}\hfill
    \begin{minipage}[t]{0.32\textwidth}
        \centering
        \includegraphics[width=\linewidth]{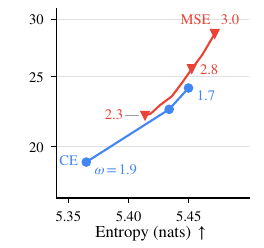}
        \par\small\textbf{(c)}
    \end{minipage}\\[4pt]
    \begin{minipage}[t]{0.32\textwidth}
        \centering
        \includegraphics[width=\linewidth]{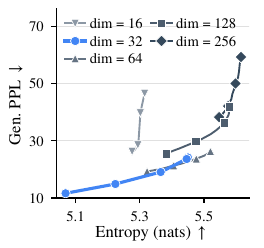}
        \par\small\textbf{(d)}
    \end{minipage}\hfill
    \begin{minipage}[t]{0.32\textwidth}
        \centering
        \includegraphics[width=\linewidth]{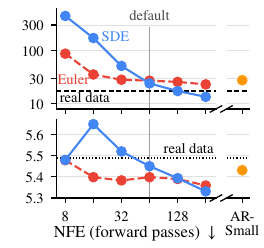}
        \par\small\textbf{(e)}
    \end{minipage}\hfill
    \begin{minipage}[t]{0.32\textwidth}
        \centering
        \includegraphics[width=\linewidth]{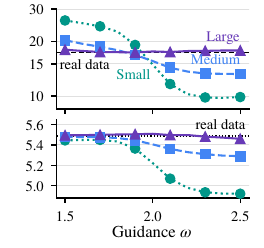}
        \par\small\textbf{(f)}
    \end{minipage}\\[4pt]
    \includegraphics{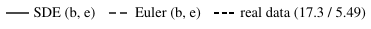}
    \caption{\textbf{Ablations of \dflow.} Results use \dflow-Small at 64 NFE unless otherwise indicated. \textbf{(a)} Singular logits under momentum guidance $\alpha$. \textbf{(b)} Embedding matrix. \textbf{(c)} CE and MSE training objectives over guidance scale $\omega$. \textbf{(d)} Flow dimension $d$. \textbf{(e)} NFE at $\omega=1.7$. \textbf{(f)} Guidance response for Small, Medium, and Large models. Panels (a)--(d) show Gen.\ PPL versus entropy; (e)--(f) show Gen.\ PPL above and entropy below. Black reference lines in (e)--(f) denote real text. At entropy $5.45$, CE reaches Gen.\ PPL $24.1$ versus $25.6$ for MSE. Larger models lose less entropy as guidance increases. Each point uses 1,024 sequences unless noted.}
    \label{fig:ablations}
\end{figure}

\paragraph{Singular Logits.}
Singular logits improve the quality--diversity trade-off (Figure~\ref{fig:ablations}(a)). Without guidance, the model with singular logits reaches Gen.\ PPL $57.2$ at entropy $5.48$, versus $64.2$ at $5.48$ for the model without singular logits, and the curve with singular logits lies below the curve without them across their shared entropy range. The gap widens under strong guidance: from $\alpha=0.5$ to $0.7$, Gen.\ PPL falls from $29.3$ to $24.0$ with singular logits but only from $37.2$ to $36.3$ without them.

\paragraph{Embedding Matrix.}
Hellinger--SVD outperforms random Gaussian and one-hot embeddings, all with singular logits (Figure~\ref{fig:ablations}(b)). With SDE sampling at $\alpha=0.2$, Hellinger--SVD reaches Gen.\ PPL $42.1$ at entropy $5.37$, compared with $83.6$ at $5.34$ for a 32-dimensional random Gaussian embedding. With Euler sampling, Hellinger--SVD attains lower Gen.\ PPL and higher entropy than the one-hot embedding at every guidance strength, even though the one-hot model has about 215M parameters versus 138M for the Hellinger--SVD model.

\paragraph{Training Objective.}
Training with CE yields a better quality--diversity trade-off than training with MSE (Figure~\ref{fig:ablations}(c)). The MSE variant regresses the velocity $\vv_\theta=\tfrac{1}{1-t}(\widehat{X}_1-X_t)$ onto $X_1-X_0$, as in ELF~\citep{elf2026}, and applies the distilled-guidance shift to this velocity target rather than to the logits. The two models differ only in the training loss. At matched entropy, CE attains lower Gen.\ PPL: $24.1$ at entropy $5.45$ ($\omega=1.7$) versus $25.6$ at $5.45$ for MSE ($\omega=2.8$).

\paragraph{Flow Embedding Dimension.}
The default 32-dimensional flow embedding space gives the best quality--diversity trade-off (Figure~\ref{fig:ablations}(d)). We vary $d\in\{16,32,64,128,256\}$, retaining the leading $d$ Hellinger--SVD directions (Section~\ref{sec:embedding_training}) and normalizing the unigram-weighted mean squared norm to $d$; only the input projection depends on $d$. Larger dimensions shift the curve toward higher entropy and need much stronger guidance to lower Gen.\ PPL. Even at $\omega=2.8$ and $2.6$, $d=128$ and $d=256$ remain at Gen.\ PPL $27.3$ and $42.1$ (entropies $5.42$ and $5.58$). $d=64$ needs $\omega=2.4$ to reach $23.7$ at $5.48$, whereas $d=32$ reaches $23.6$ at $5.45$ with $\omega=1.8$. Reducing the dimension to $d=16$ also hurts: its curve lies above that of $d=32$. Table~\ref{tab:flow_dim_sweep} in Appendix~\ref{app:ablation_details} lists the plotted values.

\experimentfinding{Singular logits, Hellinger--SVD embeddings, and CE training each lower Gen.\ PPL at matched entropy, and a 32-dimensional flow space gives the best quality--diversity trade-off.}

\paragraph{Sampling Budget.}
Figure~\ref{fig:ablations}(e) varies only the number of steps within a fixed grid family (Appendix~\ref{app:b_sampling_schedule}) at the default guidance scale $\omega=1.7$. Gen.\ PPL drops sharply as NFE grows to the default 64 and decreases further at 256 NFE, with lower entropy.

\paragraph{Guidance and Model Size.}
Figure~\ref{fig:ablations}(f) compares the guidance response across model sizes. Larger models lose less entropy as guidance increases; \dflow-Large maintains nearly constant Gen.\ PPL and entropy across the sweep. Figure~\ref{fig:pareto_by_size} in Appendix~\ref{app:additional_results} compares these responses with the AR and ELF quality--diversity trade-offs.

\section{Related Work}\label{sec:related_works}

\paragraph{Discrete-State Models.}
Discrete-state models generate sequences by updating categorical variables, often at multiple positions per step.
Following masked-token pretraining~\citep{devlin2019bert} and iterative refinement~\citep{ghazvininejad2019mask}, D3PM formalized structured categorical corruption~\citep{austin2021structured}, while continuous-time Markov formulations established corresponding denoising dynamics~\citep{campbell2022continuous}.
Subsequent work develops discrete score estimation, simplified masked-denoising objectives, and time-independent parameterizations~\citep{lou2023discrete,sahoo2024simple,DBLP:conf/nips/ShiHWDT24,ou2024your}, alongside more general corruption kernels and probability paths~\citep{gat2024discrete,DBLP:conf/iclr/ShaulGH0SHKLC25,von2025generalized}.
The LLaDA family~\citep{nie2025large,zhu2025llada,DBLP:journals/corr/abs-2512-15745} advances masked diffusion through scaling and preference optimization, alongside work on language and code generation~\citep{DBLP:journals/corr/abs-2508-15487,DBLP:journals/corr/abs-2506-20639,song2025seed} and uniform-state diffusion~\citep{DBLP:conf/icml/SahooDGWCK25,DBLP:journals/corr/abs-2602-15014,DBLP:journals/corr/abs-2606-19005}.
Sampling research develops blockwise decoding~\citep{DBLP:conf/iclr/ArriolaGCYQHSK25}, remasking~\citep{DBLP:conf/nips/WangSSK25}, and adaptive token ordering and parallel updates~\citep{kim2025train,DBLP:journals/corr/abs-2505-22618,DBLP:conf/nips/Ben-HamuGSNK25}.
Reinforcement learning supports reasoning~\citep{zhao2025d1}, while masked generation extends to images and multimodal modeling~\citep{chang2022maskgit,DBLP:conf/nips/YangTLZSTW25}.
These models retain categorical states even when time is continuous.

\paragraph{Continuous-State Models.}
Continuous-state models generate categorical sequences through real-valued representations, with early examples including Argmax Flows~\citep{hoogeboom2021argmax}.
Diffusion-LM~\citep{li2022diffusion} used token-embedding diffusion for controllable text generation, while Analog Bits~\citep{chen2022analog} introduced self-conditioning for diffusion over continuous relaxations of binary encodings.
Subsequent embedding-space models incorporate self-conditioning~\citep{DBLP:journals/corr/abs-2211-04236}, continuous-time formulations~\citep{DBLP:journals/corr/abs-2211-15089}, conditional generation and pretraining~\citep{DBLP:conf/iclr/GongLF0K23,DBLP:conf/icml/LinGSWFLDC23,ye2023dinoiser}, and likelihood-based training~\citep{gulrajani2023likelihood}.
Alternative representations include vocabulary-sized logit/simplex states~\citep{han2022ssd,DBLP:conf/eacl/MahabadiITHBPC24,DBLP:conf/acl/TaeIKC25} and statistical-manifold constructions~\citep{DBLP:conf/nips/JoH25,DBLP:conf/nips/ChengL0L24,DBLP:journals/corr/abs-2504-10283}.
Latent diffusion denoises encoded text representations with a separate decoder~\citep{lovelace2023latent,DBLP:conf/nips/0002G0ZSJ23,DBLP:conf/aaai/ShabalinMCLKBMM25,DBLP:conf/nips/MeshchaninovCSA25,DBLP:journals/corr/abs-2603-02547}, with recent work jointly learning the latent space and generative dynamics~\citep{DBLP:journals/corr/abs-2605-07933,DBLP:journals/corr/abs-2605-06548}.
Rectified flow has also been applied to conditional text generation~\citep{DBLP:conf/eacl/HuWAMFOS24}, while continuous flow-map language models pursue few-step decoding through distillation~\citep{lee2026flow,DBLP:journals/corr/abs-2602-12233,DBLP:journals/corr/abs-2604-09784}.
LangFlow~\citep{chen2026langflow} uses a Bregman formulation to train embedding-space flows with cross-entropy, whereas ELF~\citep{elf2026} performs rectified flow in pretrained contextual embeddings and applies token-level supervision only at final decoding.
Within this family, \dflow combines analytic singular logits for endpoint commitment with a compact, context-informed flow embedding space under a single cross-entropy objective, and reaches few-step generation through regularized reflow rather than distillation.

\section{Conclusion}
\label{sec:conclusion}

We introduced \dflow, a continuous rectified-flow language model trained with a cross-entropy loss. Its singular logits capture the divergent part of the token posteriors analytically, so trajectories converge to token embeddings, while the bounded learned residual decides which tokens are generated. Fixed Hellinger--SVD embeddings place tokens with similar contexts nearby, self-conditioning guidance in logit space is distilled through the same CE objective, and regularized reflow straightens the flow for few-step generation. On OpenWebText at 64 NFE, \dflow outperforms ELF and autoregressive baselines of the same size, and \dflow-Large matches real text in Gen.\ PPL at similar entropy. After two reflow rounds, \dflow keeps its advantage over same-size autoregressive models at 16 NFE while generating 15--19$\times$ faster at batch size 1. These results show that continuous flows can match autoregressive quality while offering fast, parallel generation.

\subsubsection*{Acknowledgements}

This work was supported in part by the Institute for Foundations of Machine Learning (IFML). The authors acknowledge the Texas Advanced Computing Center (TACC) at The University of Texas at Austin for providing computational resources that have contributed to the research results reported within this paper.



\bibliography{references}

@inproceedings{lebret2014hellinger,
  title={Word Embeddings through {Hellinger PCA}},
  author={Lebret, R{\'e}mi and Collobert, Ronan},
  booktitle={Proceedings of the 14th Conference of the European Chapter of the Association for Computational Linguistics},
  pages={482--490},
  year={2014},
  publisher={Association for Computational Linguistics},
  url={https://aclanthology.org/E14-1051/}
}

@inproceedings{liu2022flow,
  title        = {Flow Straight and Fast: Learning to Generate and Transfer Data with Rectified Flow},
  author       = {Liu, Xingchao and Gong, Chengyue and Liu, Qiang},
  booktitle    = {The Eleventh International Conference on Learning Representations, {ICLR}
                  2023},
  publisher    = {OpenReview.net},
  year         = {2023},
  url          = {https://openreview.net/forum?id=XVjTT1nw5z}
}

@inproceedings{wortsman2022robust,
  title        = {Robust Fine-Tuning of Zero-Shot Models},
  author       = {Wortsman, Mitchell and Ilharco, Gabriel and Kim, Jong Wook and Li, Mike and Kornblith, Simon and Roelofs, Rebecca and Gontijo-Lopes, Raphael and Hajishirzi, Hannaneh and Farhadi, Ali and Namkoong, Hongseok and Schmidt, Ludwig},
  booktitle    = {{IEEE/CVF} Conference on Computer Vision and Pattern Recognition, {CVPR}
                  2022},
  pages        = {7959--7971},
  publisher    = {{IEEE}},
  year         = {2022},
  url          = {https://openaccess.thecvf.com/content/CVPR2022/html/Wortsman_Robust_Fine-Tuning_of_Zero-Shot_Models_CVPR_2022_paper.html}
}

@inproceedings{lipman2022flow,
  title        = {Flow Matching for Generative Modeling},
  author       = {Lipman, Yaron and Chen, Ricky T. Q. and Ben-Hamu, Heli and Nickel, Maximilian and Le, Matt},
  booktitle    = {The Eleventh International Conference on Learning Representations, {ICLR}
                  2023},
  publisher    = {OpenReview.net},
  year         = {2023},
  url          = {https://openreview.net/forum?id=PqvMRDCJT9t}
}

@inproceedings{devlin2019bert,
  author       = {Jacob Devlin and
                  Ming{-}Wei Chang and
                  Kenton Lee and
                  Kristina Toutanova},
  title        = {{BERT}: Pre-training of Deep Bidirectional Transformers for Language
                  Understanding},
  booktitle    = {Proceedings of the 2019 Conference of the North American Chapter of the
                  Association for Computational Linguistics: Human Language
                  Technologies, {NAACL-HLT} 2019, Volume 1 (Long and Short
                  Papers)},
  pages        = {4171--4186},
  publisher    = {Association for Computational Linguistics},
  year         = {2019},
  url          = {https://aclanthology.org/N19-1423/},
  bibsource    = {dblp computer science bibliography, https://dblp.org}
}

@inproceedings{ghazvininejad2019mask,
  author       = {Marjan Ghazvininejad and
                  Omer Levy and
                  Yinhan Liu and
                  Luke Zettlemoyer},
  title        = {{Mask-Predict}: Parallel Decoding of Conditional Masked Language Models},
  booktitle    = {Proceedings of the 2019 Conference on Empirical Methods in Natural Language
                  Processing and the 9th International Joint Conference on
                  Natural Language Processing, {EMNLP-IJCNLP} 2019},
  pages        = {6112--6121},
  publisher    = {Association for Computational Linguistics},
  year         = {2019},
  url          = {https://aclanthology.org/D19-1633/},
  bibsource    = {dblp computer science bibliography, https://dblp.org}
}

@inproceedings{hoogeboom2021argmax,
  author       = {Emiel Hoogeboom and
                  Didrik Nielsen and
                  Priyank Jaini and
                  Patrick Forr{\'{e}} and
                  Max Welling},
  title        = {Argmax Flows and Multinomial Diffusion: Learning Categorical Distributions},
  booktitle    = {Advances in Neural Information Processing Systems 34: Annual Conference on
                  Neural Information Processing Systems 2021, NeurIPS 2021},
  pages        = {12454--12465},
  year         = {2021},
  url          = {https://proceedings.neurips.cc/paper/2021/hash/67d96d458abdef21792e6d8e590244e7-Abstract.html},
  bibsource    = {dblp computer science bibliography, https://dblp.org}
}

@inproceedings{austin2021structured,
  author       = {Jacob Austin and
                  Daniel D. Johnson and
                  Jonathan Ho and
                  Daniel Tarlow and
                  Rianne van den Berg},
  title        = {Structured Denoising Diffusion Models in Discrete State-Spaces},
  booktitle    = {Advances in Neural Information Processing Systems 34: Annual Conference on
                  Neural Information Processing Systems 2021, NeurIPS 2021},
  pages        = {17981--17993},
  year         = {2021},
  url          = {https://proceedings.neurips.cc/paper/2021/hash/958c530554f78bcd8e97125b70e6973d-Abstract.html},
  bibsource    = {dblp computer science bibliography, https://dblp.org}
}

@inproceedings{campbell2022continuous,
  author       = {Andrew Campbell and
                  Joe Benton and
                  Valentin {De Bortoli} and
                  Thomas Rainforth and
                  George Deligiannidis and
                  Arnaud Doucet},
  title        = {A Continuous Time Framework for Discrete Denoising Models},
  booktitle    = {Advances in Neural Information Processing Systems 35: Annual Conference on
                  Neural Information Processing Systems 2022, NeurIPS 2022},
  year         = {2022},
  url          = {http://papers.nips.cc/paper\_files/paper/2022/hash/b5b528767aa35f5b1a60fe0aaeca0563-Abstract-Conference.html},
  bibsource    = {dblp computer science bibliography, https://dblp.org}
}

@inproceedings{li2022diffusion,
  author       = {Xiang Lisa Li and
                  John Thickstun and
                  Ishaan Gulrajani and
                  Percy Liang and
                  Tatsunori B. Hashimoto},
  title        = {{Diffusion-LM} Improves Controllable Text Generation},
  booktitle    = {Advances in Neural Information Processing Systems 35: Annual Conference on
                  Neural Information Processing Systems 2022, NeurIPS 2022},
  year         = {2022},
  url          = {http://papers.nips.cc/paper\_files/paper/2022/hash/1be5bc25d50895ee656b8c2d9eb89d6a-Abstract-Conference.html},
  bibsource    = {dblp computer science bibliography, https://dblp.org}
}

@inproceedings{gulrajani2023likelihood,
  author       = {Ishaan Gulrajani and
                  Tatsunori B. Hashimoto},
  title        = {Likelihood-Based Diffusion Language Models},
  booktitle    = {Advances in Neural Information Processing Systems 36: Annual Conference on
                  Neural Information Processing Systems 2023, NeurIPS 2023},
  year         = {2023},
  url          = {http://papers.nips.cc/paper\_files/paper/2023/hash/35b5c175e139bff5f22a5361270fce87-Abstract-Conference.html},
  bibsource    = {dblp computer science bibliography, https://dblp.org}
}

@inproceedings{han2022ssd,
  author       = {Xiaochuang Han and
                  Sachin Kumar and
                  Yulia Tsvetkov},
  title        = {{SSD-LM}: Semi-autoregressive Simplex-based Diffusion Language Model
                  for Text Generation and Modular Control},
  booktitle    = {Proceedings of the 61st Annual Meeting of the Association for Computational
                  Linguistics (Volume 1: Long Papers), {ACL} 2023},
  pages        = {11575--11596},
  publisher    = {Association for Computational Linguistics},
  year         = {2023},
  url          = {https://aclanthology.org/2023.acl-long.647/},
  bibsource    = {dblp computer science bibliography, https://dblp.org}
}

@inproceedings{sahoo2024simple,
  author       = {Subham Sekhar Sahoo and
                  Marianne Arriola and
                  Yair Schiff and
                  Aaron Gokaslan and
                  Edgar Marroquin and
                  Justin T. Chiu and
                  Alexander Rush and
                  Volodymyr Kuleshov},
  title        = {Simple and Effective Masked Diffusion Language Models},
  booktitle    = {Advances in Neural Information Processing Systems 37: Annual Conference on
                  Neural Information Processing Systems 2024, NeurIPS 2024},
  year         = {2024},
  url          = {http://papers.nips.cc/paper\_files/paper/2024/hash/eb0b13cc515724ab8015bc978fdde0ad-Abstract-Conference.html},
  bibsource    = {dblp computer science bibliography, https://dblp.org}
}

@inproceedings{nie2025large,
  author       = {Shen Nie and
                  Fengqi Zhu and
                  Zebin You and
                  Xiaolu Zhang and
                  Jingyang Ou and
                  Jun Hu and
                  Jun Zhou and
                  Yankai Lin and
                  Ji{-}Rong Wen and
                  Chongxuan Li},
  title        = {Large Language Diffusion Models},
  booktitle    = {Advances in Neural Information Processing Systems 38: Annual Conference on
                  Neural Information Processing Systems 2025, NeurIPS 2025},
  year         = {2025},
  url          = {http://papers.nips.cc/paper\_files/paper/2025/hash/48b383b24230e0e6e649d9c98dae4d8c-Abstract-Conference.html},
  bibsource    = {dblp computer science bibliography, https://dblp.org}
}

@inproceedings{gat2024discrete,
  author       = {Itai Gat and
                  Tal Remez and
                  Neta Shaul and
                  Felix Kreuk and
                  Ricky T. Q. Chen and
                  Gabriel Synnaeve and
                  Yossi Adi and
                  Yaron Lipman},
  title        = {Discrete Flow Matching},
  booktitle    = {Advances in Neural Information Processing Systems 37: Annual Conference on
                  Neural Information Processing Systems 2024, NeurIPS 2024},
  year         = {2024},
  url          = {http://papers.nips.cc/paper\_files/paper/2024/hash/f0d629a734b56a642701bba7bc8bb3ed-Abstract-Conference.html},
  bibsource    = {dblp computer science bibliography, https://dblp.org}
}

@inproceedings{ou2024your,
  author       = {Jingyang Ou and
                  Shen Nie and
                  Kaiwen Xue and
                  Fengqi Zhu and
                  Jiacheng Sun and
                  Zhenguo Li and
                  Chongxuan Li},
  title        = {Your Absorbing Discrete Diffusion Secretly Models the Conditional
                  Distributions of Clean Data},
  booktitle    = {The Thirteenth International Conference on Learning Representations, {ICLR}
                  2025},
  publisher    = {OpenReview.net},
  year         = {2025},
  url          = {https://openreview.net/forum?id=sMyXP8Tanm},
  bibsource    = {dblp computer science bibliography, https://dblp.org}
}

@inproceedings{chang2022maskgit,
  author       = {Huiwen Chang and
                  Han Zhang and
                  Lu Jiang and
                  Ce Liu and
                  William T. Freeman},
  title        = {{MaskGIT}: Masked Generative Image Transformer},
  booktitle    = {{IEEE/CVF} Conference on Computer Vision and Pattern Recognition, {CVPR}
                  2022},
  pages        = {11305--11315},
  publisher    = {{IEEE}},
  year         = {2022},
  url          = {https://openaccess.thecvf.com/content/CVPR2022/html/Chang_MaskGIT_Masked_Generative_Image_Transformer_CVPR_2022_paper.html},
  bibsource    = {dblp computer science bibliography, https://dblp.org}
}

@inproceedings{von2025generalized,
  author       = {Dimitri von R{\"{u}}tte and
                  Janis Fluri and
                  Yuhui Ding and
                  Antonio Orvieto and
                  Bernhard Sch{\"{o}}lkopf and
                  Thomas Hofmann},
  title        = {Generalized Interpolating Discrete Diffusion},
  booktitle    = {Forty-second International Conference on Machine Learning, {ICML} 2025},
  series       = {Proceedings of Machine Learning Research},
  volume       = {267},
  pages        = {61810--61843},
  publisher    = {{PMLR}},
  year         = {2025},
  url          = {https://proceedings.mlr.press/v267/von-rutte25a.html},
  bibsource    = {dblp computer science bibliography, https://dblp.org}
}

@inproceedings{kim2025train,
  author       = {Jaeyeon Kim and
                  Kulin Shah and
                  Vasilis Kontonis and
                  Sham M. Kakade and
                  Sitan Chen},
  title        = {Train for the Worst, Plan for the Best: Understanding Token Ordering
                  in Masked Diffusions},
  booktitle    = {Forty-second International Conference on Machine Learning, {ICML} 2025},
  series       = {Proceedings of Machine Learning Research},
  volume       = {267},
  pages        = {30749--30768},
  publisher    = {{PMLR}},
  year         = {2025},
  url          = {https://proceedings.mlr.press/v267/kim25ah.html},
  bibsource    = {dblp computer science bibliography, https://dblp.org}
}

@article{song2025seed,
  author       = {Yuxuan Song and
                  Zheng Zhang and
                  Cheng Luo and
                  Pengyang Gao and
                  Fan Xia and
                  Hao Luo and
                  Zheng Li and
                  Yuehang Yang and
                  Hongli Yu and
                  Xingwei Qu and
                  Yuwei Fu and
                  Jing Su and
                  Ge Zhang and
                  Wenhao Huang and
                  Mingxuan Wang and
                  Lin Yan and
                  Xiaoying Jia and
                  Jingjing Liu and
                  Wei{-}Ying Ma and
                  Ya{-}Qin Zhang and
                  Yonghui Wu and
                  Hao Zhou},
  title        = {{Seed Diffusion}: {A} Large-Scale Diffusion Language Model with High-Speed
                  Inference},
  journal      = {CoRR},
  volume       = {abs/2508.02193},
  year         = {2025},
  url          = {https://arxiv.org/abs/2508.02193},
  eprinttype   = {arXiv},
  eprint       = {2508.02193},
  bibsource    = {dblp computer science bibliography, https://dblp.org}
}

@inproceedings{zhu2025llada,
  author       = {Fengqi Zhu and
                  Rongzhen Wang and
                  Shen Nie and
                  Xiaolu Zhang and
                  Chunwei Wu and
                  Jun Zhou and
                  Yankai Lin and
                  Ji{-}Rong Wen and
                  Chongxuan Li},
  title        = {{LLaDA} 1.5: Variance-Reduced Preference Optimization for Large Language
                  Diffusion Models},
  booktitle    = {Proceedings of the 64th Annual Meeting of the Association for Computational
                  Linguistics (Volume 1: Long Papers), {ACL} 2026},
  pages        = {11425--11460},
  publisher    = {Association for Computational Linguistics},
  year         = {2026},
  doi          = {10.18653/v1/2026.acl-long.524},
  url          = {https://aclanthology.org/2026.acl-long.524/}
}

@inproceedings{zhao2025d1,
  author       = {Siyan Zhao and
                  Devaansh Gupta and
                  Qinqing Zheng and
                  Aditya Grover},
  title        = {d1: Scaling Reasoning in Diffusion Large Language Models via Reinforcement
                  Learning},
  booktitle    = {Advances in Neural Information Processing Systems 38: Annual Conference on
                  Neural Information Processing Systems 2025, NeurIPS 2025},
  year         = {2025},
  url          = {http://papers.nips.cc/paper\_files/paper/2025/hash/52190a0362148d179f1cbd9080956872-Abstract-Conference.html},
  bibsource    = {dblp computer science bibliography, https://dblp.org}
}

@inproceedings{lou2023discrete,
  author       = {Aaron Lou and
                  Chenlin Meng and
                  Stefano Ermon},
  title        = {Discrete Diffusion Modeling by Estimating the Ratios of the Data Distribution},
  booktitle    = {Forty-first International Conference on Machine Learning, {ICML} 2024},
  series       = {Proceedings of Machine Learning Research},
  volume       = {235},
  pages        = {32819--32848},
  publisher    = {{PMLR} / OpenReview.net},
  year         = {2024},
  url          = {https://proceedings.mlr.press/v235/lou24a.html},
  bibsource    = {dblp computer science bibliography, https://dblp.org}
}

@inproceedings{lovelace2023latent,
  author       = {Justin Lovelace and
                  Varsha Kishore and
                  Chao Wan and
                  Eliot Shekhtman and
                  Kilian Q. Weinberger},
  title        = {Latent Diffusion for Language Generation},
  booktitle    = {Advances in Neural Information Processing Systems 36: Annual Conference on
                  Neural Information Processing Systems 2023, NeurIPS 2023},
  year         = {2023},
  url          = {http://papers.nips.cc/paper\_files/paper/2023/hash/b2a2bd5d5051ff6af52e1ef60aefd255-Abstract-Conference.html},
  bibsource    = {dblp computer science bibliography, https://dblp.org}
}

@article{su2024roformer,
  author       = {Jianlin Su and
                  Murtadha H. M. Ahmed and
                  Yu Lu and
                  Shengfeng Pan and
                  Wen Bo and
                  Yunfeng Liu},
  title        = {{RoFormer}: Enhanced Transformer with Rotary Position Embedding},
  journal      = {Neurocomputing},
  volume       = {568},
  pages        = {127063},
  year         = {2024},
  url          = {https://linkinghub.elsevier.com/retrieve/pii/S0925231223011864},
  bibsource    = {dblp computer science bibliography, https://dblp.org}
}

@inproceedings{qiu2025gated,
  author       = {Zihan Qiu and
                  Zekun Wang and
                  Bo Zheng and
                  Zeyu Huang and
                  Kaiyue Wen and
                  Songlin Yang and
                  Rui Men and
                  Le Yu and
                  Fei Huang and
                  Suozhi Huang and
                  Dayiheng Liu and
                  Jingren Zhou and
                  Junyang Lin},
  title        = {Gated Attention for Large Language Models: Non-linearity, Sparsity,
                  and Attention-Sink-Free},
  booktitle    = {Advances in Neural Information Processing Systems 38: Annual Conference on
                  Neural Information Processing Systems 2025, NeurIPS 2025},
  year         = {2025},
  url          = {http://papers.nips.cc/paper\_files/paper/2025/hash/904e89bb4e632e75fb47f093b620b257-Abstract-Conference.html},
  bibsource    = {dblp computer science bibliography, https://dblp.org}
}

@article{elf2026,
  author       = {Keya Hu and
                  Linlu Qiu and
                  Yiyang Lu and
                  Hanhong Zhao and
                  Tianhong Li and
                  Yoon Kim and
                  Jacob Andreas and
                  Kaiming He},
  title        = {{ELF}: Embedded Language Flows},
  journal      = {CoRR},
  volume       = {abs/2605.10938},
  year         = {2026},
  url          = {https://arxiv.org/abs/2605.10938},
  eprinttype   = {arXiv},
  eprint       = {2605.10938},
  bibsource    = {dblp computer science bibliography, https://dblp.org}
}

@inproceedings{dao2022flashattention,
  title        = {Flash{A}ttention: Fast and Memory-Efficient Exact Attention with {IO}-Awareness},
  author       = {Dao, Tri and Fu, Daniel Y. and Ermon, Stefano and Rudra, Atri and R{\'e}, Christopher},
  booktitle    = {Advances in Neural Information Processing Systems 35: Annual Conference on
                  Neural Information Processing Systems 2022, NeurIPS 2022},
  year         = {2022},
  url          = {https://proceedings.neurips.cc/paper_files/paper/2022/hash/67d57c32e20fd0a7a302cb81d36e40d5-Abstract-Conference.html}
}

@inproceedings{dao2023flashattention2,
  title        = {Flash{A}ttention-2: Faster Attention with Better Parallelism and Work Partitioning},
  author       = {Dao, Tri},
  booktitle    = {The Twelfth International Conference on Learning Representations, {ICLR}
                  2024},
  publisher    = {OpenReview.net},
  year         = {2024},
  url          = {https://openreview.net/forum?id=mZn2Xyh9Ec}
}

@misc{gokaslan2019OpenWeb,
  title        = {{OpenWebText} Corpus},
  author       = {Gokaslan, Aaron and Cohen, Vanya},
  howpublished = {\url{http://Skylion007.github.io/OpenWebTextCorpus}},
  year         = {2019}
}

@techreport{radford2019language,
  title        = {Language Models are Unsupervised Multitask Learners},
  author       = {Radford, Alec and Wu, Jeffrey and Child, Rewon and Luan, David and Amodei, Dario and Sutskever, Ilya},
  institution  = {OpenAI},
  year         = {2019},
  url          = {https://cdn.openai.com/better-language-models/language_models_are_unsupervised_multitask_learners.pdf}
}

@misc{jordan2024muon,
  author       = {Keller Jordan and Yuchen Jin and Vlado Boza and Jiacheng You and
                  Franz Cesista and Laker Newhouse and Jeremy Bernstein},
  title        = {Muon: An optimizer for hidden layers in neural networks},
  year         = {2024},
  url          = {https://kellerjordan.github.io/posts/muon/}
}

@article{Chen2025MuonOU,
  title={Muon Optimizes Under Spectral Norm Constraints},
  author={Li-Zhang Chen and Jonathan Li and Qiang Liu},
  journal={Trans. Mach. Learn. Res.},
  year={2025},
  volume={2026},
  url={https://api.semanticscholar.org/CorpusID:279447438}
}

@inproceedings{Chen2024LionSS,
  title={Lion Secretly Solves a Constrained Optimization: As Lyapunov Predicts},
  author={Li-Zhang Chen and Bo Liu and Kai-Zhao Liang and Qiang Liu},
  booktitle={International Conference on Learning Representations},
  year={2024},
  url={https://api.semanticscholar.org/CorpusID:271532744}
}

@article{liao2026momentum,
  title        = {Momentum Guidance: Plug-and-Play Guidance for Flow Models},
  author       = {Liao, Runlong and Yu, Jian and Su, Baiyu and Zhang, Chi and Chen, Lizhang and Liu, Qiang},
  journal      = {CoRR},
  volume       = {abs/2602.20360},
  year         = {2026},
  url          = {https://arxiv.org/abs/2602.20360}
}

@inproceedings{chen2022analog,
  title        = {Analog Bits: Generating Discrete Data using Diffusion Models with Self-Conditioning},
  author       = {Chen, Ting and Zhang, Ruixiang and Hinton, Geoffrey},
  booktitle    = {The Eleventh International Conference on Learning Representations, {ICLR}
                  2023},
  publisher    = {OpenReview.net},
  year         = {2023},
  url          = {https://openreview.net/forum?id=3itjR9QxFw}
}

@article{chen2026langflow,
  author       = {Yuxin Chen and
                  Chumeng Liang and
                  Hangke Sui and
                  Ruihan Guo and
                  Chaoran Cheng and
                  Jiaxuan You and
                  Ge Liu},
  title        = {{LangFlow}: Continuous Diffusion Rivals Discrete in Language Modeling},
  journal      = {CoRR},
  volume       = {abs/2604.11748},
  year         = {2026},
  url          = {https://arxiv.org/abs/2604.11748},
  eprinttype   = {arXiv},
  eprint       = {2604.11748},
  bibsource    = {dblp computer science bibliography, https://dblp.org}
}

@inproceedings{ma2024sit,
  title={{SiT}: Exploring flow and diffusion-based generative models with scalable interpolant transformers},
  author={Ma, Nanye and Goldstein, Mark and Albergo, Michael S. and Boffi, Nicholas M. and Vanden-Eijnden, Eric and Xie, Saining},
  booktitle={Computer Vision -- {ECCV} 2024},
  series={Lecture Notes in Computer Science},
  volume={15135},
  pages={23--40},
  publisher={Springer},
  year={2024},
  doi={10.1007/978-3-031-72980-5_2}
}

@article{DBLP:journals/corr/abs-2211-15089,
  author       = {Sander Dieleman and
                  Laurent Sartran and
                  Arman Roshannai and
                  Nikolay Savinov and
                  Yaroslav Ganin and
                  Pierre H. Richemond and
                  Arnaud Doucet and
                  Robin Strudel and
                  Chris Dyer and
                  Conor Durkan and
                  Curtis Hawthorne and
                  R{\'{e}}mi Leblond and
                  Will Grathwohl and
                  Jonas Adler},
  title        = {Continuous diffusion for categorical data},
  journal      = {CoRR},
  volume       = {abs/2211.15089},
  year         = {2022},
  url          = {https://arxiv.org/abs/2211.15089},
  eprinttype   = {arXiv},
  eprint       = {2211.15089},
  bibsource    = {dblp computer science bibliography, https://dblp.org}
}

@article{DBLP:journals/corr/abs-2211-04236,
  author       = {Robin Strudel and
                  Corentin Tallec and
                  Florent Altch{\'{e}} and
                  Yilun Du and
                  Yaroslav Ganin and
                  Arthur Mensch and
                  Will Grathwohl and
                  Nikolay Savinov and
                  Sander Dieleman and
                  Laurent Sifre and
                  R{\'{e}}mi Leblond},
  title        = {Self-conditioned Embedding Diffusion for Text Generation},
  journal      = {CoRR},
  volume       = {abs/2211.04236},
  year         = {2022},
  url          = {https://arxiv.org/abs/2211.04236},
  eprinttype   = {arXiv},
  eprint       = {2211.04236},
  bibsource    = {dblp computer science bibliography, https://dblp.org}
}

@inproceedings{DBLP:conf/iclr/GongLF0K23,
  author       = {Shansan Gong and
                  Mukai Li and
                  Jiangtao Feng and
                  Zhiyong Wu and
                  Lingpeng Kong},
  title        = {{DiffuSeq}: Sequence to Sequence Text Generation with Diffusion Models},
  booktitle    = {The Eleventh International Conference on Learning Representations, {ICLR}
                  2023},
  publisher    = {OpenReview.net},
  year         = {2023},
  url          = {https://openreview.net/forum?id=jQj-\_rLVXsj},
  bibsource    = {dblp computer science bibliography, https://dblp.org}
}

@inproceedings{DBLP:conf/icml/LinGSWFLDC23,
  author       = {Zhenghao Lin and
                  Yeyun Gong and
                  Yelong Shen and
                  Tong Wu and
                  Zhihao Fan and
                  Chen Lin and
                  Nan Duan and
                  Weizhu Chen},
  title        = {Text Generation with Diffusion Language Models: {A} Pre-training Approach
                  with Continuous Paragraph Denoise},
  booktitle    = {Fortieth International Conference on Machine Learning, {ICML} 2023},
  series       = {Proceedings of Machine Learning Research},
  volume       = {202},
  pages        = {21051--21064},
  publisher    = {{PMLR}},
  year         = {2023},
  url          = {https://proceedings.mlr.press/v202/lin23d.html},
  bibsource    = {dblp computer science bibliography, https://dblp.org}
}

@article{ye2023dinoiser,
  title        = {{DINOISER}: Diffused Conditional Sequence Learning by Manipulating Noises},
  author       = {Ye, Jiasheng and Zheng, Zaixiang and Bao, Yu and Qian, Lihua and Wang, Mingxuan},
  journal      = {Transactions of the Association for Computational Linguistics},
  year         = {2024},
  url          = {https://arxiv.org/abs/2302.10025}
}

@inproceedings{lee2026flow,
  title        = {Flow Map Language Models: One-step Language Modeling via Continuous Denoising},
  author       = {Lee, Chanhyuk and Yoo, Jaehoon and Agarwal, Manan and Shah, Sheel and Huang, Jerry and Raghunathan, Aditi and Hong, Seunghoon and Boffi, Nicholas M. and Kim, Jinwoo},
  booktitle    = {Advances in Neural Information Processing Systems 39: Annual Conference on
                  Neural Information Processing Systems 2026, NeurIPS 2026},
  year         = {2026},
  url          = {https://arxiv.org/abs/2602.16813}
}

@inproceedings{DBLP:journals/corr/abs-2602-12233,
  author       = {Daan Roos and
                  Oscar Davis and
                  Floor Eijkelboom and
                  Michael M. Bronstein and
                  Max Welling and
                  {\.I}smail {\.I}lkan Ceylan and
                  Luca Ambrogioni and
                  Jan{-}Willem van de Meent},
  title        = {Categorical Flow Maps},
  booktitle    = {Forty-third International Conference on Machine Learning, {ICML} 2026},
  series       = {Proceedings of Machine Learning Research},
  volume       = {306},
  pages        = {105618--105642},
  publisher    = {{PMLR}},
  year         = {2026},
  url          = {https://proceedings.mlr.press/v306/roos26a.html}
}

@article{DBLP:journals/corr/abs-2604-09784,
  author       = {Peter Potaptchik and
                  Jason Yim and
                  Adhi Saravanan and
                  Peter Holderrieth and
                  Eric Vanden{-}Eijnden and
                  Michael S. Albergo},
  title        = {Discrete Flow Maps},
  journal      = {CoRR},
  volume       = {abs/2604.09784},
  year         = {2026},
  url          = {https://arxiv.org/abs/2604.09784},
  eprinttype   = {arXiv},
  eprint       = {2604.09784},
  bibsource    = {dblp computer science bibliography, https://dblp.org}
}

@inproceedings{DBLP:conf/eacl/HuWAMFOS24,
  author       = {Vincent Tao Hu and
                  Di Wu and
                  Yuki Markus Asano and
                  Pascal Mettes and
                  Basura Fernando and
                  Bj{\"{o}}rn Ommer and
                  Cees Snoek},
  title        = {Flow Matching for Conditional Text Generation in a Few Sampling Steps},
  booktitle    = {Proceedings of the 18th Conference of the European Chapter of the
                  Association for Computational Linguistics, {EACL} 2024 -
                  Volume 2: Short Papers},
  pages        = {380--392},
  publisher    = {Association for Computational Linguistics},
  year         = {2024},
  url          = {https://aclanthology.org/2024.eacl-short.33/},
  bibsource    = {dblp computer science bibliography, https://dblp.org}
}

@inproceedings{DBLP:conf/eacl/MahabadiITHBPC24,
  author       = {Rabeeh {Karimi Mahabadi} and
                  Hamish Ivison and
                  Jaesung Tae and
                  James Henderson and
                  Iz Beltagy and
                  Matthew E. Peters and
                  Arman Cohan},
  title        = {{TESS}: Text-to-Text Self-Conditioned Simplex Diffusion},
  booktitle    = {Proceedings of the 18th Conference of the European Chapter of the
                  Association for Computational Linguistics, {EACL} 2024 -
                  Volume 1: Long Papers},
  pages        = {2347--2361},
  publisher    = {Association for Computational Linguistics},
  year         = {2024},
  url          = {https://aclanthology.org/2024.eacl-long.144/},
  bibsource    = {dblp computer science bibliography, https://dblp.org}
}

@inproceedings{DBLP:conf/acl/TaeIKC25,
  author       = {Jaesung Tae and
                  Hamish Ivison and
                  Sachin Kumar and
                  Arman Cohan},
  title        = {{TESS} 2: {A} Large-Scale Generalist Diffusion Language Model},
  booktitle    = {Proceedings of the 63rd Annual Meeting of the Association for Computational
                  Linguistics (Volume 1: Long Papers), {ACL} 2025},
  pages        = {21171--21188},
  publisher    = {Association for Computational Linguistics},
  year         = {2025},
  url          = {https://aclanthology.org/2025.acl-long.1029/},
  bibsource    = {dblp computer science bibliography, https://dblp.org}
}

@inproceedings{DBLP:conf/nips/JoH25,
  author       = {Jaehyeong Jo and
                  Sung Ju Hwang},
  title        = {Continuous Diffusion Model for Language Modeling},
  booktitle    = {Advances in Neural Information Processing Systems 38: Annual Conference on
                  Neural Information Processing Systems 2025, NeurIPS 2025},
  year         = {2025},
  url          = {http://papers.nips.cc/paper\_files/paper/2025/hash/8c4046a93293442e88f54e19a5849d68-Abstract-Conference.html},
  bibsource    = {dblp computer science bibliography, https://dblp.org}
}

@inproceedings{DBLP:conf/nips/ChengL0L24,
  author       = {Chaoran Cheng and
                  Jiahan Li and
                  Jian Peng and
                  Ge Liu},
  title        = {Categorical Flow Matching on Statistical Manifolds},
  booktitle    = {Advances in Neural Information Processing Systems 37: Annual Conference on
                  Neural Information Processing Systems 2024, NeurIPS 2024},
  year         = {2024},
  url          = {http://papers.nips.cc/paper\_files/paper/2024/hash/62a58f2130894e44e8a272c563a2c6f1-Abstract-Conference.html},
  bibsource    = {dblp computer science bibliography, https://dblp.org}
}

@article{DBLP:journals/corr/abs-2504-10283,
  author       = {Chaoran Cheng and
                  Jiahan Li and
                  Jiajun Fan and
                  Ge Liu},
  title        = {{\(\alpha\)-Flow}: {A} Unified Framework for Continuous-State Discrete
                  Flow Matching Models},
  journal      = {CoRR},
  volume       = {abs/2504.10283},
  year         = {2025},
  url          = {https://arxiv.org/abs/2504.10283},
  eprinttype   = {arXiv},
  eprint       = {2504.10283},
  bibsource    = {dblp computer science bibliography, https://dblp.org}
}

@inproceedings{DBLP:conf/nips/0002G0ZSJ23,
  author       = {Yizhe Zhang and
                  Jiatao Gu and
                  Zhuofeng Wu and
                  Shuangfei Zhai and
                  Joshua M. Susskind and
                  Navdeep Jaitly},
  title        = {{PLANNER}: Generating Diversified Paragraph via Latent Language Diffusion
                  Model},
  booktitle    = {Advances in Neural Information Processing Systems 36: Annual Conference on
                  Neural Information Processing Systems 2023, NeurIPS 2023},
  year         = {2023},
  url          = {http://papers.nips.cc/paper\_files/paper/2023/hash/fdba5e0a9b57fce03e89cc0cad0a24e9-Abstract-Conference.html},
  bibsource    = {dblp computer science bibliography, https://dblp.org}
}

@inproceedings{DBLP:conf/aaai/ShabalinMCLKBMM25,
  author       = {Alexander Shabalin and
                  Viacheslav Meshchaninov and
                  Egor Chimbulatov and
                  Vladislav Lapikov and
                  Roman Kim and
                  Grigory Bartosh and
                  Dmitry Molchanov and
                  Sergey Markov and
                  Dmitry P. Vetrov},
  title        = {{TEncDM}: Understanding the Properties of the Diffusion Model in the
                  Space of Language Model Encodings},
  booktitle    = {Thirty-Ninth {AAAI} Conference on Artificial Intelligence, Thirty-Seventh
                  Conference on Innovative Applications of Artificial
                  Intelligence, Fifteenth Symposium on Educational Advances
                  in Artificial Intelligence, {AAAI} 2025},
  pages        = {25110--25118},
  publisher    = {{AAAI} Press},
  year         = {2025},
  url          = {https://ojs.aaai.org/index.php/AAAI/article/view/34696},
  bibsource    = {dblp computer science bibliography, https://dblp.org}
}

@inproceedings{DBLP:conf/nips/MeshchaninovCSA25,
  author       = {Viacheslav Meshchaninov and
                  Egor Chimbulatov and
                  Alexander Shabalin and
                  Aleksandr Abramov and
                  Dmitry P. Vetrov},
  title        = {Cosmos: Compressed and Smooth Latent Space for Text Diffusion Modeling},
  booktitle    = {Advances in Neural Information Processing Systems 38: Annual Conference on
                  Neural Information Processing Systems 2025, NeurIPS 2025},
  year         = {2025},
  url          = {http://papers.nips.cc/paper\_files/paper/2025/hash/1506930fb75c82246e4d8648a66e4b27-Abstract-Conference.html},
  bibsource    = {dblp computer science bibliography, https://dblp.org}
}

@article{DBLP:journals/corr/abs-2603-02547,
  author       = {Junzhe Shen and
                  Jieru Zhao and
                  Ziwei He and
                  Zhouhan Lin},
  title        = {{CoDAR}: Continuous Diffusion Language Models are More Powerful Than
                  You Think},
  journal      = {CoRR},
  volume       = {abs/2603.02547},
  year         = {2026},
  url          = {https://arxiv.org/abs/2603.02547},
  eprinttype   = {arXiv},
  eprint       = {2603.02547},
  bibsource    = {dblp computer science bibliography, https://dblp.org}
}

@article{DBLP:journals/corr/abs-2605-07933,
  author       = {Viacheslav Meshchaninov and
                  Alexander Shabalin and
                  Egor Chimbulatov and
                  Nikita Gushchin and
                  Ilya Koziev and
                  Alexander Korotin and
                  Dmitry P. Vetrov},
  title        = {How to Train Your Latent Diffusion Language Model Jointly With the
                  Latent Space},
  journal      = {CoRR},
  volume       = {abs/2605.07933},
  year         = {2026},
  url          = {https://arxiv.org/abs/2605.07933},
  eprinttype   = {arXiv},
  eprint       = {2605.07933},
  bibsource    = {dblp computer science bibliography, https://dblp.org}
}

@article{DBLP:journals/corr/abs-2605-06548,
  author       = {Hongcan Guo and
                  Qinyu Zhao and
                  Yian Zhao and
                  Shen Nie and
                  Rui Zhu and
                  Qiushan Guo and
                  Feng Wang and
                  Tao Yang and
                  Hengshuang Zhao and
                  Guoqiang Wei and
                  Yan Zeng},
  title        = {Continuous Latent Diffusion Language Model},
  journal      = {CoRR},
  volume       = {abs/2605.06548},
  year         = {2026},
  url          = {https://arxiv.org/abs/2605.06548},
  eprinttype   = {arXiv},
  eprint       = {2605.06548},
  bibsource    = {dblp computer science bibliography, https://dblp.org}
}

@inproceedings{DBLP:conf/nips/ShiHWDT24,
  author       = {Jiaxin Shi and
                  Kehang Han and
                  Zhe Wang and
                  Arnaud Doucet and
                  Michalis K. Titsias},
  title        = {Simplified and Generalized Masked Diffusion for Discrete Data},
  booktitle    = {Advances in Neural Information Processing Systems 37: Annual Conference on
                  Neural Information Processing Systems 2024, NeurIPS 2024},
  year         = {2024},
  url          = {http://papers.nips.cc/paper\_files/paper/2024/hash/bad233b9849f019aead5e5cc60cef70f-Abstract-Conference.html},
  bibsource    = {dblp computer science bibliography, https://dblp.org}
}

@inproceedings{DBLP:conf/iclr/ShaulGH0SHKLC25,
  author       = {Neta Shaul and
                  Itai Gat and
                  Marton Havasi and
                  Daniel Severo and
                  Anuroop Sriram and
                  Peter Holderrieth and
                  Brian Karrer and
                  Yaron Lipman and
                  Ricky T. Q. Chen},
  title        = {Flow Matching with General Discrete Paths: {A} Kinetic-Optimal Perspective},
  booktitle    = {The Thirteenth International Conference on Learning Representations, {ICLR}
                  2025},
  publisher    = {OpenReview.net},
  year         = {2025},
  url          = {https://openreview.net/forum?id=tcvMzR2NrP},
  bibsource    = {dblp computer science bibliography, https://dblp.org}
}

@article{DBLP:journals/corr/abs-2508-15487,
  author       = {Jiacheng Ye and
                  Zhihui Xie and
                  Lin Zheng and
                  Jiahui Gao and
                  Zirui Wu and
                  Xin Jiang and
                  Zhenguo Li and
                  Lingpeng Kong},
  title        = {{Dream 7B}: Diffusion Large Language Models},
  journal      = {CoRR},
  volume       = {abs/2508.15487},
  year         = {2025},
  url          = {https://arxiv.org/abs/2508.15487},
  eprinttype   = {arXiv},
  eprint       = {2508.15487},
  bibsource    = {dblp computer science bibliography, https://dblp.org}
}

@article{DBLP:journals/corr/abs-2512-15745,
  author       = {Tiwei Bie and
                  Maosong Cao and
                  Kun Chen and
                  Lun Du and
                  Mingliang Gong and
                  Zhuochen Gong and
                  Yanmei Gu and
                  Jiaqi Hu and
                  Zenan Huang and
                  Zhenzhong Lan and
                  Chengxi Li and
                  Chongxuan Li and
                  Jianguo Li and
                  Zehuan Li and
                  Huabin Liu and
                  Lin Liu and
                  Guoshan Lu and
                  Xiaocheng Lu and
                  Yuxin Ma and
                  Jianfeng Tan and
                  Lanning Wei and
                  Ji{-}Rong Wen and
                  Yipeng Xing and
                  Xiaolu Zhang and
                  Junbo Zhao and
                  Da Zheng and
                  Jun Zhou and
                  Junlin Zhou and
                  Zhanchao Zhou and
                  Liwang Zhu and
                  Yihong Zhuang},
  title        = {{LLaDA2.0}: Scaling Up Diffusion Language Models to {100B}},
  journal      = {CoRR},
  volume       = {abs/2512.15745},
  year         = {2025},
  url          = {https://arxiv.org/abs/2512.15745},
  eprinttype   = {arXiv},
  eprint       = {2512.15745},
  bibsource    = {dblp computer science bibliography, https://dblp.org}
}

@inproceedings{DBLP:journals/corr/abs-2506-20639,
  author       = {Shansan Gong and
                  Ruixiang Zhang and
                  Huangjie Zheng and
                  Jiatao Gu and
                  Navdeep Jaitly and
                  Lingpeng Kong and
                  Yizhe Zhang},
  title        = {{DiffuCoder}: Understanding and Improving Masked Diffusion Models for
                  Code Generation},
  booktitle    = {The Fourteenth International Conference on Learning Representations, {ICLR}
                  2026},
  publisher    = {OpenReview.net},
  year         = {2026},
  url          = {https://openreview.net/forum?id=58NA3unZj5}
}

@inproceedings{DBLP:conf/icml/SahooDGWCK25,
  author       = {Subham Sekhar Sahoo and
                  Justin Deschenaux and
                  Aaron Gokaslan and
                  Guanghan Wang and
                  Justin T. Chiu and
                  Volodymyr Kuleshov},
  title        = {The Diffusion Duality},
  booktitle    = {Forty-second International Conference on Machine Learning, {ICML} 2025},
  series       = {Proceedings of Machine Learning Research},
  volume       = {267},
  pages        = {52584--52619},
  publisher    = {{PMLR}},
  year         = {2025},
  url          = {https://proceedings.mlr.press/v267/sahoo25a.html},
  bibsource    = {dblp computer science bibliography, https://dblp.org}
}

@inproceedings{DBLP:journals/corr/abs-2602-15014,
  author       = {Subham Sekhar Sahoo and
                  Jean{-}Marie Lemercier and
                  Zhihan Yang and
                  Justin Deschenaux and
                  Jingyu Liu and
                  John Thickstun and
                  Ante Juki{\'c}},
  title        = {Scaling Beyond Masked Diffusion Language Models},
  booktitle    = {Forty-third International Conference on Machine Learning, {ICML} 2026},
  series       = {Proceedings of Machine Learning Research},
  volume       = {306},
  pages        = {106997--107014},
  publisher    = {{PMLR}},
  year         = {2026},
  url          = {https://proceedings.mlr.press/v306/sahoo26a.html}
}

@article{DBLP:journals/corr/abs-2606-19005,
  author       = {Mengyu Ye and
                  Keito Kudo and
                  Wataru Ikeda and
                  Ryosuke Matsuda and
                  Keisuke Sakaguchi and
                  Jun Suzuki},
  title        = {Sumi: Open Uniform Diffusion Language Model from Scratch},
  journal      = {CoRR},
  volume       = {abs/2606.19005},
  year         = {2026},
  url          = {https://arxiv.org/abs/2606.19005},
  eprinttype   = {arXiv},
  eprint       = {2606.19005},
  bibsource    = {dblp computer science bibliography, https://dblp.org}
}

@inproceedings{DBLP:conf/iclr/ArriolaGCYQHSK25,
  author       = {Marianne Arriola and
                  Aaron Gokaslan and
                  Justin T. Chiu and
                  Zhihan Yang and
                  Zhixuan Qi and
                  Jiaqi Han and
                  Subham Sekhar Sahoo and
                  Volodymyr Kuleshov},
  title        = {Block Diffusion: Interpolating Between Autoregressive and Diffusion
                  Language Models},
  booktitle    = {The Thirteenth International Conference on Learning Representations, {ICLR}
                  2025},
  publisher    = {OpenReview.net},
  year         = {2025},
  url          = {https://openreview.net/forum?id=tyEyYT267x},
  bibsource    = {dblp computer science bibliography, https://dblp.org}
}

@inproceedings{DBLP:conf/nips/WangSSK25,
  author       = {Guanghan Wang and
                  Yair Schiff and
                  Subham Sekhar Sahoo and
                  Volodymyr Kuleshov},
  title        = {Remasking Discrete Diffusion Models with Inference-Time Scaling},
  booktitle    = {Advances in Neural Information Processing Systems 38: Annual Conference on
                  Neural Information Processing Systems 2025, NeurIPS 2025},
  year         = {2025},
  url          = {http://papers.nips.cc/paper\_files/paper/2025/hash/d877ea0dd78bdbe54830670618c1de09-Abstract-Conference.html},
  bibsource    = {dblp computer science bibliography, https://dblp.org}
}

@inproceedings{DBLP:journals/corr/abs-2505-22618,
  author       = {Chengyue Wu and
                  Hao Zhang and
                  Shuchen Xue and
                  Zhijian Liu and
                  Shizhe Diao and
                  Ligeng Zhu and
                  Ping Luo and
                  Song Han and
                  Enze Xie},
  title        = {{Fast-dLLM}: Training-free Acceleration of Diffusion {LLM} by Enabling
                  {KV} Cache and Parallel Decoding},
  booktitle    = {The Fourteenth International Conference on Learning Representations, {ICLR}
                  2026},
  publisher    = {OpenReview.net},
  year         = {2026},
  url          = {https://openreview.net/forum?id=3Z3Is6hnOT}
}

@inproceedings{DBLP:conf/nips/Ben-HamuGSNK25,
  author       = {Heli Ben{-}Hamu and
                  Itai Gat and
                  Daniel Severo and
                  Niklas Nolte and
                  Brian Karrer},
  title        = {Accelerated Sampling from Masked Diffusion Models via Entropy Bounded
                  Unmasking},
  booktitle    = {Advances in Neural Information Processing Systems 38: Annual Conference on
                  Neural Information Processing Systems 2025, NeurIPS 2025},
  year         = {2025},
  url          = {http://papers.nips.cc/paper\_files/paper/2025/hash/510e22f4a2da5212a64f1591736e2eaf-Abstract-Conference.html},
  bibsource    = {dblp computer science bibliography, https://dblp.org}
}

@inproceedings{DBLP:conf/nips/YangTLZSTW25,
  author       = {Ling Yang and
                  Ye Tian and
                  Bowen Li and
                  Xinchen Zhang and
                  Ke Shen and
                  Yunhai Tong and
                  Mengdi Wang},
  title        = {{MMaDA}: Multimodal Large Diffusion Language Models},
  booktitle    = {Advances in Neural Information Processing Systems 38: Annual Conference on
                  Neural Information Processing Systems 2025, NeurIPS 2025},
  year         = {2025},
  url          = {http://papers.nips.cc/paper\_files/paper/2025/hash/caa934a507a952698d54efb24845fc4b-Abstract-Conference.html},
  bibsource    = {dblp computer science bibliography, https://dblp.org}
}

@inproceedings{eijkelboom2024variational,
  author       = {Floor Eijkelboom and
                  Grigory Bartosh and
                  Christian A. Naesseth and
                  Max Welling and
                  Jan{-}Willem van de Meent},
  title        = {Variational Flow Matching for Graph Generation},
  booktitle    = {Advances in Neural Information Processing Systems 37: Annual Conference on
                  Neural Information Processing Systems 2024, NeurIPS 2024},
  year         = {2024},
  url          = {https://proceedings.neurips.cc/paper_files/paper/2024/hash/15b780350b302a1bf9a3bd273f5c15a4-Abstract-Conference.html}
}

@inproceedings{peebles2023dit,
  author       = {William Peebles and
                  Saining Xie},
  title        = {Scalable Diffusion Models with Transformers},
  booktitle    = {{IEEE/CVF} International Conference on Computer Vision, {ICCV} 2023},
  pages        = {4172--4182},
  publisher    = {{IEEE}},
  year         = {2023},
  doi          = {10.1109/ICCV51070.2023.00387}
}

@inproceedings{zheng2026rae,
  author       = {Boyang Zheng and
                  Nanye Ma and
                  Shengbang Tong and
                  Saining Xie},
  title        = {Diffusion Transformers with Representation Autoencoders},
  booktitle    = {The Fourteenth International Conference on Learning Representations, {ICLR}
                  2026},
  publisher    = {OpenReview.net},
  year         = {2026},
  url          = {https://openreview.net/forum?id=0u1LigJaab}
}

@inproceedings{li2026jit,
  author       = {Tianhong Li and
                  Kaiming He},
  title        = {Back to Basics: Let Denoising Generative Models Denoise},
  booktitle    = {{IEEE/CVF} Conference on Computer Vision and Pattern Recognition, {CVPR}
                  2026},
  pages        = {36115--36125},
  publisher    = {{IEEE}},
  year         = {2026},
  url          = {https://openaccess.thecvf.com/content/CVPR2026/html/Li_Back_to_Basics_Let_Denoising_Generative_Models_Denoise_CVPR_2026_paper.html}
}
\bibliographystyle{unsrtnat}

\clearpage
\appendix

\section*{Appendix Contents}
\startcontents[appendix]
\begingroup
\definecolor{tocgray}{gray}{0.40}
\hypersetup{linkcolor=.}
\setlength{\parskip}{0pt}
\titlecontents{section}[1.7em]{\addvspace{0.8em}\bfseries}{\contentslabel{1.7em}}{\hspace*{-1.7em}}{\hfill\contentspage}
\titlecontents{subsection}[4.2em]{\addvspace{0.3em}}{\contentslabel{2.5em}}{\hspace*{-2.5em}}{\,{\color{tocgray}\titlerule*[0.7pc]{.}}\contentspage}
\titlecontents*{paragraph}[4.2em]{\footnotesize\color{tocgray}\rightskip=1.6em plus 6em\relax\hyphenpenalty=10000 \exhyphenpenalty=10000 }{}{}{}[\enspace]
\edef\savedtocdepth{\the\value{tocdepth}}%
\printcontents[appendix]{}{1}{\setcounter{tocdepth}{4}}
\setcounter{tocdepth}{\savedtocdepth}
\par\vspace{0.9em}\noindent{\color{tocgray!45}\rule{\linewidth}{0.4pt}}\par
\endgroup

\section{Theory and Proofs}
\label{app:theory}

\paragraph{Indexed Notation.}
Let $\mathcal V=\{1,\ldots,V\}$ be a vocabulary of size $V$, $L$ the sequence length, and $d$ the flow embedding dimension. A token sequence $\vw=[w^{(1)},\ldots,w^{(L)}]\in\mathcal V^L$ is drawn from the data distribution $\mathcal D$. The embedding map $E:\mathcal V\to\R^d$ acts positionwise to give the clean state $X_1=E(\vw)\in\R^{d\times L}$. For a state $Z\in\R^{d\times L}$, its $\ell$th column is $Z^{(\ell)}\in\R^d$, where $\ell\in\{1,\ldots,L\}$; thus $X_1^{(\ell)}=E(w^{(\ell)})$. Parenthesized superscripts index sequence positions for both tokens and states. Capital letters denote full-sequence states: $X_t=tX_1+(1-t)X_0$, $t\in[0,1]$, is the training interpolation, $Z_t$ a sampling trajectory, and $Z$ a generic state at which a posterior or vector field is evaluated. Lowercase $z\in\R^d$ denotes a single-position state. The noise columns $X_0^{(\ell)}$ are independent and identically distributed with density $q_0$ on $\R^d$, and independent of $\vw$; we also write $q_0$ for their joint density on $\R^{d\times L}$. Our experiments use the standard Gaussian $q_0=\mathcal N(0,I_d)$. State norms are Frobenius norms, and single-position embedding norms are Euclidean norms.

\paragraph{Sequence Laws and Token Posteriors.}
We overload $q_1$ to denote both the data distribution on token sequences, $q_1(\vw)=\mathcal D(\vw)$, and the induced distribution of the embedded clean state $X_1=E(\vw)$. Let $r_1(\vw)$ be a reference sequence distribution. The shared corruption likelihood and its positionwise factors are
\[
q_{t\mid1}(Z\mid\vw)
=\prod_{\ell=1}^L q_{t\mid1}^{(\ell)}(Z^{(\ell)}\mid w^{(\ell)}),
\qquad
q_{t\mid1}^{(\ell)}(z\mid w)
=\frac{1}{(1-t)^d}\,q_0\!\left(\frac{z-tE(w)}{1-t}\right),\quad t<1,
\]
since $X_t^{(\ell)}=tE(w^{(\ell)})+(1-t)X_0^{(\ell)}$ given the clean token.
For $a\in\{q,r\}$, define the noisy-state density, full-sequence posterior, and positionwise token posterior by
\[
\begin{aligned}
a_t(Z)&\coloneqq\sum_{\vu\in\mathcal V^L}a_1(\vu)q_{t\mid1}(Z\mid\vu),\\
a_t(\vw\mid Z)&\coloneqq\frac{a_1(\vw)q_{t\mid1}(Z\mid\vw)}{a_t(Z)},\\
a_t^{(\ell)}(w\mid Z)&\coloneqq\sum_{\vu:\,u^{(\ell)}=w}a_t(\vu\mid Z).
\end{aligned}
\]
In particular, $q_t^{(\ell)}(w\mid Z)=\Pr(w^{(\ell)}=w\mid X_t=Z)$ under the data distribution, and $r_t^{(\ell)}$ is the same posterior under the reference distribution with the same corruption. The arguments distinguish noisy-state densities $a_t(Z)$ from token posteriors $a_t^{(\ell)}(w\mid Z)$. When the position index is suppressed, $q_t(w\mid Z_t)$ and $r_t(w\mid Z_t)$ denote the corresponding positionwise posteriors evaluated at $Z=Z_t$. The subscript $t\mid1$ is reserved for the corruption likelihood. Unless otherwise specified, sums over $\vu$ run over $\mathcal V^L$, and sums over individual tokens run over $\mathcal V$.

\paragraph{Model Outputs and Self-Conditioning.}
A model with parameters $\theta$ predicts a vocabulary distribution $p_\theta^{(\ell)}(\cdot\mid Z,t;\mC)$ at each position, conditioning on the full state $Z$ and, when used, the self-conditioning input $\mC\in\R^{d\times L}$. With singular logits,
\[
\begin{aligned}
p_\theta^{(\ell)}(\cdot\mid Z,t;\mC)
&=\operatorname{softmax}\!\left(\log r_t^{(\ell)}(\cdot\mid Z)+f_\theta^{(\ell)}(\cdot;Z,t;\mC)\right),\\
\widehat{X}_1^{(\ell)}(Z,t;\mC)
&\coloneqq\sum_{w\in\mathcal V}E(w)p_\theta^{(\ell)}(w\mid Z,t;\mC).
\end{aligned}
\]
Here $f_\theta^{(\ell)}$ is the learned residual logit, $\log r_t^{(\ell)}+f_\theta^{(\ell)}$ are the full logits $l_\theta$ of Section~\ref{sec:self_conditioning}, and $\widehat{X}_1^{(\ell)}$ is the $\ell$th column of the clean prediction $\widehat{X}_1$. With distilled self-conditioning guidance, the network also receives the guidance scale $\omega$ (Appendix~\ref{app:sc_training_guidance}). We omit $\mC$ and $\omega$ when they are unused or implicit.

\subsection{From Velocity to Conditional Marginals}
\label{app:velocity_marginals}

The optimal rectified-flow velocity is the conditional mean of the interpolation velocity \citep{liu2022flow, lipman2022flow}:
\[
\vv^*(Z,t)=\E[X_1-X_0\mid X_t=Z].
\]
Since $X_1-X_0=(X_1-X_t)/(1-t)$ for $t<1$, we obtain
\begin{equation}\label{eq:v_decompose}
\begin{aligned}
    \vv^*(Z,t)
    &=\E\!\left[\frac{X_1-X_t}{1-t}\,\middle|\,X_t=Z\right]\\
    &=\frac{\E[X_1\mid X_t=Z]-Z}{1-t}.
\end{aligned}
\end{equation}
The $\ell$th column of the exact clean conditional mean is
\[
\E[X_1^{(\ell)}\mid X_t=Z]
=\sum_{w\in\mathcal V}E(w)q_t^{(\ell)}(w\mid Z).
\]
Thus the positionwise token posteriors suffice to determine every column of the velocity, even when the full-sequence posterior does not factorize. Replacing $q_t^{(\ell)}$ by $p_\theta^{(\ell)}$ gives the modeled clean prediction $\widehat{X}_1$ and velocity
\[
\vv_\theta(Z,t;\mC)
=\frac{\widehat{X}_1(Z,t;\mC)-Z}{1-t}.
\]
The token-averaged cross-entropy objective is
\[
\Ls(\theta)=\E_{\vw\sim\mathcal D,\,X_0\sim q_0,\,t\sim\mathcal T}\!\left[
-\frac1L\sum_{\ell=1}^L\log p_\theta^{(\ell)}(w^{(\ell)}\mid X_t,t)\right].
\]
The expectation averages over data sequences, independent noise, and the training-time distribution. Conditioned on $X_t=Z$, the expected loss at position $\ell$ is the cross-entropy from $q_t^{(\ell)}(\cdot\mid Z)$ to $p_\theta^{(\ell)}(\cdot\mid Z,t)$, which is minimized at $p_\theta^{(\ell)}=q_t^{(\ell)}$; there $\widehat X_1=\E[X_1\mid X_t=Z]$ and $\vv_\theta=\vv^*$. With self-conditioning, each training sequence uses $\mC=\operatorname{sg}[\widehat{X}_1(X_t,t;\mathbf0)]$ with probability $1/2$ and $\mC=\mathbf0$ otherwise, where $\operatorname{sg}$ blocks gradients through the preliminary prediction. The model probabilities then also condition on $\mC$, the expectation includes this choice, and the target at each position remains $w^{(\ell)}$. Since $\mC$ is a function of $X_t$ and a draw independent of $\vw$, the minimizer is unchanged.

\subsection{Singular Logits}
\label{app:singular_logits}

\begin{theorem}
Suppose $r_1(\vw)>0$ and $\kappa^{-1}\leq q_1(\vw)/r_1(\vw)\leq\kappa$ for every $\vw\in\mathcal V^L$ and some $1\leq\kappa<\infty$. For each position $\ell$, the singular logits are
\[
\log r_t^{(\ell)}(w\mid Z)
=\log\frac{\sum_{\vu:\,u^{(\ell)}=w}r_1(\vu)q_{t\mid1}(Z\mid\vu)}
{\sum_{\vu}r_1(\vu)q_{t\mid1}(Z\mid\vu)}.
\]
Define the exact residual that the neural network $f_\theta^{(\ell)}$ approximates by
\[
f_*^{(\ell)}(w;Z,t)
\coloneqq\log q_t^{(\ell)}(w\mid Z)-\log r_t^{(\ell)}(w\mid Z).
\]
This residual is uniformly bounded: $\abs{f_*^{(\ell)}(w;Z,t)}\leq2\log\kappa$ for every position $\ell$, state $Z\in\R^{d\times L}$, token $w\in\mathcal V$, and time $t<1$.
\end{theorem}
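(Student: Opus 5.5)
The plan is to write both positionwise posteriors as ratios of the same kind of sum. Each ratio has the form (restricted sum)/(full sum), and we compare the $q$-sums to the $r$-sums termwise using the density-ratio hypothesis. Fix a position $\ell$, a state $Z$, a token $w$, and a time $t<1$. For $a\in\{q,r\}$ set
\[
N_a(w)\coloneqq\sum_{\vu:\,u^{(\ell)}=w}a_1(\vu)q_{t\mid1}(Z\mid\vu),
\qquad
D_a\coloneqq\sum_{\vu}a_1(\vu)q_{t\mid1}(Z\mid\vu),
\]
so that $a_t^{(\ell)}(w\mid Z)=N_a(w)/D_a$ by the definitions of the full-sequence posterior and its positionwise marginal. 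This also confirms the displayed formula for the singular logits. The key point is that both posteriors use the \emph{same} corruption likelihood $q_{t\mid1}(Z\mid\vu)\geq0$, and this likelihood is the only factor that depends on $Z$ and $t$. Hence all of the singular behavior as $t\to1$ cancels in the comparison.

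The next step is a termwise sandwich. Since $\kappa^{-1}r_1(\vu)\leq q_1(\vu)\leq\kappa\,r_1(\vu)$ for every $\vu$ and the weights $q_{t\mid1}(Z\mid\vu)$ are nonnegative, summing over the relevant index sets gives
\[
\kappa^{-1}N_r(w)\leq N_q(w)\leq\kappa N_r(w),
\qquad
\kappa^{-1}D_r\leq D_q\leq\kappa D_r .
\]
Dividing, we obtain $\kappa^{-2}\leq q_t^{(\ell)}(w\mid Z)/r_t^{(\ell)}(w\mid Z)\leq\kappa^{2}$. Taking logarithms yields $\abs{f_*^{(\ell)}(w;Z,t)}\leq2\log\kappa$. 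The factor $2$ appears because the numerator and denominator each contribute one factor of $\kappa$.

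The only step needing care is well-definedness: both logs must be finite. Therefore we must check that $N_r(w)>0$ and $D_r>0$ for every $Z$ and $t<1$. This holds whenever $q_0$ is strictly positive, as for the Gaussian: then $q_{t\mid1}(Z\mid\vu)>0$ for all $\vu$ when $t<1$, and $r_1(\vu)>0$ by hypothesis, so every sum is strictly positive. For a general $q_0$ with possible zeros, the sandwich shows that $N_q(w)=0$ exactly when $N_r(w)=0$. In that case both logits equal $-\infty$, and we would adopt the convention that the residual is $0$ there. The same argument shows that $D_q=0$ exactly when $D_r=0$, so the two posteriors are defined at the same states. I expect this degenerate-case bookkeeping to be the only mild obstacle; the core inequality is immediate.
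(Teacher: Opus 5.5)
Your proof is correct and follows the paper's argument exactly. Like the paper, you write each positionwise posterior as a restricted sum over a full sum with the shared corruption likelihood, sandwich numerator and denominator termwise by $\kappa^{\pm1}$, and divide to obtain $\kappa^{-2}\leq q_t^{(\ell)}(w\mid Z)/r_t^{(\ell)}(w\mid Z)\leq\kappa^{2}$; your check that the sums are strictly positive (automatic for Gaussian $q_0$) is a small well-definedness point that the paper leaves implicit.
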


\begin{proof}
By Bayes' rule, the positionwise data posterior is
\begin{align*}
q_t^{(\ell)}(w\mid Z)
&=\sum_{\vu:\,u^{(\ell)}=w}q_t(\vu\mid Z) \\
&=\frac{\sum_{\vu:\,u^{(\ell)}=w}q_1(\vu)q_{t\mid1}(Z\mid\vu)}
{\sum_{\vu}q_1(\vu)q_{t\mid1}(Z\mid\vu)}.
\end{align*}
With the analogous expression for $r_t^{(\ell)}$, the ratio of the two posteriors is
\begin{align*}
\frac{q_t^{(\ell)}(w\mid Z)}{r_t^{(\ell)}(w\mid Z)}
&=\frac{\sum_{\vu:\,u^{(\ell)}=w}q_1(\vu)q_{t\mid1}(Z\mid\vu)}
{\sum_{\vu:\,u^{(\ell)}=w}r_1(\vu)q_{t\mid1}(Z\mid\vu)} \\
&\quad\cdot\frac{\sum_{\vu}r_1(\vu)q_{t\mid1}(Z\mid\vu)}
{\sum_{\vu}q_1(\vu)q_{t\mid1}(Z\mid\vu)}.
\end{align*}
Since $\kappa^{-1}r_1(\vu)\leq q_1(\vu)\leq\kappa r_1(\vu)$, summing over sequences with $u^{(\ell)}=w$ yields
\begin{align*}
\kappa^{-1}\!\sum_{\vu:\,u^{(\ell)}=w}r_1(\vu)q_{t\mid1}(Z\mid\vu)
&\leq\sum_{\vu:\,u^{(\ell)}=w}q_1(\vu)q_{t\mid1}(Z\mid\vu) \\
&\leq\kappa\!\sum_{\vu:\,u^{(\ell)}=w}r_1(\vu)q_{t\mid1}(Z\mid\vu).
\end{align*}
Thus the first factor lies in $[\kappa^{-1},\kappa]$. Summing over all sequences gives the same bounds for the second factor. Therefore,
\[
\kappa^{-2}\leq
\frac{q_t^{(\ell)}(w\mid Z)}{r_t^{(\ell)}(w\mid Z)}
\leq\kappa^2,
\qquad
\abs{f_*^{(\ell)}(w;Z,t)}\leq2\log\kappa.
\]
\end{proof}

\begin{lemma}
Suppose the reference distribution factorizes as $r_1(\vw)=\prod_{\ell=1}^L r_1^{(\ell)}(w^{(\ell)})$, where $r_1^{(\ell)}$ is the reference token distribution at position $\ell$. Then the singular logits can also be computed positionwise as
\[
\log r_t^{(\ell)}(w\mid Z)
=\log\frac{r_1^{(\ell)}(w)q_{t\mid1}^{(\ell)}(Z^{(\ell)}\mid w)}
{\sum_{u\in\mathcal V}r_1^{(\ell)}(u)q_{t\mid1}^{(\ell)}(Z^{(\ell)}\mid u)}.
\]
For a uniform reference, this becomes
\[
\log r_t^{(\ell)}(w\mid Z)
=\log q_{t\mid1}^{(\ell)}(Z^{(\ell)}\mid w)
-\log\sum_{u\in\mathcal V}q_{t\mid1}^{(\ell)}(Z^{(\ell)}\mid u).
\]
\end{lemma}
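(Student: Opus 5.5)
Starting from the sequence-level definition in the preceding theorem, we write
\[
r_t^{(\ell)}(w\mid Z)=\frac{\sum_{\vu:\,u^{(\ell)}=w}r_1(\vu)q_{t\mid1}(Z\mid\vu)}{\sum_{\vu}r_1(\vu)q_{t\mid1}(Z\mid\vu)}.
\]
Then we substitute both product forms, $r_1(\vu)=\prod_m r_1^{(m)}(u^{(m)})$ and $q_{t\mid1}(Z\mid\vu)=\prod_m q_{t\mid1}^{(m)}(Z^{(m)}\mid u^{(m)})$. Each summand becomes a product over positions of $g_m(u^{(m)})\coloneqq r_1^{(m)}(u^{(m)})q_{t\mid1}^{(m)}(Z^{(m)}\mid u^{(m)})$.

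The key step is to exchange sum and product. Since $\vu$ ranges over the Cartesian product $\mathcal V^L$, the denominator is $\sum_{\vu}\prod_m g_m(u^{(m)})=\prod_m\sum_{u\in\mathcal V}g_m(u)$. In the numerator, the coordinate $u^{(\ell)}$ is pinned to $w$ and the others range freely, so it equals $g_\ell(w)\prod_{m\neq\ell}\sum_{u}g_m(u)$.

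The factors for $m\neq\ell$ cancel. This leaves $g_\ell(w)/\sum_u g_\ell(u)$, which is the claimed positionwise formula, and taking logarithms gives the statement. The cancellation needs every $\sum_u g_m(u)>0$. This holds whenever the reference has positive mass on some token at each position: the theorem's assumption $r_1(\vw)>0$ gives this, as would a uniform reference. We also need $q_{t\mid1}^{(m)}>0$, which holds for Gaussian $q_0$ and $t<1$. Where a particular $g_\ell(w)$ vanishes, the logit is $-\infty$ on both sides, so the identity is still consistent.

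For the uniform reference, $r_1^{(\ell)}(u)=1/V$ is constant and cancels between numerator and denominator, giving the second display. The Gaussian form in the main text then follows by plugging in $q_{t\mid1}^{(\ell)}(z\mid w)=(1-t)^{-d}q_0\bigl((z-tE(w))/(1-t)\bigr)$ and dropping the $w$-independent constant inside the softmax.

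The only mildly delicate point is the bookkeeping in the sum–product interchange, namely that fixing one coordinate still leaves a full product space over the others. There is no analytic obstacle.
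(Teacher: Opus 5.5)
Your proposal is correct and follows the paper's proof essentially step for step: you substitute both factorizations, factor the numerator and denominator sums into products over positions, cancel the $m\neq\ell$ factors, and note that $1/V$ cancels for the uniform reference. Your remark on positivity of the per-position normalizers is a harmless refinement that the paper leaves implicit.
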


\begin{proof}
Using the factorization of the reference and corruption, the numerator of the reference token posterior is
\begin{align*}
&\sum_{\vu:\,u^{(\ell)}=w}r_1(\vu)q_{t\mid1}(Z\mid\vu) \\
&\quad=\sum_{\vu:\,u^{(\ell)}=w}
  \prod_{m=1}^L r_1^{(m)}(u^{(m)})q_{t\mid1}^{(m)}(Z^{(m)}\mid u^{(m)}) \\
&\quad=r_1^{(\ell)}(w)q_{t\mid1}^{(\ell)}(Z^{(\ell)}\mid w)
  \prod_{m\neq\ell}\left[\sum_{u\in\mathcal V}r_1^{(m)}(u)q_{t\mid1}^{(m)}(Z^{(m)}\mid u)\right].
\end{align*}
Similarly, the denominator is
\begin{align*}
\sum_{\vu}r_1(\vu)q_{t\mid1}(Z\mid\vu)
&=\prod_{m=1}^L\left[\sum_{u\in\mathcal V}r_1^{(m)}(u)q_{t\mid1}^{(m)}(Z^{(m)}\mid u)\right].
\end{align*}
Canceling the factors at positions $m\neq\ell$ and taking the logarithm gives the first expression. For a uniform reference, $r_1^{(\ell)}(u)=1/V$ for all positions and tokens, and this factor also cancels.
\end{proof}

For standard Gaussian noise, $q_{t\mid1}^{(\ell)}(z\mid w)\propto\exp\!\bigl(-\norm{z-tE(w)}^2/(2(1-t)^2)\bigr)$ with a constant shared across tokens, which cancels in the first expression and yields Eq.~\eqref{eq:singular_logits_gaussian}. Our experiments use the unigram reference $r_1^{(\ell)}(w)=\pi(w)$ at every position, where $\pi(w)$ is the corpus frequency of token $w$. Only the reference posterior reduces to a function of $Z^{(\ell)}$; the data posterior and the learned residual still depend on the full state $Z$.

\subsection{From Token Commitment to Endpoint Convergence}
\label{app:endpoint_convergence}

We first show that a bounded residual preserves the commitment of the reference posterior, and then that commitment of the modeled probabilities implies convergence of the state. Fix a position $\ell$ and, along a trajectory $Z_t$, write
\[
p_t^{(\ell)}(w)=p_\theta^{(\ell)}(w\mid Z_t,t),
\quad
r_t^{(\ell)}(w)=r_t^{(\ell)}(w\mid Z_t),
\quad
f_t^{(\ell)}(w)=f_\theta^{(\ell)}(w;Z_t,t).
\]
Let $w^{(\ell)\star}\in\mathcal V$ be the token selected in the limit at position $\ell$. Suppose $r_t^{(\ell)}(w^{(\ell)\star})\to1$ and $|f_t^{(\ell)}(w)|\leq M$ uniformly in $w$ and $t<1$. Since $p_t^{(\ell)}(w)\propto r_t^{(\ell)}(w)e^{f_t^{(\ell)}(w)}$,
\begin{align*}
1-p_t^{(\ell)}(w^{(\ell)\star})
&=\frac{\sum_{w\neq w^{(\ell)\star}}r_t^{(\ell)}(w)e^{f_t^{(\ell)}(w)}}
{\sum_{u\in\mathcal V}r_t^{(\ell)}(u)e^{f_t^{(\ell)}(u)}} \\
&\leq\frac{e^M\sum_{w\neq w^{(\ell)\star}}r_t^{(\ell)}(w)}
{e^{-M}\sum_{u\in\mathcal V}r_t^{(\ell)}(u)} \\
&=e^{2M}\bigl(1-r_t^{(\ell)}(w^{(\ell)\star})\bigr)\longrightarrow0.
\end{align*}
Thus, with any uniformly bounded residual, the modeled probabilities converge to the same one-hot vector as the reference posterior.

\begin{proposition}
Consider the flow ODE $\dot Z_t=\vv_\theta(Z_t,t)$, with $\vv_\theta$ computed from the modeled probabilities $p_\theta$. If each position's probability vector $p_t^{(\ell)}$ converges to a one-hot vector selecting $w^{(\ell)\star}$ as $t\uparrow1$, then $Z_t^{(\ell)}\to E(w^{(\ell)\star})$.
\end{proposition}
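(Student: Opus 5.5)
The plan is to integrate the ODE explicitly as a linear ODE with a forcing term. Fix a position $\ell$ and write $\widehat x_t=\widehat X_1^{(\ell)}(Z_t,t)=\sum_w E(w)p_t^{(\ell)}(w)$. Set $e^\star=E(w^{(\ell)\star})$. The $\ell$th column of the ODE reads
\[
\dot Z_t^{(\ell)}=\frac{\widehat x_t-Z_t^{(\ell)}}{1-t},
\]
which is linear in $Z_t^{(\ell)}$ once $\widehat x_t$ is regarded as a given continuous forcing along the trajectory. We do not need to know how $\widehat x_t$ depends on the other columns; we only need that it converges.

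The first step is to show $\widehat x_t\to e^\star$. This follows because the vocabulary is finite and $p_t^{(\ell)}$ converges to the one-hot vector at $w^{(\ell)\star}$. Concretely,
\[
\norm{\widehat x_t-e^\star}=\Bigl\|\sum_w E(w)\bigl(p_t^{(\ell)}(w)-\1[w=w^{(\ell)\star}]\bigr)\Bigr\|\le 2\max_w\norm{E(w)}\,\bigl(1-p_t^{(\ell)}(w^{(\ell)\star})\bigr)\to0 .
\]

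The second step is to solve the ODE. Set $D_t=Z_t^{(\ell)}-e^\star$, so that $\dot D_t=\bigl((\widehat x_t-e^\star)-D_t\bigr)/(1-t)$. Multiplying by the integrating factor $1/(1-t)$ gives $\frac{d}{dt}\bigl(D_t/(1-t)\bigr)=(\widehat x_t-e^\star)/(1-t)^2$. Integrating from a fixed $s<1$ yields
\[
D_t=\frac{1-t}{1-s}D_s+\int_s^t\frac{1-t}{(1-\tau)^2}\,(\widehat x_\tau-e^\star)\,\mathrm d\tau .
\]
This step assumes the trajectory exists on $[0,1)$, which holds because the velocity is locally Lipschitz in $Z$ for $t<1$ and grows at most linearly, since $\widehat x$ is bounded by $\max_w\norm{E(w)}$.

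The main obstacle is the final step: showing that this Volterra-type average of a vanishing forcing term itself vanishes. The key observation is that the kernel has unit mass, $\int_s^t\frac{1-t}{(1-\tau)^2}\mathrm d\tau=1-\frac{1-t}{1-s}\le1$. Given $\varepsilon>0$, choose $s$ with $\norm{\widehat x_\tau-e^\star}\le\varepsilon$ for all $\tau\ge s$. The integral term is then at most $\varepsilon$. The first term is at most $\frac{1-t}{1-s}\norm{D_s}$, which tends to $0$ as $t\uparrow1$. Hence $\limsup_{t\uparrow1}\norm{D_t}\le\varepsilon$ for every $\varepsilon>0$, so $Z_t^{(\ell)}\to E(w^{(\ell)\star})$.

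The argument applies to each position separately, which establishes the claim.
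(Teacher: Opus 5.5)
Your proof is correct and is essentially the paper's argument. The paper uses the same integrating factor, $\frac{\mathrm d}{\mathrm dt}\bigl(Z_t^{(\ell)}/(1-t)\bigr)=\widehat X_{1,t}^{(\ell)}/(1-t)^2$, integrates from $0$, and finishes with L'H\^opital's rule, whereas you subtract $E(w^{(\ell)\star})$ first and replace L'H\^opital by an explicit $\varepsilon$-estimate using the unit mass of the kernel $(1-t)/(1-\tau)^2$, which directly proves the same limit step; your bound on $\norm{\widehat x_t-e^\star}$ and your remark on existence of the trajectory on $[0,1)$ (which tacitly assumes the network is locally Lipschitz) make explicit two points the paper leaves implicit.
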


\begin{proof}
Let $\widehat{X}_{1,t}=\widehat{X}_1(Z_t,t)$ denote the modeled clean prediction along the trajectory. By the assumed probability convergence,
\[
\widehat{X}_{1,t}^{(\ell)}
=\sum_{w\in\mathcal V}E(w)p_t^{(\ell)}(w)
\longrightarrow E(w^{(\ell)\star}).
\]
Using $\dot Z_t^{(\ell)}=(\widehat{X}_{1,t}^{(\ell)}-Z_t^{(\ell)})/(1-t)$, we obtain
\begin{align*}
\frac{\mathrm d}{\mathrm dt}\left(\frac{Z_t^{(\ell)}}{1-t}\right)
&=\frac{(1-t)\dot Z_t^{(\ell)}+Z_t^{(\ell)}}{(1-t)^2} \\
&=\frac{\widehat{X}_{1,t}^{(\ell)}}{(1-t)^2}.
\end{align*}
Integrating from $0$ to $t$ gives
\[
\frac{Z_t^{(\ell)}}{1-t}
=Z_0^{(\ell)}+\int_0^t\frac{\widehat{X}_{1,u}^{(\ell)}}{(1-u)^2}\,\mathrm du.
\]
Since $(1-t)^{-1}\to\infty$, L'H\^opital's rule applied coordinatewise gives
\begin{align*}
\lim_{t\uparrow1}Z_t^{(\ell)}
&=\lim_{t\uparrow1}
  \frac{Z_0^{(\ell)}+\int_0^t \widehat{X}_{1,u}^{(\ell)}(1-u)^{-2}\,\mathrm du}{(1-t)^{-1}} \\
&=\lim_{t\uparrow1}\frac{\widehat{X}_{1,t}^{(\ell)}(1-t)^{-2}}{(1-t)^{-2}} \\
&=\lim_{t\uparrow1}\widehat{X}_{1,t}^{(\ell)}=E(w^{(\ell)\star}).
\end{align*}
\end{proof}

\subsection{Hellinger--SVD Embeddings}
\label{app:hellinger_svd}

\paragraph{Hellinger Geometry.}
Let $a,b\in\mathcal V$ index tokens and $c=(j,h)\in\{-1,+1\}\times\mathcal V$ index a neighboring context, where $h$ is the preceding token for $j=-1$ and the following token for $j=+1$. Let $N_{ac}$ count occurrences of token $a$ with context $c$, and let $N_a=\sum_cN_{ac}$. The context marginal $q_C(c)\propto\sum_aN_{ac}+\eta$ is normalized over contexts, with $\eta=10^{-2}$ ensuring full support. Each token's smoothed context distribution is $Q_a(c)=(N_{ac}+\tau q_C(c))/(N_a+\tau)$; smoothing toward the marginal stabilizes estimates for rare tokens, and held-out log loss of context prediction selects $\tau=2000$ (Appendix~\ref{app:embedding_details}). It is normalized:
\[
\sum_c Q_a(c)
=\frac{\sum_c N_{ac}+\tau\sum_c q_C(c)}{N_a+\tau}
=1.
\]
The squared Hellinger distance is
\begin{align*}
d_{\mathrm{Hel}}^2(Q_a,Q_b)
&\coloneqq\frac12\sum_c\left(\sqrt{Q_a(c)}-\sqrt{Q_b(c)}\right)^2 \\
&=1-\sum_c\sqrt{Q_a(c)Q_b(c)}.
\end{align*}
Therefore, with $H_{ac}=\sqrt{Q_a(c)}$,
\[
\norm{H_{a,:}-H_{b,:}}^2=2d_{\mathrm{Hel}}^2(Q_a,Q_b).
\]

\paragraph{Weighted Low-Dimensional Projection.}
Let $\rho_a=N_a/\sum_bN_b$ be token frequencies, $\bar H=\sum_a\rho_aH_{a,:}$ the frequency-weighted mean row, and $\mathbf1\in\R^V$ the all-ones column. Define the centered matrix $A=H-\mathbf1\bar H$ and its weighted version $B=WA$ with $W=\operatorname{diag}(\sqrt\rho)$. Let $P_d\in\R^{2V\times d}$ contain the leading $d$ right singular vectors of $B$. The projected embeddings are
\[
\widetilde E(a)=A_{a,:}P_d=(H_{a,:}-\bar H)P_d.
\]
The span of $P_d$ minimizes the frequency-weighted reconstruction error $\sum_a\rho_a\norm{A_{a,:}(I-P_dP_d^\top)}^2$ over $d$-dimensional subspaces; we compute it with randomized SVD. Since $P_d$ has orthonormal columns, projection changes the Hellinger geometry only through the discarded component:
\[
\norm{\widetilde E(a)-\widetilde E(b)}^2
=2d_{\mathrm{Hel}}^2(Q_a,Q_b)
-\norm{(H_{a,:}-H_{b,:})(I-P_dP_d^\top)}^2.
\]

\paragraph{Sparse Computation.}
Define
\[
\lambda_a=\sqrt{\frac{\tau}{N_a+\tau}},
\qquad
\nu_c=\sqrt{q_C(c)},
\qquad
S_{ac}=\frac{\sqrt{N_{ac}+\tau q_C(c)}-\sqrt{\tau q_C(c)}}{\sqrt{N_a+\tau}}.
\]
Then
\begin{align*}
H_{ac}
&=\frac{\sqrt{N_{ac}+\tau q_C(c)}}{\sqrt{N_a+\tau}} \\
&=S_{ac}+\frac{\sqrt{\tau q_C(c)}}{\sqrt{N_a+\tau}}
=S_{ac}+\lambda_a\nu_c,
\end{align*}
so $H=S+\boldsymbol\lambda\boldsymbol\nu^\top$. Since $S_{ac}=0$ whenever $N_{ac}=0$, $S$ has the sparsity pattern of the count matrix. The weighted mean is
\[
\bar H=\rho^\top S+(\rho^\top\boldsymbol\lambda)\boldsymbol\nu^\top.
\]
For $\mathbf x\in\R^{2V}$ and $\mathbf y\in\R^V$, the matrix products needed for randomized SVD are
\begin{align*}
B\mathbf x
&=W\left[S\mathbf x+\boldsymbol\lambda(\boldsymbol\nu^\top\mathbf x)
  -\mathbf1(\bar H\mathbf x)\right], \\
B^\top\mathbf y
&=S^\top W\mathbf y+\boldsymbol\nu(\boldsymbol\lambda^\top W\mathbf y)
  -\bar H^\top(\mathbf1^\top W\mathbf y).
\end{align*}
Finally,
\[
\widetilde E
=SP_d+\boldsymbol\lambda(\boldsymbol\nu^\top P_d)-\mathbf1(\bar H P_d).
\]
These expressions apply centering and frequency weighting without materializing the dense vocabulary-by-context matrix.

\paragraph{Final Normalization.}
Let $E^\dagger(a)$ denote the projected rows $\widetilde E(a)$ after perturbing exact duplicates. With the corpus unigram distribution $\pi$, define
\[
\mu_\pi=\sum_a\pi(a) E^\dagger(a),
\qquad
\sigma_\pi^2=\sum_a\pi(a)\norm{E^\dagger(a)-\mu_\pi}^2,
\qquad
E(a)=\frac{\sqrt d}{\sigma_\pi}\bigl(E^\dagger(a)-\mu_\pi\bigr).
\]
Then
\begin{align*}
\sum_a\pi(a) E(a)
&=\frac{\sqrt d}{\sigma_\pi}\left(\sum_a\pi(a) E^\dagger(a)-\mu_\pi\sum_a\pi(a)\right)=0, \\
\sum_a\pi(a)\norm{E(a)}^2
&=\frac{d}{\sigma_\pi^2}\sum_a\pi(a)\norm{E^\dagger(a)-\mu_\pi}^2=d.
\end{align*}
The weights $\rho$ determine the projection subspace, while $\pi$ determines the final centering and scale.

\section{Experimental Details}
\label{app:experimental_details}

\subsection{Data Preparation and Fixed Embeddings}
\label{app:embedding_details}

\paragraph{Corpus Split and Sequence Packing.}
We use the OpenWebText training split, reserving the last 100,000 documents for validation and training on the rest. Each document is tokenized with the GPT-2 tokenizer and followed by an EOS token, and documents are concatenated within each preprocessing batch. We divide the stream into blocks of 1,022 tokens, discard the incomplete remainder, and add BOS and EOS to obtain sequences of length 1,024; GPT-2 uses the same token ID for both. Token distributions are defined over the 50,257-token GPT-2 vocabulary, excluding padding entries.

\paragraph{Offline Statistics and Smoothing Selection.}
Embedding construction uses packed training windows sampled from contiguous chunks spread across the corpus, with a quarter of the chunks held out from count estimation for offline embedding evaluation. Adjacent-token pairs are counted within packed windows, including special tokens. The smoothing strength of the two-sided context distribution $Q_a(c)$ of Appendix~\ref{app:hellinger_svd} is selected by held-out log loss on validation windows, giving $\tau=2000$.

\paragraph{Default Hellinger--SVD Embedding.}
The default embedding follows Appendix~\ref{app:hellinger_svd}: we retain 32 components of a randomized SVD, perturb exactly duplicated rows with small Gaussian noise, and apply the unigram-weighted centering and global rescaling described there. The embedding is then frozen. Table~\ref{tab:embedding_configs} lists the dimension and endpoint $t_{\mathrm{end}}$ of each reported embedding.

\begin{table}[htbp]
    \centering
    \caption{\textbf{Fixed embedding configurations.} Each representation has unigram-weighted mean squared norm 32 and its own precomputed training-time distribution. $t_{\mathrm{end}}$ is the endpoint of the training-time distribution.}
    \label{tab:embedding_configs}
    \begin{tabular}{llrr}
        \toprule
        Embedding & Reported configuration & Dimension & $t_{\mathrm{end}}$ \\
        \midrule
        Hellinger--SVD & Default & 32 & 0.943 \\
        Gaussian & Small embedding ablation & 32 & 0.620 \\
        Scaled one-hot & Small embedding ablation & 50,257 & 0.557 \\
        \bottomrule
    \end{tabular}
\end{table}

\paragraph{Gaussian and One-Hot Controls.}
For the Gaussian control, we draw independent $\mathcal N(0,I_{32})$ rows, subtract their unigram-weighted mean, and rescale them to unigram-weighted mean squared norm 32. For the one-hot control, we use the uncentered embeddings $E(a)=\sqrt{32}\,e_a$, where $e_a$ is the vocabulary basis vector, matching the mean squared norm of the 32-dimensional representations. The one-hot input projection increases the model to approximately 215M parameters.

\paragraph{Precomputation of Training-Time Distributions.}
\label{app:training_time_schedule}
We build the information-arc-length schedule separately for each fixed representation at its final scale. The calculation involves a single position, so we suppress $(\ell)$ and write $X_0\in\R^d$ for the noise at that position. With $\gamma=t/(1-t)$, the rescaled Gaussian channel is $Y_\gamma=X_t/(1-t)=\gamma E(w)+X_0$, where $X_0\sim\mathcal N(0,I_d)$. For a token prior $\nu$, the analytic posterior is
\begin{equation}
    r_{\gamma,\nu}(w\mid Y_\gamma)
    =\frac{\nu(w)\exp\!\left(-\tfrac12\norm{Y_\gamma-\gamma E(w)}^2\right)}
    {\sum_{u\in\mathcal V}\nu(u)\exp\!\left(-\tfrac12\norm{Y_\gamma-\gamma E(u)}^2\right)},
    \label{eq:embedding_schedule_posterior}
\end{equation}
the positionwise reference posterior at $t=\gamma/(1+\gamma)$. We use the corpus unigram prior $\pi$ and a smoothed bigram prior $\nu_h$ conditioned on the preceding clean token $h$. On held-out adjacent-token pairs $(h,w)$ and shared Gaussian noise draws, we estimate the expected posterior log-probability gains over these priors:
\begin{equation}
\begin{aligned}
    R_{\mathrm c}(\gamma)
    &=\widehat{\mathbb E}\!\left[\log\frac{r_{\gamma,\nu_h}(w\mid Y_\gamma)}{\nu_h(w)}\right], \\
    R_{\mathrm u}(\gamma)
    &=\widehat{\mathbb E}\!\left[\log\frac{r_{\gamma,\pi}(w\mid Y_\gamma)}{\pi(w)}\right].
\end{aligned}
\label{eq:embedding_schedule_rates}
\end{equation}
The conditional curve measures identification with local context, using the clean neighbor as an offline proxy for context, while the unigram curve retains the context-free identification tail.

The endpoint $\gamma_{\mathrm{end}}$ is the smallest channel level at which the unigram posterior identifies the token nearly perfectly, with top-1 error and entropy relative to the prior both at most $10^{-3}$; for the default embedding, $t_{\mathrm{end}}=\gamma_{\mathrm{end}}/(1+\gamma_{\mathrm{end}})\approx0.943$. We evaluate both gain curves on a grid $0=\gamma_0<\gamma_1<\cdots<\gamma_J=\gamma_{\mathrm{end}}$ with logarithmically spaced positive levels, rescale each to increase monotonically from $0$ to $1$ (using isotonic regression), and write $\mathbf r_j=(r_{\mathrm c,j},r_{\mathrm u,j})$ for the rescaled pair at $\gamma_j$. Their normalized cumulative arc length defines
\begin{equation}
    \begin{aligned}
    \Delta a_j&=\norm{\mathbf r_j-\mathbf r_{j-1}}_2,
    &u_j&=\frac{\sum_{k=1}^{j}\Delta a_k}{\sum_{k=1}^{J}\Delta a_k},\quad u_0=0,\\
    t&=\frac{\gamma(U)}{1+\gamma(U)},
    &U&\sim\mathcal U(0,1),
    \end{aligned}
    \label{eq:embedding_training_schedule}
\end{equation}
where $\gamma(u)$ linearly interpolates the inverse map from $u_j$ to $\gamma_j$. The schedule is computed once from corpus statistics and the fixed embedding, without evaluating the flow network. We stratify $U$ across the global training batch and apply no time-dependent loss weight.

\subsection{Model Architecture and Training}

\paragraph{Model Architecture.}
We parameterize the neural residual in \dflow with a bidirectional diffusion Transformer on sequences of 1,024 tokens, using the fixed 32-dimensional embeddings of Table~\ref{tab:embedding_configs}. At each position, we concatenate the noisy embedding with a 32-dimensional self-conditioning estimate of the clean embedding and project the result to the Transformer width. Each block combines multi-head self-attention with rotary positional embeddings (RoPE)~\citep{su2024roformer}, query--key RMS normalization, a sigmoid gate on the attention output~\citep{qiu2025gated}, and a SwiGLU feed-forward network. Both sublayers use RMS normalization with time-dependent affine modulation and residual gates, driven by a sinusoidal time embedding followed by a two-layer MLP, and dropout $0.1$ on the residual branches. A final time-modulated RMS normalization and linear projection produce vocabulary logits, to which we add the analytic singular logits with the unigram reference distribution, except in the ablation that removes this term. Table~\ref{tab:model_sizes} summarizes the configurations. Small, Medium, and Large match the hidden widths and depths of ELF-B, ELF-M, and ELF-L~\citep{elf2026}, which we accordingly call ELF-Small, ELF-Medium, and ELF-Large. We implement attention with FlashAttention~\citep{dao2022flashattention,dao2023flashattention2}; for compatibility with its backward kernels, Medium and Large use 11 and 20 heads instead of ELF's 16, while Small keeps ELF's 12.

\begin{table}[htbp]
    \centering
    \caption{\textbf{Architecture configurations of \dflow.} Parameter counts include all trainable projections, attention gates, and time-conditioning layers, but exclude the fixed token embeddings. FFN width denotes the intermediate dimension of the SwiGLU network.}
    \label{tab:model_sizes}
    \begin{tabular}{lrrrrr}
        \toprule
        Model & Parameters (M) & Layers & Hidden width & Attention heads & FFN width \\
        \midrule
        Small       & 138     & 12 & 768   & 12 & 2,048 \\
        Medium      & 421     & 24 & 1,056 & 11 & 2,816 \\
        Large       & 784     & 32 & 1,280 & 20 & 3,456 \\
        \bottomrule
    \end{tabular}
\end{table}

\paragraph{Model Training.}
\label{app:model_training}
All reported models use a global batch of 1,024 sequences with the preprocessing of Appendix~\ref{app:embedding_details}. We minimize token-averaged cross-entropy, drawing interpolation times by stratified inverse-CDF sampling from the embedding's precomputed training-time distribution on $[0,t_{\mathrm{end}}]$, with no time-dependent loss weight. For self-conditioning, a forward pass without gradient tracking estimates the clean embedding at the same noisy input and time; independently for each sequence, we keep this estimate with probability $0.5$ and otherwise replace it with zero. We optimize hidden, time-embedding, and modulation matrices with Muon (Nesterov momentum $0.95$, weight decay $0.02$), and the input and output projections, normalization gains, and biases with AdamW ($\beta_1=0.9$, $\beta_2=0.99$, no weight decay). Both use a base learning rate of $3\times10^{-4}$ with 2,500 linear warmup steps followed by a constant schedule; Muon matches its update RMS to AdamW. Training uses bfloat16 mixed precision, gradient clipping at norm $1.0$, and an EMA with decay $0.9999$ for evaluation.

\paragraph{Reflow Training.}
\label{app:reflow_training}
Couplings are generated from the EMA weights of $\theta_k$ with 256-NFE Euler sampling, at guidance scale $\omega=1.3$ in the first round and without guidance ($\omega=1$) in the second, giving about 0.5M noise--sequence couplings per round. Each round fine-tunes for 4,000 steps with batch size 512 and learning rate $10^{-4}$. Eq.~\eqref{eq:reflow_rounds} uses $\lambda=0.7$ in the first round and $\lambda=0.55$ in the second.

\paragraph{Compute Accounting.}
A training run of the default Small configuration takes approximately 300 NVIDIA GH200 GPU-hours.

\paragraph{Numerical Precision.}
The Transformer residual stream and vocabulary projection run in bfloat16, while the flow state, analytic singular logits, and posterior embedding expectations stay in FP32. Network logits are promoted to FP32 before the analytic term is added, and the softmax and probability-weighted embedding sum are computed in FP32. This limits rounding of token-logit differences as the singular term grows near the endpoint and avoids storing small integration increments in bfloat16. Sampling updates, the self-conditioning carry, and the momentum-guidance velocity average also remain in FP32. Grid nodes and interval coefficients are computed in FP64.

\paragraph{Memory-Efficient Vocabulary Operations.}
We limit the memory of vocabulary-sized tensors by processing token positions in chunks. During training, each chunk of hidden features is projected to vocabulary logits, augmented with the analytic term, and passed to a fused cross-entropy kernel. A custom autograd operation computes each chunk's hidden-feature and output-head gradients immediately and saves them for the outer backward pass, so training never materializes the full logit tensor. The self-conditioning pass likewise computes embedding expectations chunk by chunk. During sampling, the network produces the full bfloat16 logit tensor, but singular-logit addition, softmax, and the embedding expectation are performed in FP32 chunks, and final decoding applies a chunkwise argmax.

\subsection{Distilled Self-Conditioning Guidance}
\label{app:sc_training_guidance}

We write $l_\theta(X_t,t;\mC,\omega)=\log r_t(\cdot\mid X_t)+f_\theta(\cdot;X_t,t;\mC,\omega)$ for the full logits, including the analytic singular term, of a network that also receives the guidance scale $\omega$. Throughout this subsection, $w$ denotes the clean target token.

\paragraph{Explicit Self-Conditioning Guidance.}
A self-conditioned model defines two posteriors at each noisy state: a cold posterior with $\mC=\mathbf 0$ and a warm posterior conditioned on the carried clean estimate. Contrasting them in logit space gives
\begin{equation}
    l^{\mathrm g}=l(\mathbf 0)+\omega\bigl(l(\mC)-l(\mathbf 0)\bigr),
    \qquad
    p^{\mathrm g}(\cdot)\propto p(\cdot\mid\mathbf 0)^{1-\omega}\,p(\cdot\mid\mC)^{\omega},
    \label{eq:sc_cfg_explicit}
\end{equation}
where $\omega=1$ recovers unguided self-conditioning. The singular logits are identical in both passes, so guidance acts only on the network residual. The clean estimate $\widehat X_1=\sum_wE(w)\,p^{\mathrm g}(w)$ determines the velocity and is carried as the next $\mC$. Applied explicitly, Eq.~\eqref{eq:sc_cfg_explicit} requires a cold and a warm evaluation at every step after the first, nearly doubling the sampling cost.

\paragraph{Scale Conditioning.}
To obtain guided predictions from a single evaluation, the network receives $\omega$ as an input. For each training sequence, we draw $\omega$ log-uniformly on $[1.5,6]$, the same distribution as in ELF. The scale is encoded by a sinusoidal embedding and a two-layer MLP, mirroring the time embedding, and added to the time embedding that drives the adaptive normalization and residual gates. The final MLP layer is zero-initialized, so the network initially ignores $\omega$.

\paragraph{Training Objective.}
At each training step, we evaluate the online network twice without gradient tracking at $(X_t,t,\omega)$, with dropout disabled. The cold pass gives $l^{\mathrm u}=l_\theta(X_t,t;\mathbf 0,\omega)$ and the clean estimate $\widehat X_1^{\mathrm u}$; the warm pass, conditioned on $\widehat X_1^{\mathrm u}$, gives $l^{\mathrm c}=l_\theta(X_t,t;\widehat X_1^{\mathrm u},\omega)$. With a per-sequence mask $k\sim\mathrm{Bernoulli}(1/2)$ shared across positions, the gradient-carrying pass uses $\mC=k\,\widehat X_1^{\mathrm u}$, and we minimize the shifted-logit CE objective in Eq.~\eqref{eq:sc_training_guidance}.
The singular logits cancel in $\Delta$, and gradients flow only through the unshifted logits $l_\theta(X_t,t;\mC,\omega)$. Sequences with $k=0$ therefore receive the ordinary CE objective, and sequences with $k=1$ receive CE on shifted logits, so the loss remains a single token-averaged cross-entropy.

\paragraph{Stationary Point.}
Let $l^{*}$ denote the log-posterior that CE fits for the self-conditioned input without the shift. For $k=1$, CE drives $l_\theta(\cdot;\mC,\omega)-\Delta$ toward $l^{*}$ up to an additive constant. At a stationary point where the no-gradient warm pass agrees with the gradient-carrying pass, $l^{\mathrm c}=l_\theta(\cdot;\mC,\omega)$, so
\begin{equation*}
    \tfrac{1}{\omega}\,l_\theta(\cdot;\mC,\omega)+\bigl(1-\tfrac{1}{\omega}\bigr)\,l^{\mathrm u}=l^{*}
    \quad\Longrightarrow\quad
    l_\theta(\cdot;\mC,\omega)=l^{\mathrm u}+\omega\bigl(l^{*}-l^{\mathrm u}\bigr),
\end{equation*}
again up to an additive constant. A single warm evaluation at scale $\omega$ thus reproduces the guidance of Eq.~\eqref{eq:sc_cfg_explicit}.

\paragraph{Relation to ELF.}
ELF~\citep{elf2026} uses the same construction, conditioning the network on a per-example guidance scale and building the target from its own stop-gradient cold and warm predictions, but applies the shift to an MSE velocity target, $\vv+(1-1/\omega)\,\operatorname{sg}[\vv_\theta(\mC)-\vv_\theta(\mathbf 0)]$. Its guidance therefore extrapolates the clean-embedding estimate linearly, whereas ours extrapolates the token logits as in Eq.~\eqref{eq:sc_cfg_explicit} and takes the clean estimate as the mean embedding under the guided posterior $p^{\mathrm g}$; the two generally differ. ELF also trains a separate unguided CE decoding branch, while our guidance enters the only training objective.

\paragraph{Training and Inference.}
Models with distilled self-conditioning guidance are trained from scratch with the recipe of Appendix~\ref{app:model_training}. Relative to plain self-conditioning, each training step needs one additional forward pass without gradient tracking, since the cold pass also supplies the self-conditioning estimate. At inference, we feed a fixed $\omega$ to every evaluation, including the first and the final decoding evaluation, use one function evaluation per step, and carry the guided clean estimate as the self-conditioning input.

\subsection{Evaluation and Sampling}

\paragraph{Evaluation Protocol.}
For each plotted point, we generate 1,024 unconditional sequences of 1,024 tokens from the EMA parameters of the final checkpoint. We report generative perplexity and mean per-sequence unigram entropy. For generative perplexity, we decode the samples to text, retokenize them with the GPT-2 Large evaluator's tokenizer, truncate to at most 1,024 tokens, and exponentiate the mean next-token negative log-likelihood over all scored tokens, which include non-EOS targets and the first EOS in each sequence. For diversity, we compute the unigram entropy of each generated sequence in nats and average across samples.

\paragraph{Inference Settings.}
\label{app:b_sampling_schedule}
We sample with a nonuniform time grid and a time-dependent SDE noise level, tuned once on the default Small model at 64 NFE and shared across model sizes; Euler sampling uses the same grid. In the sampling-budget ablation, only the number of grid points changes. Self-conditioning starts from zero and carries the previous clean estimate between steps, so each step requires one function evaluation, and the last evaluation decodes each position by argmax. Momentum guidance~\citep{liao2026momentum} extrapolates the current velocity away from an exponential moving average of previous velocities; we fix its base decay to $\beta=0.8$ and sweep the guidance strength $\alpha\in\{0,0.1,0.2,0.3,0.5,0.7\}$, where $\alpha=0$ is the unguided baseline.

\paragraph{Finite-Time Token Decoding.}
Under the assumptions of Appendix~\ref{app:endpoint_convergence}, singular logits ensure one-hot commitment as $t\uparrow1$, but the sampler stops before $t=1$. It uses $N$ evaluation times $t_0<t_1<\cdots<t_{N-1}$; write $Z_i\in\R^{d\times L}$ for the numerical state at $t_i$ and $\mC_i$ for its self-conditioning input. The final evaluation is at $t_{N-1}<1$, and we decode each position by
\[
\widehat w^{(\ell)}=\arg\max_{w\in\mathcal V}
p_\theta^{(\ell)}(w\mid Z_{N-1},t_{N-1};\mC_{N-1}).
\]

\paragraph{Finite-Step SDE Update.}
\label{app:sde_discretization}
The clean prediction gives the Gaussian score $\boldsymbol{s}_\theta=(t\widehat{X}_1-Z_t)/(1-t)^2$, with arguments omitted. We sample using the score-corrected SDE~\citep{ma2024sit}
\begin{equation}
    \mathrm dZ_t
    = \left[\vv_\theta(Z_t,t)
      +\sigma(1-t)^2\boldsymbol{s}_\theta(Z_t,t)\right]\mathrm dt
      +\sqrt{2\sigma}\,(1-t)\,\mathrm d\mW_t,
    \qquad \sigma\geq0,
    \label{eq:tokenflow_sde}
\end{equation}
where $\mW_t$ is standard Brownian motion and $\sigma$ controls stochasticity; $\sigma=0$ recovers the flow ODE.
We discretize Eq.~\eqref{eq:tokenflow_sde} by refreshing the estimated Gaussian source component. Initialize $Z_0\sim q_0$ and $\mC_0=\mathbf0$. For $i=0,\ldots,N-2$, let $h_i=t_{i+1}-t_i$, $b_i=1-t_i$, $\widehat{X}_{1,i}=\widehat{X}_1(Z_i,t_i;\mC_i)$, and $\widehat{\boldsymbol\epsilon}_i=(Z_i-t_i\widehat{X}_{1,i})/b_i$, and let $\boldsymbol\xi_i\in\R^{d\times L}$ be fresh standard Gaussian noise. The update is
\begin{equation}
\begin{aligned}
    \delta_i &= \min\{\sigma h_i b_i,\,b_i\}, \\
    \widetilde{Z}_i &= Z_i-\delta_i\widehat{\boldsymbol\epsilon}_i
      +\sqrt{b_i^2-(b_i-\delta_i)^2}\,\boldsymbol\xi_i, \\
    Z_{i+1} &= \widetilde{Z}_i
      +\frac{h_i}{b_i}\bigl(\widehat{X}_{1,i}-\widetilde{Z}_i\bigr).
\end{aligned}
\label{eq:sde_discrete}
\end{equation}
The first update replaces part of the estimated source noise with fresh noise, and the second moves toward the same clean prediction without another network call. We set $\mC_{i+1}=\widehat{X}_{1,i}$. As $h_i\to0$, the drift and diffusion coefficients converge to those of Eq.~\eqref{eq:tokenflow_sde}, and $\sigma=0$ gives Euler integration of the flow ODE. The grid has $N$ evaluations and $N-1$ state updates; the last evaluation performs token decoding.

When momentum guidance is enabled, we compute $\vv_i=(\widehat{X}_{1,i}-Z_i)/b_i$ and form $\vv_i^{\mathrm g}=(1+\alpha)\vv_i-\alpha\vm_i$, where the velocity average $\vm_i$ is initialized by $\vm_0=\vv_0$ and updated as $\vm_{i+1}=(1-\beta^{Nh_i})\vv_i+\beta^{Nh_i}\vm_i$. We replace $\widehat{X}_{1,i}$ in Eq.~\eqref{eq:sde_discrete}, including in $\widehat{\boldsymbol\epsilon}_i$, by $Z_i+b_i\vv_i^{\mathrm g}$, while self-conditioning carries the unguided prediction $\widehat{X}_{1,i}$.

\subsection{Ablation Settings}
\label{app:ablation_details}

\paragraph{Singular Logits and Embeddings.}
The models in Figure~\ref{fig:ablations}(a) and (b) are trained for 80,000 steps without distilled self-conditioning guidance and evaluated at 64 NFE with the momentum-guidance sweep. They use a logit-normal time grid (location $-0.8$, scale $1.2$) on $[0,t_{\mathrm{end}}]$ with a constant SDE noise scale of $4$; Euler sampling uses the same grid. The Gaussian and one-hot controls are specified in Appendix~\ref{app:embedding_details}.

\paragraph{Training Objective.}
The MSE variant keeps the network, its logit output, and the clean estimate $\widehat X_1=\sum_wE(w)p_\theta(w\mid X_t,t)$ of the CE model, and trains the velocity $\vv_\theta=\tfrac{1}{1-t}(\widehat X_1-X_t)$ with the RF loss of Section~\ref{sec:rectified_flow}, averaged over positions and embedding coordinates. Interpolation times follow the same training-time distribution, with no additional weight. For distilled self-conditioning guidance, we reuse the two no-gradient passes of Eq.~\eqref{eq:sc_training_guidance} but, as in ELF, shift the velocity target by $(1-1/\omega)\,k\operatorname{sg}[\vv_\theta(\widehat X_1^{\mathrm u})-\vv_\theta(\mathbf 0)]$ rather than the logits. Both models are evaluated at 80,000 training steps with the same sampler.

\paragraph{Flow Embedding Dimension.}
Each $d$-dimensional embedding follows the construction of Appendix~\ref{app:embedding_details} with $d$ retained singular directions, and receives its own training-time distribution and endpoint $t_{\mathrm{end}}$, which decreases from $0.969$ at $d=16$ to $0.731$ at $d=256$.

\begin{table}[h]
    \centering
    \small
    \caption{\textbf{Flow embedding dimension sweep for \dflow-Small.} Selected settings from the sweep in Figure~\ref{fig:ablations}(d); SDE sampling at 64 NFE. For each $d$, the first row gives the guidance scale $\omega$ and the second the Gen.\ PPL / entropy.}
    \label{tab:flow_dim_sweep}
    \begin{tabular}{llccccc}
        \toprule
        $d$ & & \multicolumn{5}{c}{Gen.\ PPL / entropy} \\
        \midrule
        16  & $\omega$ & 1.7 & 1.8 & 1.9 & 2.0 & 2.1 \\
            & & 46.5 / 5.32 & 43.0 / 5.31 & 39.7 / 5.30 & 36.7 / 5.30 & 33.9 / 5.29 \\
        \addlinespace
        32  & $\omega$ & 1.7 & 1.8 & 1.9 & 2.0 & 2.1 \\
            & & 24.1 / 5.45 & 23.6 / 5.45 & 19.1 / 5.37 & 14.9 / 5.22 & 11.6 / 5.07 \\
        \addlinespace
        64  & $\omega$ & 2.2 & 2.4 & 2.6 & 2.8 & 3.0 \\
            & & 25.4 / 5.51 & 23.7 / 5.48 & 22.0 / 5.43 & 20.4 / 5.38 & 19.3 / 5.32 \\
        \addlinespace
        128 & $\omega$ & 2.0 & 2.2 & 2.4 & 2.6 & 2.8 \\
            & & 37.6 / 5.57 & 35.0 / 5.55 & 32.2 / 5.52 & 29.7 / 5.48 & 27.3 / 5.42 \\
        \addlinespace
        256 & $\omega$ & 1.8 & 2.0 & 2.2 & 2.4 & 2.6 \\
            & & 56.6 / 5.61 & 52.1 / 5.61 & 48.2 / 5.59 & 44.5 / 5.59 & 42.1 / 5.58 \\
        \bottomrule
    \end{tabular}
\end{table}

\paragraph{Reflow Rounds.}
Figure~\ref{fig:reflow_radar} compares the base model at 32 NFE with the model after two reflow rounds at 16 NFE, each evaluated with 1,024 samples. With half the NFE, the reflowed model reaches lower Gen.\ PPL at every model size ($26.4$ versus $35.2$ for Small, $22.2$ versus $24.5$ for Medium, and $19.4$ versus $22.6$ for Large), with entropy within $0.04$ nats of the base model.

\begin{figure}[htbp]
    \centering
    \includegraphics[width=\linewidth]{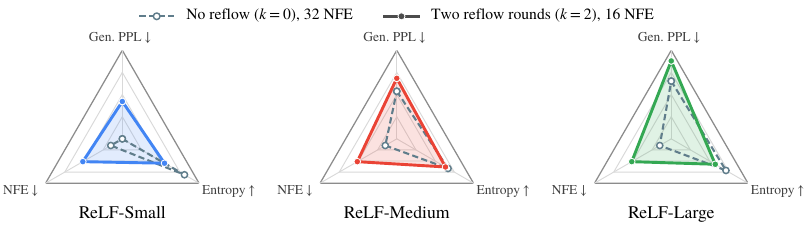}
    \caption{\textbf{Two reflow rounds at half the NFE.} Each panel compares the base model at 32 NFE (dashed) with the model after two reflow rounds at 16 NFE (solid), with vertices Gen.\ PPL, entropy, and NFE; outward is better on every axis. Gen.\ PPL is normalized over the six plotted runs, with the highest value at the center, and entropy and NFE over the reflow-round sweep ($k=0$--$3$ at 8, 16, and 32 NFE for all sizes); Gen.\ PPL and NFE use a log scale.}
    \label{fig:reflow_radar}
\end{figure}

\subsection{Quality--Diversity Frontiers and Autoregressive Baselines}
\label{app:training_guidance_pareto}

Table~\ref{tab:training_guidance_pareto} reports the quality--diversity frontiers of the Small, Medium, and Large \dflow models in Figure~\ref{fig:system_comparison}(a), all trained with distilled self-conditioning guidance. We evaluate EMA checkpoints using SDE sampling with noise scale $24$ and 64 NFE, including decoding, with 1,024 sequences per configuration. For each model size, we keep the non-dominated points: a point is dominated if another setting achieves at least as high an entropy and at most as high a Gen.\ PPL, with strict improvement in at least one. Because \dflow-Large varies little across guidance scales, only two of its settings are non-dominated; Table~\ref{tab:l_full_sweep} lists all eight.

The table also lists the six temperature settings of each AR baseline shown in Figure~\ref{fig:system_comparison}(a). AR-Small, AR-Medium, and AR-Large are causal Transformers with the hidden widths and depths of the corresponding \dflow models (Table~\ref{tab:model_sizes}) and have 124M, 374M, and 699M parameters. Although cosine learning-rate decay with the final checkpoint and no EMA is a more common recipe for AR training, our comparison found that a constant learning rate with EMA yields better generation performance. We therefore adopt the latter and evaluate EMA weights after 80,000, 200,000, and 200,000 training steps for Small, Medium, and Large, respectively, generating each length-1,024 sequence ancestrally after BOS and scoring it with the same metrics.

\begin{table}[htbp]
    \centering
    \small
    \caption{\textbf{OpenWebText Pareto results and autoregressive baselines.} \dflow uses 64 NFE; its frontiers are computed separately within each model size over $\omega\in\{1.5,1.6,1.7,1.8,1.9,2.0,2.1,2.3\}$. AR baselines use ancestral sampling over the temperatures shown in Figure~\ref{fig:system_comparison}(a). Entropy is mean per-sequence unigram entropy; Gen.\ PPL is evaluated by GPT-2 Large.}
    \label{tab:training_guidance_pareto}
    \begin{tabular}{lrrr}
        \toprule
        Model & Guidance $\omega$ & Entropy (nats) $\uparrow$ & Gen.\ PPL $\downarrow$ \\
        \midrule
        \dflow-Small & 1.7 & 5.45 & 24.1 \\
          & 1.8 & 5.45 & 23.6 \\
          & 1.9 & 5.37 & 19.1 \\
          & 2.0 & 5.22 & 14.9 \\
          & 2.1 & 5.07 & 11.6 \\
          & 2.3 & 4.94 & 9.8 \\
        \midrule
        \dflow-Medium & 1.6 & 5.48 & 19.3 \\
          & 1.7 & 5.48 & 18.6 \\
          & 1.8 & 5.47 & 17.8 \\
          & 1.9 & 5.45 & 16.8 \\
          & 2.0 & 5.41 & 15.6 \\
          & 2.1 & 5.36 & 14.3 \\
          & 2.3 & 5.31 & 13.3 \\
        \midrule
        \dflow-Large & 2.0 & 5.51 & 17.6 \\
          & 1.9 & 5.51 & 17.3 \\
        \midrule
        Model & Temperature $T$ & Entropy (nats) $\uparrow$ & Gen.\ PPL $\downarrow$ \\
        \midrule
        AR-Small & 1.00 & 5.54 & 38.1 \\
             & 0.98 & 5.46 & 30.4 \\
             & 0.96 & 5.40 & 25.0 \\
             & 0.94 & 5.32 & 20.9 \\
             & 0.92 & 5.25 & 17.4 \\
             & 0.90 & 5.20 & 15.0 \\
        \midrule
        AR-Medium & 1.00 & 5.54 & 27.9 \\
             & 0.98 & 5.46 & 22.3 \\
             & 0.96 & 5.41 & 18.5 \\
             & 0.94 & 5.36 & 15.7 \\
             & 0.92 & 5.29 & 13.4 \\
             & 0.90 & 5.24 & 11.7 \\
        \midrule
        AR-Large & 1.00 & 5.53 & 24.3 \\
             & 0.98 & 5.47 & 20.2 \\
             & 0.96 & 5.41 & 16.8 \\
             & 0.94 & 5.36 & 14.3 \\
             & 0.92 & 5.31 & 12.4 \\
             & 0.90 & 5.26 & 11.0 \\
        \bottomrule
    \end{tabular}
\end{table}

\begin{table}[htbp]
    \centering
    \small
    \caption{\textbf{Full guidance-scale sweep for \dflow-Large.} All eight settings used for the \dflow-Large frontier in Table~\ref{tab:training_guidance_pareto}, with the same evaluation protocol. $^\dagger$Non-dominated settings listed in Table~\ref{tab:training_guidance_pareto}.}
    \label{tab:l_full_sweep}
    \begin{tabular}{rrr}
        \toprule
        Guidance $\omega$ & Entropy (nats) $\uparrow$ & Gen.\ PPL $\downarrow$ \\
        \midrule
        1.5 & 5.49 & 18.0 \\
        1.6 & 5.50 & 17.7 \\
        1.7 & 5.50 & 17.4 \\
        1.8 & 5.50 & 17.4 \\
        1.9$^\dagger$ & 5.51 & 17.3 \\
        2.0$^\dagger$ & 5.51 & 17.6 \\
        2.1 & 5.50 & 17.4 \\
        2.3 & 5.48 & 17.7 \\
        \bottomrule
    \end{tabular}
\end{table}

\paragraph{Few-Step Comparison.}
The distilled ELF-Small baseline in Table~\ref{tab:elf_b_fewstep} is ELF+PD, the progressively distilled ELF-Small of \citet{elf2026}. Its reported few-step settings reach low Gen.\ PPL only at entropy $5.04$--$5.16$, so we compare at matched entropy instead. For each model and NFE, we sweep the solver and guidance scale (the sampling temperature for AR-Small) and report the lowest Gen.\ PPL among settings with entropy at least $5.40$, using 1,024 sequences per setting. FMLM~\citep{lee2026flow} reports a single setting per NFE, so we list its reported OpenWebText results and entropy directly. The \dflow settings lie at entropy $5.41$--$5.45$. Table~\ref{tab:elf_b_fewstep} lists all values.

\begin{table}[htbp]
    \centering
    \small
    \caption{\textbf{Few-step generation at matched entropy.} \dflow rows use two reflow rounds. Except for FMLM, each entry is the lowest Gen.\ PPL found with entropy at least $5.40$. FMLM values are as reported by \citet{lee2026flow}.}
    \label{tab:elf_b_fewstep}
    \begin{tabular}{lrrr}
        \toprule
        Model & NFE & Entropy (nats) $\uparrow$ & Gen.\ PPL $\downarrow$ \\
        \midrule
        \dflow-Small & 4  & 5.41 & 116.4 \\
                    & 8  & 5.42 & 51.2 \\
                    & 16 & 5.41 & 26.4 \\
                    & 32 & 5.43 & 22.9 \\
        \midrule
        \dflow-Medium & 4  & 5.42 & 113.7 \\
                    & 8  & 5.42 & 42.7 \\
                    & 16 & 5.42 & 22.2 \\
                    & 32 & 5.45 & 18.4 \\
        \midrule
        \dflow-Large & 4  & 5.43 & 117.5 \\
                    & 8  & 5.43 & 40.7 \\
                    & 16 & 5.41 & 19.4 \\
                    & 32 & 5.43 & 16.2 \\
        \midrule
        ELF+PD & 4  & 5.40 & 68.7 \\
                        & 8  & 5.42 & 66.4 \\
                        & 16 & 5.41 & 67.7 \\
                        & 32 & 5.41 & 66.3 \\
        \midrule
        FMLM & 4  & 5.26 & 111.3 \\
             & 8  & 5.36 & 86.5 \\
             & 16 & 5.29 & 63.6 \\
             & 32 & 5.25 & 45.1 \\
        \midrule
        AR-Small & 1,023 & 5.43 & 27.7 \\
        \bottomrule
    \end{tabular}
\end{table}

\subsection{Additional Generation Results}
\label{app:additional_results}

\paragraph{Quality--Diversity Trade-offs by Model Size.}
Figure~\ref{fig:pareto_by_size} separates the comparison in Figure~\ref{fig:system_comparison}(a) by model size, showing the guidance sweeps alongside AR temperature sweeps and the reported ELF results. The evaluation protocol and tabulated values are given in Appendix~\ref{app:training_guidance_pareto}.

\begin{figure}[htbp]
    \centering
    \includegraphics[width=\textwidth]{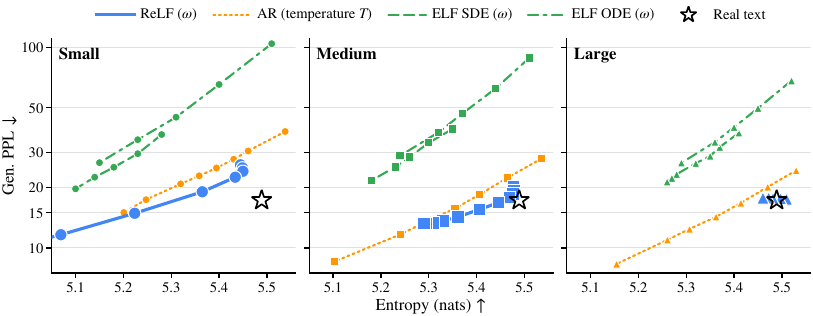}
    \caption{\textbf{Quality--diversity trade-offs by model size.} \dflow over guidance scale $\omega\in[1.5,2.5]$ and ELF~\citep{elf2026} at 64 NFE, compared with AR temperature sweeps. Higher guidance or lower temperature generally reduces entropy. \dflow uses the SDE sampler; ELF includes both SDE and ODE results as reported in the original study. Stars mark real OpenWebText. Results use 1,024 sequences per point for \dflow and AR, and 1,000 for ELF.}
    \label{fig:pareto_by_size}
\end{figure}

\paragraph{Guidance and Temperature.}
Figure~\ref{fig:radar_two_dials} extends the comparison in Figure~\ref{fig:system_comparison}(a) to lexical and semantic diversity statistics. The guidance scale of \dflow and the sampling temperature of AR models move the semantic statistics in opposite directions: raising $\omega$ lowers the Vendi score and recall, whereas lowering $T$ raises both.

\begin{figure}[htbp]
    \centering
    \includegraphics[width=\textwidth]{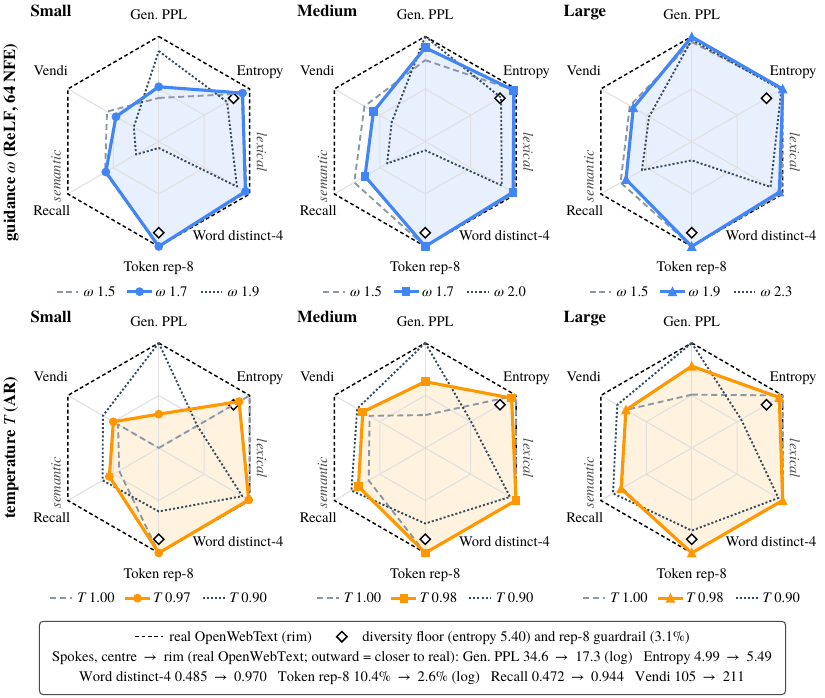}
    \caption{\textbf{Guidance and temperature across diversity statistics.} Top: \dflow at 64 NFE with the SDE sampler over guidance scale $\omega$; bottom: AR over temperature $T$. Solid polygons are the settings compared in Section~\ref{sec:headline_benchmark}; dashed and dotted polygons are a weaker and a stronger setting of the same control. On every spoke, real OpenWebText lies on the rim and outward is closer to real text; a value beyond the rim or the center is drawn on it. Vendi score and recall are computed on Qwen3-Embedding-4B embeddings of 1,024 generated documents. Distinct-4 is the fraction of distinct word 4-grams, and rep-8 is the share of a sequence's token 8-grams that recur within it. Diamonds mark the rep-8 guardrail ($3.1\%$, $1.2\times$ real text).}
    \label{fig:radar_two_dials}
\end{figure}

\paragraph{Reflow Rounds and Sampling Budget.}
Figure~\ref{fig:rounds_slope} complements the main-text sampling-budget comparison in Figure~\ref{fig:fewstep_ladders} by comparing successive reflow rounds at fixed NFE. Together, they show that the first two rounds provide most of the improvement in few-step generation. Table~\ref{tab:elf_b_fewstep} gives the comparison with ELF+PD and FMLM.

\begin{figure}[htbp]
    \centering
    \includegraphics[width=\textwidth]{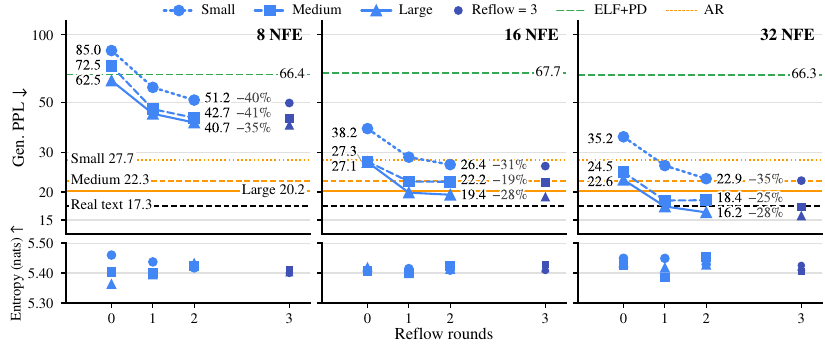}
    \caption{\textbf{Reflow rounds at fixed NFE.} Gen.\ PPL (top) and entropy (bottom) for \dflow-Small, \dflow-Medium, and \dflow-Large at 8, 16, and 32 NFE. Round 0 denotes the model before reflow; indigo markers show round 3. Annotations give Gen.\ PPL before and after two rounds and the percentage reduction. \dflow and AR results use 1,024 sequences per point. AR reference temperatures are $0.97$ for Small and $0.98$ for Medium and Large, matched to the 64-NFE \dflow entropy. Reference lines show same-size AR models, ELF+PD~\citep{elf2026} at the same NFE, and real OpenWebText.}
    \label{fig:rounds_slope}
\end{figure}

\paragraph{Latency and Throughput.}
Figure~\ref{fig:latency_throughput} expands the speedup comparison in Figure~\ref{fig:system_comparison}(b) into absolute latency and throughput. We compare generation of 1,024-token sequences at batch sizes 1, 4, 16, and 64 on one GPU, with KV caching for AR and CUDA graphs for both methods.

\begin{figure}[htbp]
    \centering
    \includegraphics[width=\textwidth]{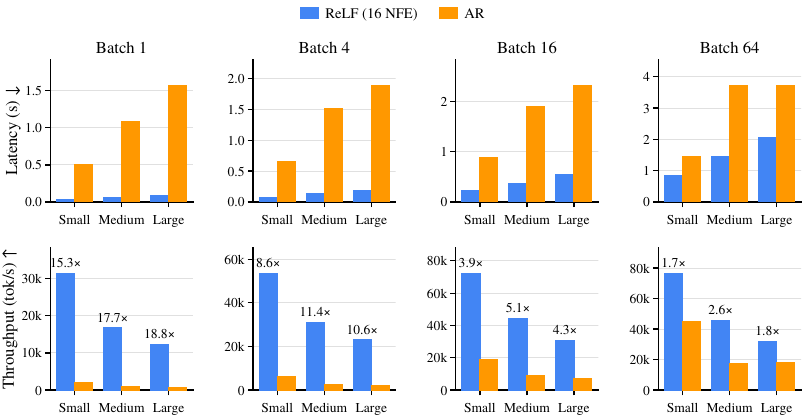}
    \caption{\textbf{Latency and throughput at matched quality.} \dflow at 16 NFE and AR on one GPU, using KV caching for AR and CUDA graphs for both methods. After two reflow rounds, \dflow achieves lower Gen.\ PPL than same-size AR models. Top: time per batch of 1,024-token sequences. Bottom: throughput in tokens per second. Labels above \dflow bars indicate speedup over AR. Batch sizes are 1, 4, 16, and 64, showing how the relative speed changes as more sequences are generated together.}
    \label{fig:latency_throughput}
\end{figure}

\clearpage
\providecommand{\eostoken}{\textless{}\textbar{}endoftext\textbar{}\textgreater{}}

\section{Qualitative Examples}
\label{app:qualitative_examples}

We show unconditional samples from \dflow-Small, \dflow-Medium, and \dflow-Large, all trained with distilled self-conditioning guidance and generated with the SDE sampler at 64 NFE. \dflow-Small and \dflow-Medium use guidance scale $\omega=1.7$, the setting reported in Section~\ref{sec:headline_benchmark}, which attains Gen.\ PPL $24.1$ at entropy $5.45$ for Small and $18.6$ at entropy $5.48$ for Medium over 1,024 samples. \dflow-Large uses $\omega=1.9$, which attains Gen.\ PPL $17.3$ at entropy $5.51$. Each sample is a sequence of 1,024 GPT-2 tokens decoded to text without any post-processing; \texttt{\eostoken} denotes the end-of-sequence token. The header of each sample reports its own Gen.\ PPL and entropy.

\begin{framed}
\noindent\textbf{\dflow-Small, sample 1}\hfill Gen.\ PPL $20.3$\quad Entropy $5.55$
\par\vspace{2pt}\hrule\vspace{6pt}
{\ttfamily\small\raggedright\setlength{\parskip}{0.6em plus 0.3em minus 0.1em}
\eostoken{} funding is a panacea. We've seen an increase in waterfalls on floodway projects over the years, which suggests that we should invest more money in areas where river infrastructure is more viable.

While New Mexico was once inundated with billions from natural rainfall, it now is heavily impacted by flooding. In recent years its reservoirs have fallen more than 100 percent. In relatively climate, there is little reason to believe that the water bills will rise more than 20 percent a year. Increased flood insurance provides a more convenient, easier-{}to-{}use way to reduce the burden of the DWF to the federal government. But it's not a panacea, either. This is what we can use right now. All elements of our infrastructure are aging, including water lines and electrical utilities. We certainly shouldn't make a repeat of new drainage infrastructure. In the absence of a reliable hydropower re drainage system, we're likely going to miss out on mudslower restoration.

The study provides evidence that providing better flood funding and reducing the risk of future flooding makes a more resilient society a realistic long-{}term alternative. In short, the best answer to the flooding crisis is by providing some public infrastructure spending\textrm{\textemdash}subsidized by both federal governments and private investors\textrm{\textemdash}to fund it in the first place.\eostoken{}$\times$ Iraqi security forces raided Christian home of wife of a Islamic State leader

KANIZALAD, Iraq (AP) \textrm{\textemdash} Iraqi security forces raided the Christian home of a family of a Islamic State leader on Sunday in an effort to curb ISIS militants in the northern Iraqi province of Nineveh, including arresting more Christians, a spokesman for the group said.

Explosvers, also known as the Shirab, raided a Christian house in the village of Hawiyah, a community where dozens of Christians lived when the raid took place.

No injuries were immediately reported. Five people were arrested, and two were wounded.

Nineveh's Yazidis have fled to fight with ISIS to the north of the Tigris, which stretches along the porous border between Iraq and Syria. It is one of the largest strongholds for Sunni Christians in the region. Yazidis have been seeking wide cover from Islamic State since it staged deadly attacks on Mosul and the city of Kirkuk in June.

Christians of the Christian faith have pledged to join forces to fight ISIS in the region, state-{}run Anadol Agency reported Wednesday.

Explosvers raided an apartment in Samarija late Sunday, and arrested at least 18 members of the Shirab, an Iraqi security official told the Associated Press.

In Hawija province, 49 members of theShirab were arrested, the official told Iraqi state news agency Dogan on Tuesday. He said at least a dozen were taken into protective custody.

Anadol news agency on Monday reported that 20 Shirab members had been allowed to leave their homes, which remained in Hawija until the night of the raid.

Hassman Kamar, a spokesman for the Christian community of Hawija, said local Christians living with theShirab could face some eviction, especially if the Iraqi government maintains its support.

"In the Nineveh, local Christians used to live with the Shirab, so they now avoid support because they are no longer allowed to leave the area," Kamar said.

Read or Share this story online: https://usat.ly/2k9zqg\eostoken{}A 28-{}year old, dual U.S. citizen has been indicted on two counts of impersonating a Dominican government official, as well as signing a false identification letter.

Rico Sino of the John Morris Community Court was charged in a two-{}count indictment. All charges threaten to defraud the U.S. government.

Originally from the Dominican Republic, Sino got his start in the subject after spending two years with Spanish Venezuela school club Monaxo.

He later asked permission to embark on his three-{}year journey into the United States. Sino's attorney Carlos Hern\'{\i}guez said to NJ Advance Media that he has never seen Sino do anything more illegal than signing a green card on the form of his \$10,000, or peso, bail.

He has been on the wrong side of the issue 'for a very long time,' he said.

State Sen. Francisco Molco said to NJ Advance Media that he still believes Sino's arrest will help the feds go after him.

'We have to see what does. I am just going to see,' he said.

The Homeland Security Department did not respond to a request for comment on the charges.

The Senate recently approved a bill that allows dual residents of the Dominican Republic to apply for citizenship in the United States.

The law covers a Dominican citizen who will not be identified by real name but has a birth card and spousal\eostoken{}\par}
\end{framed}

\begin{framed}
\noindent\textbf{\dflow-Small, sample 2}\hfill Gen.\ PPL $28.3$\quad Entropy $5.49$
\par\vspace{2pt}\hrule\vspace{6pt}
{\ttfamily\small\raggedright\setlength{\parskip}{0.6em plus 0.3em minus 0.1em}
\eostoken{} hot temperatures without the need for a 400-{}square-{}foot greenhouse that will be enough to cover 10,000 homes without electricity or water.

The Trump White House has announced a series of recovery centers to help communities in the nation's southern states overcome their dependence on fossil fuels.

So far, 1,080 units have been built so far and 895 people have been awarded permits. But high temperatures in the region are threatening key water conditions for millions of people without electricity or water.

Recovery Program Manager Bill Holley, executive director of the Union of Concerned Scientists, said the lack of adequate funding raises immediate questions, especially given that the region \textrm{\textendash} about 30 percent of the state \textrm{\textendash} suffers from extreme drought patterns.

Coupled with droughts, he said, "it's a very challenging time for temperatures to rise at high levels, especially when compared to normal temperatures. We're seeing record floods and famines, and that's something we're really concerned about," he said.

Gierra Diversity, an environmental nonprofit that helped scientists launch the Union of Concerned Scientists initiative, noted that 'as we have and do the right thing,' they will "both work together to be able to achieve more meaningful change."

Montley won't be alone. In Tucson, the group received \$25,000 from Communities From All Climate. In Yuma, it will receive \$5,000 from Community Recovery Foundation. Where Santa View is hit hardest by wildfires, it will receive \$30,950 in grants.

In Maricopa County, Gulf Water Recovery Center was awarded \$104,500 by Environmental Frontier Association for helping it meet the needs of 1,300 communities statewide. According to program coordinators for all of those counties, by the end of this year, the recovery centers will provide access to 538 people in all southern counties \textrm{\textendash} the highest number since records began in 2000.

"A recovery center can be a step toward empowering our communities," said Lana Hatterer, program manager for Make Something Clean \textrm{\textemdash}, a nonprofit that looks at how and where new communities can be built. "President Trump's announcement to build these recovery centers fulfishes that optimism and strengthens the urgency to help us address the needs we currently face."

The facility will, in a variety of ways, mirror the established facilities that will replace now. In Oahuopa and Temosa, communities without enough water will be stuck in a bitterly water water for decades to come. In Sonima, communities without sufficient water will face water cuts and many rural communities will be trapped in the cold water for decades to come.

The communities in site will be divided into 10 different manufacturers. The unit will remain operational for years. But Corley said as the facilities become more sophisticated there's growing interest in making how and supplement them for their communities.

Each unit has two basic components:

An internal inverter system that generates power waves that can be controlled by an electrical outlet. A three-{}cooled lighting system can help adjust the switch.

Most of the buildings will have prices that vary from \$1,500 to \$1,050, Holley said. But with each unit, there will be an additional 110-{}square-{}foot greenhouse, the standard form factor for state-{}of-{}the-{}art climate panels.

Many of the centers will also be set up for local philanthropic organizations. Residents will receive grants from foundations affiliated with water communities in Arizona. For example, a grant from the Maricopa County Department of Public Services will be used to fund water education. A grant from the governor's mansion will go toward the Arizona Department of Agriculture.

'When people talk about this, they're talking about a recovery center,' Corley said. 'The center will support people in place to meet vital water and medical needs. It will give local communities renewed hope for their future."

As the oil market continues and demand runs high, so do the numbers of Americans who overdose from prescription opioids.

The National Institute on Drug Abuse estimated that overdoseizations associated with the opioid overdose crisis will increase from 1,080 to 841,080 by 2020, according to a study in Baltimore, Massachusetts. But opioid overdoses in the United States, including heroin overdoses, have climbed sharply and more than 10,300 Americans have died from overdoses since 2014 alone.

DAURA WCCENBERY

NeiComFounder

Carwater Action Foundation\eostoken{}A top official on President Trump's "America First" campaign team had his license revoked after calling him a "bigoted homophobe."

Curtel Zaklow had finished his legal degree from Trump University, but a federal district judge ruled the Chicago graduate had his license revoked when he sued the University of\eostoken{}\par}
\end{framed}

\begin{framed}
\noindent\textbf{\dflow-Medium, sample 1}\hfill Gen.\ PPL $17.0$\quad Entropy $5.33$
\par\vspace{2pt}\hrule\vspace{6pt}
{\ttfamily\small\raggedright\setlength{\parskip}{0.6em plus 0.3em minus 0.1em}
\eostoken{} cannabis research firm. The company was founded in Colorado by entrepreneur David Conler in 2014. Whitler is the Principal Medical Cannabis Advisor for the Colorado Department of Health and Community Services.

You are known for your work in cannabis research, innovation, and marketing. Can you talk a bit about your goals when it comes to cannabis and the firm's strategies to achieve those goals?

Whyte Group uses a multidisciplinary, research-{}driven business model and seeks to create solutions that deviate from three pillars of cannabis, with primary objectives: reducing harmful use and underage advertising, supporting the commercialization of FDA-{}approved products, and breaking down non-{}trivial barriers to entry. The Group understands the importance of the legal cannabis industry and believes the key for achieving success is "the science."

Who are you looking to hire?

In addition to hiring researchers and innovators, the Whyte Group employs lawyers and legal people to advise the unique issues within the cannabis industry. Our leadership team also has extensive experience with education, marketing, and pharmaceutical and fid Taskancy management.

How has the research been conducted with lawmakers across the country? How has it affected both your clients and the politicians themselves?

The Whyte Group has held cannabis policy discussions with legislators in 30 states since the opening of our office in January. These discussions revolve addressed a number of policy-{}related topics such as research, development, regulation, and regulation. Our Group also works with state and local marijuana leaders, helping to shape the discussion about the issue and help policy makers to find solutions to their problems. We hope these discussions broaden the dialogue between policymakers and experts on cannabis policy, and increase legislators' support for raising the legal age of all cannabis products to 21 years old.

When you're a scientist, what is it like to get an epiphany? What was an exciting epiphany you went through in your lab when you made groundbreaking discoveries? When you're an entrepreneur, what's it like to have an exciting epiphany?

The regulatory conversation around cannabis across the country is rapidly changing overnight. While the drug has now been legalized in 23 states, the cannabis industry remains much more active in regulation. Large scale use and distribution of marijuana can still be profited under various criminal agencies, and products that aren't sold as legitimate continue to be prohibited. The Whyte Group will now work hard to ensure that these problems evolve and the industry focuses on business models that will benefit the general consumer and create good-{}paying jobs. We've successfully responded to this shift, and continue to study the issue informally.

You're a small company, with 15 people working on research, development, marketing, sales, product management. What are the firm's goals when it comes to cannabis?

When you're an entrepreneur, cannabis is often one of the first priorities that you think of things that matter to you. That said, almost two years ago, we decided to form the Whyte Group, and we believe we have a significant job ahead of us moving forward. The Whyte Group's first goal is to create a world-{}class cannabis research laboratory. Our second is to utilize a multidisciplinary, research-{}driven business model. We will engage industry stakeholders by creating business models and products that deviate from these three pillars. Creating a business model that is responsible, ethical, and minimally harmful is our third goal, and we seek to ensure cannabis is regulated and regulated the same way alcohol and tobacco. With these "3 pillars" in place, the Group will continue to engage people across the country to help the cannabis industry move forward.

The Whyte Group also believes the federal government should stop regulating the cannabis industry like it did in the tobacco industry. We believe that the government still holds sway over the regulatory process for cannabis, but is finally making some significant progress. The DEA is currently developing a list of pharmaceutical products that will be put on "Schedule A." CBD products like the Group's THC Caze Solution are also slated to be placed on Schedule B, bringing it one step closer to a Schedule A substance. In addition, the Drug Enforcement Administration is currently revising its Schedule Enforcement Controlled Drug list, which is below Schedule A. These policy changes are intended to try and take the stigma off cannabis now that federal regulation of cannabis is over.

Can you give your thoughts on the current situation in the cannabis industry?

We're cautiously optimistic, and that's for one big reason: With a market capitalization of over \$1 trillion, the cannabis industry is the fastest growing industry in the country. It's poised to be not only the gateway to some of the top jobs in this country, but also an innovation engine. Many of those resources are still badly underutilized. We'll continue our efforts to create new, exciting cannabis\eostoken{}\par}
\end{framed}

\begin{framed}
\noindent\textbf{\dflow-Large, sample 1}\hfill Gen.\ PPL $19.2$\quad Entropy $5.51$
\par\vspace{2pt}\hrule\vspace{6pt}
{\ttfamily\small\raggedright\setlength{\parskip}{0.6em plus 0.3em minus 0.1em}
\eostoken{} rural Internet service. This amendment was signed into law with the HEREAM Act in 2009. Arartz is an IEEE Fellow Emeritus of the American Institute of Electrical and Electronics Engineers.

Ginson is a researcher at North Carolina State University, where he's led a number of projects in lifesaving medicine, including developing the world's first portable dialhetic device.

Richard Kamer, a provinticisticist and former University College resident, is an IEEE Fellow Emeritus of the American Institute of Electrical and Electronics Engineers. His research has focused on constructing wired Bypass Wirelessways that provide real-{}speed access to the Internet at non-{}trivial cost to consumers. Levkin is the author of the acclaimed SmartCities, a book outlining building blocks for "smart cities."

What new graduates need to know?

In addition to offering scholarships and mentoring, the Letch Institute hosts conferences and workshops designed to help new graduates build upon their advanced degrees. Our flagship conference is held annually and attracts 1,000+ students from over 100 companies and countries. Our conference were named the 2014 World School Lecture of the Year and Young Leaders Worksation of the Year by Young Scouts America.

The Letch Institute also offers an online developer Development program that prepares students with the skills to become successful Web developers. Our classes are offered twice a year to web-{}agers ages 13 to 25, regardless of high location or income. Each day, students learn the skills and techniques that are required to use the Internet as a resource and create useful applications that enable others to access it. In addition to teaching students these tools and skills, our online training program also offers free Master's and Doctorate Degree programs for students ages 18 to 35 years old.

When I've asked new graduates what it feels like to be a Letch Institute Fellow, the overwhelming response is, "I've just earned your degree, and you're still getting paid!"

What is the Rural Internet Innovation Fund?

The Rural Internet Innovation Fund is a federal program to help rural communities build more reliable Internet infrastructure so everyone in any parts of the country can get Internet for anywhere at home.

More than half of rural Americans already have high-{}speed Internet service connections, and they will be the most widely used if broadband continue to be expanding. The RIEF can promote rural growth by building more reliable Internet infrastructure in rural communities and by funding grants that will help those communities grow and have good-{}paying jobs. That's less waiting for cable TV and more opportunities to pitch your sweet cash pump.

Here's a great YouTube video about this initiative released on Jan. 21, 2014, and for much more information, see the President's FY14 State of the Union Address.

You're the president, CEO, and CEO of American River Code. What do those three positions mean to you?

After working at Silicon University, I decided to launch the Letch Institute, and since then I have been responsible for all of our leadership activities. The Letch Institute's core mission is to be the driving driving force for digital innovation. Our focus is to keep the innovation world open to all backgrounds and all perspectives. While we value intellectual property rights, we believe that human property everywhere belong in the digital realm. Creating a space where we can collaborate, connect, connect, and create is a huge gift, and we have to make sure those currently excluded from the digital world can experience it. Since the "Arab Spring" in 2010, digital rights movements have empowered ordinary people around the world to discuss the critical issues we face.

The Letch Institute also believes that American businesses should contribute to the digital economy like they contribute to the real economy. Contributing to the economy means supporting them by maximizing economic returns and creating wealth for consumers. For more information about our global mission, please visit the Letch Institute website: http://www.letchinstitute.org/.

Part of our nation's economic success is putting people to work to meet our true economy. Access to the Internet is key to economic growth and prosperity. In 2000, only 1 percent of the country had an Internet connection, but by 2012, that increased to 18 percent. We need our private courage to invest to make the Internet as accessible and affordable as possible for generations to come.

Have you done any work in the private sector or the public sector?

I'm an engineer, and I've been programming since 1985. My private sector career began at Ruralron (now Southern Grid, Inc.), a company that specializes in Web storage. I'm currently chief technology officer at Public Electric, which is the largest utility in the state, serving over 1 million customers. Many of those are economically underserved in rural Texas and don't have Internet access. My public sector career began\eostoken{}\par}
\end{framed}

\end{document}